\documentclass{article} 
\usepackage{iclr2025_conference,times}
\iclrfinalcopy  
\usepackage{hyperref}
\usepackage{url}


\usepackage{mathtools}
\usepackage{amssymb}
\usepackage{wrapfig}
\usepackage{booktabs}
\usepackage{xcolor}
\usepackage{colortbl} 
\usepackage{amsthm}
\usepackage{textcomp}   
\usepackage{algorithm}
\usepackage[indLines=false]{algpseudocodex}

\setlength{\fboxsep}{1.5pt}
\usepackage[toc,page,header]{appendix}
\usepackage{minitoc}

\usepackage{tikz}
\newcommand{\tikzexternaldisable}{}
\newcommand{\tikzexternalenable}{}

\newtheorem{theorem}{Theorem}[section]
\newtheorem{corollary}{Corollary}[theorem]
\newtheorem{definition}{Definition}[section]
\newtheorem{assumption}{Assumption}[section]
\newtheorem{proposition}[theorem]{Proposition}
\newtheorem{lemma}[theorem]{Lemma}
\newtheorem{remark}[theorem]{Remark}

\newcommand{\aref}[1]{\hyperref[#1]{App.~\ref*{#1}}}

\newcommand{\ours}{SCDS}
\newcommand{\oos}{OOS} 
\newcommand{\linkcolor}{blue!70}
\definecolor{darkgreen}{HTML}{9AD479}
\definecolor{darkblue}{HTML}{377eb8}
\definecolor{backpropcolor}{HTML}{4C0099}
\definecolor{forwardcolor}{HTML}{990000}
\newcommand*\samethanks[1][\value{footnote}]{\footnotemark[#1]}
\usepackage{soul}   
\title{Contractive Dynamical Imitation Policies for Efficient Out-of-Sample Recovery}

\author{
{\parbox{\linewidth}{
{Amin Abyaneh\textsuperscript{$1$}\thanks{The first two authors collaborated equally.}}, \;
Mahrokh G. Boroujeni\textsuperscript{$2$}\samethanks, \;
Hsiu-Chin Lin\textsuperscript{$1$}, \;
Giancarlo Ferrari-Trecate\textsuperscript{$2$}
}}
\\
\parbox{\linewidth}{
\textsuperscript{$1$} McGill University,
\textsuperscript{$2$} École Polytechnique Fédérale de Lausanne (EPFL)
}
}

\begin{document}

\maketitle
\begin{abstract}

Imitation learning is a data-driven approach to learning policies from expert behavior, but it is prone to unreliable outcomes in out-of-sample (\oos{}) regions. While previous research relying on stable dynamical systems guarantees convergence to a desired state, it often overlooks transient behavior. We propose a framework for learning policies modeled by contractive dynamical systems, ensuring that all policy rollouts converge regardless of perturbations, and in turn, enable efficient \oos{} recovery. By leveraging recurrent equilibrium networks and coupling layers, the policy structure guarantees contractivity for any parameter choice, which facilitates unconstrained optimization. We also provide theoretical upper bounds for worst-case and expected loss to rigorously establish the reliability of our method in deployment. Empirically, we demonstrate substantial \oos{} performance improvements for simulated robotic manipulation and navigation tasks. See { \href{https://sites.google.com/view/contractive-dynamical-policies}{\textcolor{\linkcolor}{sites.google.com/view/contractive-dynamical-policies}}} for our codebase and highlight of the results. 
\let\thefootnote\relax\footnotetext{Corresponding author: \texttt{amin.abyaneh@mail.mcgill.ca}}
\end{abstract}
\newcommand{\dataset}{\mathcal{D}}
\newcommand{\initStateSamples}{\mathcal{S}_0}
\newcommand{\sample}{i}
\newcommand{\demonstration}{m}
\newcommand{\sampleCount}[1]{N_{#1}}
\newcommand{\demonstrationCount}{M}
\newcommand{\sampleSet}[1]{\{1, \cdots, \sampleCount{#1}\}}
\newcommand{\demonstrationSet}{\{1, \cdots, \demonstrationCount\}}

\newcommand{\stateVar}{y}
\newcommand{\state}{\mathbf{\stateVar}}
\newcommand{\stateSpace}{\mathcal{Y}}
\newcommand{\stateDim}{N_\stateVar}

\newcommand{\rollout}{\hat{\state}}
\newcommand{\rolloutLen}{H}

\newcommand{\latentStateVar}{z}
\newcommand{\latentState}{\mathbf{\latentStateVar}}
\newcommand{\latentStateSpace}{\mathcal{Z}}
\newcommand{\latentStateDim}{N_\latentStateVar}

\newcommand{\R}{\mathbb{R}}
\newcommand{\N}{\mathbb{N}}

\newcommand{\policyParamVar}{\theta}
\newcommand{\policyParam}{\boldsymbol{\policyParamVar}}
\newcommand{\policyParamDim}{N_{\policyParamVar}}

\newcommand{\policy}{\phi_{\policyParam}}
\newcommand{\policyStar}{\phi_{\policyParam^*}}

\newcommand{\initMap}{h_{\policyParam}}
\newcommand{\outputMap}{g_{\policyParam}}
\newcommand{\outputMapIndexed}[1]{g_{\policyParam, #1}}
\newcommand{\latentDyn}{f_{\policyParam}}
\newcommand{\dsDyn}{f}
\newcommand{\linearProj}{\boldsymbol{P}_{\policyParam}}
\newcommand{\numBijections}{K}

\newcommand{\contractionRate}{\gamma}
\newcommand{\contractionConst}{\alpha}

\newcommand{\lossInitState}{L}
\newcommand{\lossTrue}{\mathcal{L}}
\newcommand{\lossEmp}{\hat{\mathcal{L}}}

\newcommand{\horizon}{{H}}
\newcommand{\E}{\mathop{\mathbb{E}}}    
\newcommand{\at}[2][]{#1|_{#2}}         

\newcommand{\equilibrium}{\state^*}
\newcommand{\policyTraj}{\hat{\state}}
\newcommand{\policyTrajDot}{\dot{\hat{\state}}}
\newcommand{\policySampleCount}{H}
\newcommand{\initStateDist}{\rho_0}
\newcommand{\initStateEmp}{\mathcal{S}_0}
\newcommand{\initStateEmpSize}{{\vert \initStateEmp \vert}}
\newcommand{\initStateEmpIdx}{s}
\newcommand{\timeStep}{\Delta T}
\newcommand{\dimV}{N_v}

\section{Introduction}
\label{sec:introduction}

Imitation learning provides an intuitive policy optimization framework for executing complex robotic tasks by learning from expert demonstrations~\citep{hussein2017imitation}.
However, naively replicating expert behavior can raise safety concerns during deployment, as the robot's trajectory may become \emph{unreliable} when facing out-of-sample (\oos{}) states. For instance, if the robot begins from an unknown initial state or encounters environmental perturbations, it could fail to maintain a consistent and reliable behavior.
To address this issue, \emph{the policy can be modeled using a dynamical system (DS)} with rigorous reliability certificates~\citep{ravichandar2020advances_LfD, dawson2023safe_control_learned_certificates}.

\begin{wrapfigure}{r}{0.4\textwidth}
\vspace{-10pt}
  \centering
  \includegraphics[width=0.4\textwidth]{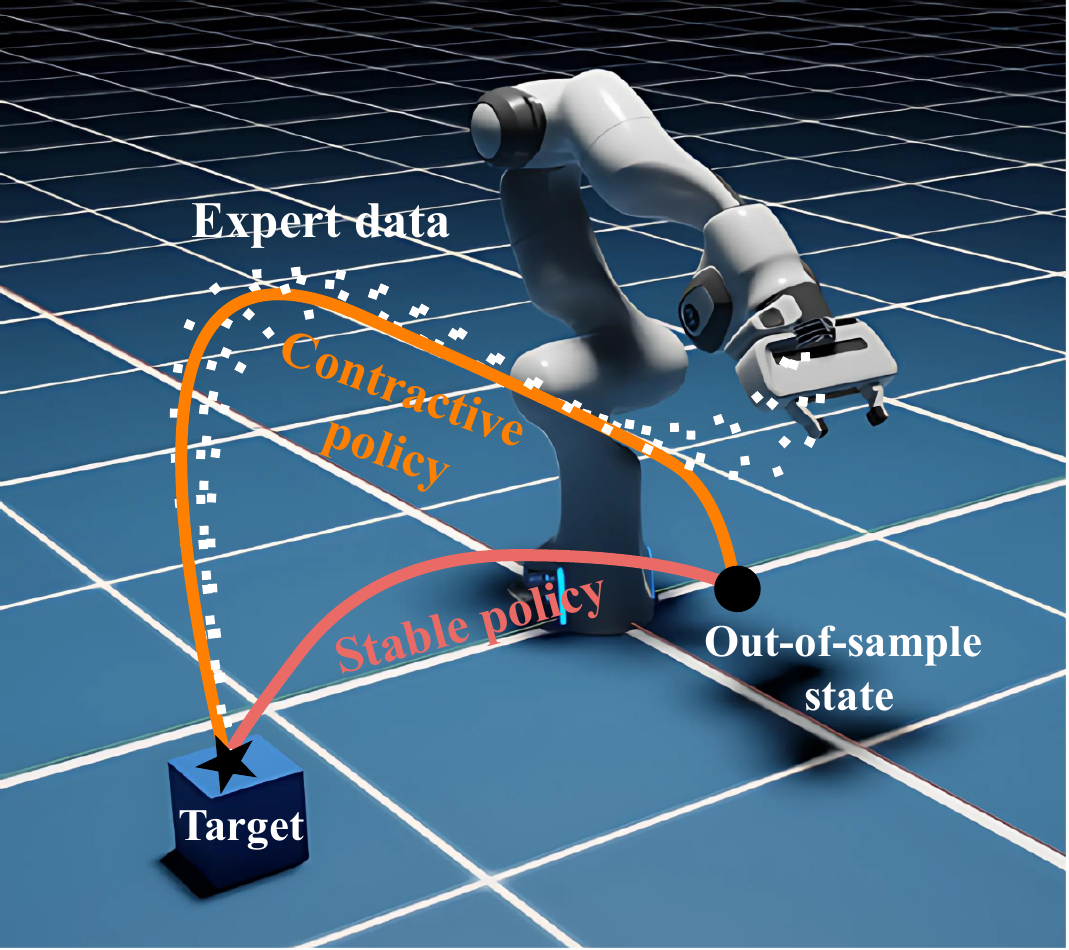} 
  \vspace{-10pt}
  \caption{Policy rollouts generated by contractive and stable policies. While both policies eventually reach the target, the contractive policy closely mimics the expert in the transient phase.}
  \vspace{-15pt}
  \label{fig:example}
\end{wrapfigure}
\looseness-1
One such certificate is asymptotic stability~\citep{devaney2021introduction_dynamical_systems}, which ensures that all trajectories eventually converge to the same equilibrium state, regardless of the initial condition or perturbations~\citep{khansari2011learning, rana2020euclideanizing, abyaneh2024globally}. While asymptotic stability guarantees convergence, it overlooks the transient behavior~\citep{tsukamoto2021contraction}. This phenomenon is illustrated in \autoref{fig:example} for an arbitrary \oos{} initial state, where the trajectory induced by the stable policy converges to the target but fails to imitate the expert. In contrast, \emph{contractive} DS policies ensure that the resulting trajectories exponentially converge to each other, and ultimately to the equilibrium state~\citep{ravichandar2017learning, neural_contractive_beik2024neural}. We leverage the stronger notion of contractivity~\citep{lohmiller1998on_contraction} to ensure reliability in the transient phase, and as a result, enhance the imitation quality.

There are two main approaches to learning contractive DSs. 
The most common method relies on constrained optimization to train the DS while satisfying learned or fixed contractivity certificates~\citep{ravichandar2017learning, blocher2017learning_contractive, sindhwani2018learning_contractive}. However, constrained optimization is computationally challenging for complex expert behaviors or high-dimensional state spaces. Additionally, most constrained approaches trade off imitation accuracy for constraint satisfaction.
The second approach is based on parameterized models with built-in contractivity guarantees for any choice of parameters~\citep{neural_contractive_beik2024neural, dawson2023safe_control_learned_certificates}. As a result, the model remains contractive even if the learning process is imperfect or disrupted. 

In this work, we build on the foundation of two such formulations: (i) a parameterization of contractive continuous-time DSs using recurrent equilibrium networks (RENs)~\citep{martinelli2023unconstrained}, and (ii) the coupling layer architecture~\citep{papamakarios2021normalizing}. The former ensures that the DS is contractive with an adjustable rate, while the latter provides a trainable bijective transformation that preserves contraction properties.
Our experiments demonstrate that the coupling layers enhance the model's representation power, leading to improved imitation accuracy and faster training, while maintaining the contractive nature of the DS.

\textbf{Related work.} Contractive imitation learning approaches are less common in the literature, primarily due to the increased difficulty of enforcing contractivity compared to asymptotic stability. The majority of existing methods train the DS under contractivity constraints~\citep{blocher2017learning_contractive, ravichandar2017learning, sindhwani2018learning_contractive}, resulting in a computationally demanding optimization problem. These methods assume specific representations of the DS or the contraction certificate to mitigate the intensive computation. For example, \cite{ravichandar2017learning} employ sum-of-squares programming with polynomials, while \cite{blocher2017learning_contractive} and \cite{ravichandar2017learning} use Gaussian mixture techniques. Additionally, \citet{sindhwani2018learning_contractive} adopt reproducing kernel Hilbert spaces. By focusing on these particular representations, these methods reduce the computational cost, but at the cost of limiting the model's representation power.

More recently, \cite{neural_contractive_beik2024neural} introduce a promising unconstrained neural architecture with inherent contractivity guarantees. The algorithm optimizes a parameterized negative-definite Jacobian in a lower-dimensional latent space and integrates it to derive the DS. Despite better performance in some high-dimensional environments, it under-performs in lower-dimensional ones. The method also requires solving a second-order differential equation, which is more complex than the standard first-order equations typically used~\citep{rana2020euclideanizing, sochopoulos2024learningNode}. Further discussion of the literature, including non-contractive methods, is provided in~\aref{app:extended_related_work}.

\textbf{Contributions.} We propose a \textbf{S}tate-only framework for learning \textbf{C}ontractive \textbf{D}ynamical \textbf{S}ystem policies (\ours{}), which offers several distinct advantages. 
First, \ours{} learns the policy solely from \emph{state measurements}, eliminating the need to measure the expert's velocity. Hence, we address the cumulative error problem observed in previous methods that replicate expert velocity, where small errors can accumulate and lead to significant deviations in state space trajectories~\citep{ravichandar2020advances_LfD, abyaneh2024globally}.
Additionally, \ours{} enables learning in a \emph{latent space}, with its dimension adapting to the problem at hand. It facilitates learning in high-dimensional state spaces by mapping to lower-dimensional latent representations, while also providing flexibility in low-dimensional spaces through higher-dimensional embeddings. 

From a theoretical standpoint, we provide a rigorous solution for \oos{} recovery by establishing an \emph{upper bound} on the worst-case deviation from expert trajectories. 
\ours{} also offers control over the transient behavior through an \emph{adjustable} and \emph{learnable} contraction rate.
Training \ours{} is highly efficient by relying on unconstrained optimization and neural ordinary differential equations (ODE) fixed-point differentiation for gradient computation~\citep{chen2018neuralode}. An overview of \ours{} is outlined in~\autoref{fig:overview}.

\begin{figure}[!t]
  \centering
  \includegraphics[width=\linewidth]{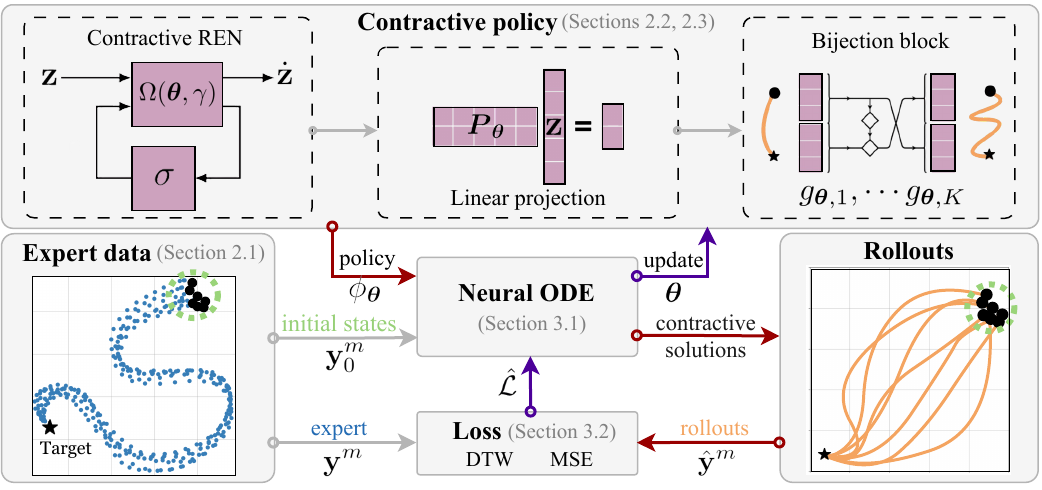}
  \caption{Overview of the \ours{} training scheme. The policy structure (top box) consists of a REN, a linear projection, and a bijection block, ensuring contractivity for any choice of parameters, $\policyParam$. In the forward pass (\textcolor{forwardcolor}{\large \textbf{\textrightarrow}}), \textcolor{darkgreen}{initial} states are passed through a differentiable ODE solver to generate state \textcolor{orange!70}{rollouts}. The loss function penalizes the discrepancy between the generated and \textcolor{darkblue}{expert} trajectories, updating the policy via backpropagation (\textcolor{backpropcolor}{\large \textbf{\textleftarrow}}).}
  \label{fig:overview}
\end{figure} 

\section{Contractive Policy Formulation}
\label{sec:formulation}

\subsection{Problem Setup}
\label{sec:problem_setup}
\paragraph{Dataset.} Consider a robot with a state $\state(t) \in \stateSpace$ at time $t \in \R_{\geq 0}$, operating within a state space $\stateSpace \subset \R^{\stateDim}$. The state $\state$ is typically considered to be the robot's joint or task space configuration. Within $\stateSpace$, we have a dataset $\dataset$ of $\demonstrationCount \in \N$ expert demonstrations, defined as:
\begin{equation} \label{eq:dataset_def} 
\dataset \triangleq \Bigl\{
    \state^{\demonstration} = 
    \big(
        \state^{\demonstration}_0, \; \cdots, \; \state^{\demonstration}_{\sampleCount{\demonstration}-1}\big) 
\Bigr\}_{\demonstration=1}^{\demonstrationCount},
\end{equation}
where the $\demonstration$-th demonstration, $\state^{\demonstration}$, has $\sampleCount{\demonstration}$ samples. We assume that all expert trajectories converge to a common target state, $\state^{\demonstration}_{\sampleCount{\demonstration}-1} = \state^*$ for all $m$, but allow for different initial states and different sample sizes per trajectory. Note that our approach only requires state measurements, unlike most previous work, which rely on both the state and its time derivative.
 
\paragraph{Objective.} We aim to learn a policy that precisely replicates the expert trajectories in the dataset $\dataset$ while ensuring robustness against perturbations during deployment. In other words, the policy should be able to recover safely if the robot encounters \oos{} states, either as a result of perturbations or different initial states from those in $\dataset$. 
We provide such guarantees by formulating the policy as a parameterized DS and leveraging tools from the contraction theory laid out in the next section.

\subsection{Background: Contractive latent dynamics}
\label{subsec:contractive_latent_dyn}
Consider an autonomous DS in continuous time $\Dot{\latentState}(t) = \dsDyn\bigl(\latentState (t) \bigr)$, where $\latentState \in \R^{\latentStateDim}$ is the state and $\dsDyn: \R^{\latentStateDim} \to \R^{\latentStateDim}$ specifies the dynamics.  Contractivity is a fundamental characteristic of $\dsDyn$, ensuring that the solution trajectories of $\latentState$ incrementally converge to a single trajectory, even when starting from different initial conditions or facing perturbations along the way~\citep{lohmiller1998on_contraction, tsukamoto2021contraction}. Therefore, a contractive DS provides stability and predictability as the state evolves over time. A formal definition is outlined in the following.

\begin{definition}~\citep{lohmiller1998on_contraction}
\label{def:Contractivity_continuous_time}
The dynamical system $\Dot{\latentState}(t) = \dsDyn\bigl(\latentState (t) \bigr)$ is \emph{contracting} with the rate $\contractionRate \in \R_+$, if for any two initial conditions,  $\latentState^a(0)$, $\latentState^b(0)$ $\in \R^{\latentStateDim}$, and some $\contractionConst \in \R_+$, the corresponding trajectories $\latentState^a$ and $\latentState^b$ generated by the dynamics $\dsDyn$ satisfy:
\begin{equation}\label{eq:definition_contracting_system}
    \bigl\Vert \latentState^a(t) - \latentState^b(t) \bigr\Vert 
    \le 
    \contractionConst \, e^{- \contractionRate t} \bigl\Vert \latentState^a(0) - \latentState^b(0) \bigr\Vert \quad \forall t\geq0\,,
\end{equation}
where $\Vert \cdot \Vert$ denotes an $L_p$ norm.
\end{definition}
\autoref{def:Contractivity_continuous_time} implies that a contractive DS guarantees robustness by exponentially \emph{forgetting} perturbations over time. The contraction rate $\contractionRate$ controls the speed of this process, with larger values indicating faster contraction. We define contractivity using the $L_2$ (Euclidean) norm.

The key question is: How can we learn an expressive and contractive DS with a desired contraction rate? We start by adopting the unconstrained parameterization of continuous-time contracting DSs introduced by~\citet{martinelli2023unconstrained}.  In this approach, the dynamics $f$ is modeled as a recurrent equilibrium network (REN), expressed as: 
\begin{align}
   \begin{bmatrix}
        \Dot{\latentState}(t) \\
        \boldsymbol{v}(t)
    \end{bmatrix}
    = 
    \Omega(\policyParam, \contractionRate)
    \begin{bmatrix}
        \latentState(t) \\
        \sigma \bigl(\boldsymbol{v}(t)\bigr)
    \end{bmatrix}
    ,
    \label{eq:REN_dyn}
\end{align}
where $\boldsymbol{v} \in \R^{\dimV}$ is an internal variable initialized at zero, $\contractionRate$ is the contraction rate, and $\policyParam \in \R^{\policyParamDim}$ is the model parameters. The nonlinear function $\sigma:\R\rightarrow\R$ is applied element-wise and must be piecewise differentiable with first derivatives restricted to the interval $[0, 1]$. The mapping $\Omega: \R^{\policyParamDim} \times \R_+ \rightarrow \R^{(\latentStateDim + \dimV) \times (\latentStateDim + \dimV)}$ ensures that $\boldsymbol{v}(t)$ has a unique value for every $t \geq 0$\footnote{
While \autoref{eq:REN_dyn} is not immediately in the form $\Dot{\latentState}(t) = \dsDyn\bigl(\latentState (t) \bigr)$, it implicitly specifies the relation between $\Dot{\latentState}$ and $\latentState$ since $\boldsymbol{v}$ is uniquely determined from $\latentState$.}, and that $\latentState$ satisfies the contractivity criteria  in~\autoref{eq:definition_contracting_system} for every $\policyParam \in \R^{\policyParamDim}$. The contraction rate $\contractionRate$ can be trained or adjusted, as discussed in~\aref{app:c_rate}, and influences the resulting model through $\Omega$. Note that RENs can represent arbitrarily deep neural models by selecting the structure of the matrix $\Omega(\policyParam, \contractionRate)$, thereby offering a high expressive power~\citep{revay2023recurrent}. Further background on contracting RENs and related proofs are provided in~\aref{app:continuous_ren_properties}.

\subsection{Expressive Contractive Policies}\label{subsec:policy_param}
We formulate the policy $\policy$ using a parameterized autonomous DS in continuous time, given by:
\begin{align}
\label{eq:policy_generic}
\policy:
  \begin{cases}
    \rollout (t) = \outputMap \bigl( \latentState (t) \bigr)
    \qquad &\textit{(output transformation)}
    \\
    \latentState(0) = \initMap \bigl( \state_0 \bigr)
    \qquad &\textit{(initial condition)}
    \\
    \Dot{\latentState}(t) = \latentDyn\bigl(\latentState (t) \bigr) 
    \qquad &\textit{(latent dynamics)}
   \end{cases} 
   \quad ,
\end{align}
where $\policyParam \in \R^{\policyParamDim}$ collects all parameters. 

Given an initial state $\state_0$, the DS $\policy$ evolves through a latent state $\latentState(t) \in \R^{\latentStateDim}$, initialized through $\initMap: \R^{\stateDim} \rightarrow \R^{\latentStateDim}$ and governed by the latent dynamics $\latentDyn: \R^{\latentStateDim} \rightarrow \R^{\latentStateDim}$. 
The latent state $\latentState(t)$ is mapped by $\outputMap: \R^{\latentStateDim} \rightarrow \R^{\stateDim}$ to $\rollout(t) \in \R^{\stateDim}$, which is the state planned by the policy. A low-level controller then converts this planned state into joint torque or velocity commands for the robot. Together, the DS $\policy$ and the low-level controller form our imitation policy, which maps states to actions. For simplicity, we refer to the DS as the policy. In the sequel, we describe different components of~\autoref{eq:policy_generic}

\textbf{Latent dynamics.} We model the latent dynamics with a contractive REN described in~\autoref{eq:REN_dyn}. Employing a high-dimensional latent state, $\latentStateDim > \stateDim$, increases the flexibility of our representation, as we empirically show in~\aref{app:latent_space_dim_contract}. On the other hand, a low-dimensional latent state, $\latentStateDim < \stateDim$, facilitates learning in some high-dimensional settings~\citep{neural_contractive_beik2024neural}. 

\textbf{Output transformation.} 
Owing to the architecture of $\latentDyn$, all solutions in the latent space converge to a single one, regardless of the initial condition. Yet, it is crucial to maintain the same contraction property for the robot's state trajectories, $\rollout$. In essence, $\outputMap$ is required to preserve contractivity, so that $\rollout$ satisfies~\autoref{eq:definition_contracting_system}, for some constant $\contractionConst$ and $\contractionRate$. The same behavior is not guaranteed in the original REN formulation~\citep{revay2023recurrent, martinelli2023unconstrained}. 

To address this, we first change the dimension from the latent space to the state space by applying a learnable linear projection, $\linearProj \, \latentState$, where $\linearProj \in \mathbb{R}^{\stateDim \times \latentStateDim}$. Next, this transformation is followed by $\numBijections$ coupling layers, $\outputMapIndexed{1} \circ \cdots \circ \outputMapIndexed{\numBijections}$, which are trainable bijective mappings from $\mathbb{R}^{\stateDim}$ to $\mathbb{R}^{\stateDim}$~\citep{tabak2013family, rana2020euclideanizing}. 
The resulting output transformation is described by:
\begin{align}\label{eq:outputmap}
    \rollout (t) 
    = 
    \outputMap \bigl( \latentState (t) \bigr) 
    \triangleq 
    \outputMapIndexed{1} \bigl( \cdots \bigl( \outputMapIndexed{\numBijections} \bigl(\linearProj \, \latentState (t)\bigr)\bigr) ,
\end{align}
which preserves contractivity according to the next proposition.

\begin{proposition}\label{proposition:contractive_g}
    If the latent state $\latentState$ satisfies the contractivity condition in \autoref{eq:definition_contracting_system} with $L_2$ norm, then any state trajectory $\rollout$ obtained by the output map in~\autoref{eq:outputmap} also satisfies this condition for every parameter $\policyParam \in \R^{\policyParamDim}$. Moreover, the contraction rate $\contractionRate$ is invariant under the output transformation. 
\end{proposition}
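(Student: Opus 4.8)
The plan is to show that the output transformation, being applied pointwise in time and independent of $t$, transports the exponential bound of \autoref{eq:definition_contracting_system} with the rate $\contractionRate$ untouched and merely rescales the prefactor $\contractionConst$. The essential observation is that if a map $T$ satisfies a two-sided (bi-Lipschitz) bound $c\,\Vert u - v\Vert \le \Vert T(u) - T(v)\Vert \le C\,\Vert u - v\Vert$ and we set $\rollout(t) = T\bigl(\latentState(t)\bigr)$, then the upper constant converts the latent bound into $\Vert \rollout^a(t) - \rollout^b(t)\Vert \le C\,\contractionConst\,e^{-\contractionRate t}\,\Vert \latentState^a(0) - \latentState^b(0)\Vert$, while the lower constant lets me replace the latent initial gap by the output initial gap via $\Vert \latentState^a(0) - \latentState^b(0)\Vert \le c^{-1}\Vert \rollout^a(0) - \rollout^b(0)\Vert$. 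Chaining the two yields \autoref{eq:definition_contracting_system} for $\rollout$ with the new constant $\contractionConst' = C\contractionConst/c$ and an identical rate $\contractionRate$, which already settles the rate-invariance claim.

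Next I would exploit the composition structure of \autoref{eq:outputmap}: since $\outputMap = \outputMapIndexed{1}\circ\cdots\circ\outputMapIndexed{\numBijections}\circ\linearProj$ and the composition of bi-Lipschitz maps is again bi-Lipschitz with multiplied constants, it suffices to verify the two-sided bound for each factor and then chain them. For the coupling layers $\outputMapIndexed{k}$ I would use that each is a bijection of $\R^{\stateDim}$ with a differentiable inverse; restricting to the bounded set of states visited along the rollouts (the trajectories converge, hence stay in a compact region), both the Jacobian of $\outputMapIndexed{k}$ and that of its inverse are bounded, so each layer is bi-Lipschitz with finite constants $c_k, C_k > 0$. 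For the linear factor, the upper bound is immediate with $C = \Vert \linearProj\Vert_2$ (spectral norm), and the lower bound with $c = \sigma_{\min}(\linearProj)$ holds whenever $\linearProj$ has full column rank.

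The main obstacle is precisely this co-Lipschitz bound for the linear projection, since it is what ties the output initial gap back to the latent one: a strictly positive $\sigma_{\min}(\linearProj)$ requires $\linearProj$ to be injective, i.e.\ $\latentStateDim \le \stateDim$. In the embedding regime $\latentStateDim > \stateDim$ the projection is necessarily rank-deficient, so I would either argue that $\linearProj$ may be taken of full column rank on the reachable latent subspace, or state the preserved property directly in terms of the latent initial conditions, keeping the decay $\Vert \rollout^a(t) - \rollout^b(t)\Vert \le C\contractionConst\,e^{-\contractionRate t}\Vert \latentState^a(0)-\latentState^b(0)\Vert$ with rate $\contractionRate$ intact. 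A secondary point to handle carefully is that \autoref{def:Contractivity_continuous_time} quantifies over all initial conditions in $\R^{\latentStateDim}$, whereas the coupling-layer Lipschitz constants are only finite on a bounded domain, so I would justify the restriction to the compact reachable region. In every case the crux of the proposition, rate invariance, is immediate from the time-independence of $\outputMap$: each constant introduced ($\Vert\linearProj\Vert_2$, $\sigma_{\min}(\linearProj)$, and the $c_k, C_k$) multiplies only the prefactor and never touches the exponent $e^{-\contractionRate t}$.
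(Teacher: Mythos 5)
Your proposal follows essentially the same route as the paper's proof: it factors $\outputMap$ into the linear projection—bounded above by the spectral norm and below by the smallest singular value, which is exactly the content of the paper's Lemma~\ref{lemma:courant}—and the bijective coupling layers, which the paper handles by citing \citep{Manchester2015ControlCM} while you sketch the bi-Lipschitz argument directly, with rate invariance following in both cases because every factor only rescales the prefactor and never touches $e^{-\contractionRate t}$. The obstacle you flag is genuine and in fact afflicts the paper's own proof: when $\latentStateDim > \stateDim$ the matrix $\linearProj \in \R^{\stateDim \times \latentStateDim}$ is necessarily rank-deficient, so $\lambda_{\min}(\linearProj^\top\linearProj)=0$ and the constant $\contractionConst' = \contractionConst\Vert\linearProj\Vert/\sigma_{min}^2(\linearProj)$ in \autoref{eq:proof_sigma} is undefined, meaning the full-column-rank restriction (or restating the bound in terms of latent initial gaps) that you propose is needed there as well.
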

The proof can be found in \aref{app:proof_lemma_contractive_g}.
This proposition ensures that the mapping $\outputMap$ from the latent space to the state space preserves the desired contractivity property.

\textbf{Initial condition.} 
The last element in~\autoref{eq:policy_generic} to specify is the initial latent state $\latentState(0)$, which must be set to place the initial state $\rollout(0)$ as close as possible to the desired $\state_0$. 
Using~\autoref{eq:outputmap}, this requirement is translated into $\state_0 \approx \rollout(0) = \outputMap \bigl( \latentState(0) \bigr)$. Bearing in mind that the bijective maps are invertible, we can express this relationship as $\linearProj \, \latentState(0) \approx \outputMapIndexed{\numBijections}^{-1} \bigl( \cdots \outputMapIndexed{1}^{-1} \bigl( \state_0 \bigr)\bigr)$. Then, a plausible solution is determined by $\linearProj$'s pseudoinverse: 
\begin{align}
    \latentState(0) 
    = 
    \initMap (\state_0)
    =
    \linearProj^\dag \, \outputMapIndexed{\numBijections}^{-1} \bigl( \cdots  \outputMapIndexed{1}^{-1} \bigl( \state_0 \bigr) \bigr). \label{eq:ic_latent_space}
\end{align}
When $\linearProj$ has full column rank, the pseudoinverse coincides with the left inverse, resulting in $\rollout(0) = \state_0$. Otherwise, \autoref{eq:ic_latent_space} provides the least-squares approximation to $\state_0$~\citep{Penrose1956}.

\textbf{Policy architecture. } At this stage, we integrate the essential modules discussed earlier to construct the final policy, by substituting \autoref{eq:REN_dyn}, \autoref{eq:outputmap}, and \autoref{eq:ic_latent_space} into the generic policy in~\autoref{eq:policy_generic}:
\begin{align}
\policy:
  \begin{cases}
    \rollout (t) =
        \outputMapIndexed{1} \bigl( \cdots \bigl( \outputMapIndexed{\numBijections} \bigl(\linearProj \, \latentState (t)\bigr)\bigr)
    &(\textit{output map})
    \vspace{2pt}\\
    \latentState(0) = 
        \linearProj^\dag \, \outputMapIndexed{\numBijections}^{-1} \bigl( \cdots  \outputMapIndexed{1}^{-1} \bigl( \state_0 \bigr) \bigr)
    &(\textit{initial condition})
    \vspace{4pt}\\
    \hspace{-2pt}
    \begin{bmatrix}
        \Dot{\latentState}(t) \\
        \boldsymbol{v}(t)
    \end{bmatrix}
    = 
    \Omega(\policyParam, \contractionRate)
    \begin{bmatrix}
        \latentState(t) \\
        \sigma \bigl(\boldsymbol{v}(t)\bigr)
    \end{bmatrix}
    &(\textit{latent dynamics})    
   \end{cases}\label{eq:policy} \quad .
\end{align}
The policy is determined by its parameters $\policyParam$ and the contraction rate $\contractionRate$. In particular, for any desired $\gamma \in \R_+$, the policy $\policy$ remains contractive regardless of the choice of $\policyParam \in \R^{\policyParamDim}$. 
Therefore, $\policyParam$ can be optimized without imposing any constraints through computationally efficient automatic differentiation tools. In the next section, we learn $\policy$ to effectively imitate the expert and build on the scalability of our method to learn intricate expert behaviors in high-dimensional spaces.

\section{Learning Contractive Policies through Imitation}
\label{sec:learning}

Each expert trajectory can be intuitively considered as the solution to an unknown initial value problem (IVP). We seek to optimize the policy parameters $\policyParam$ so that the policy $\policy$ accurately models this IVP. To achieve this, we first explore solving the IVP in a differentiable way, then introduce a loss function to measure the discrepancy between generated and expert trajectories, and finally optimize the policy to minimize this deviation through backpropagation.

\subsection{Differentiable ODE solutions}
\label{sec:differentiable_policy_rollout}
Consider the IVP in \autoref{eq:policy} with an initial condition $\rollout_0 \in \stateSpace$, which we refer to as $ivp(\policy, \;\rollout_0)$. The generated solution, denoted by $\rollout$, depends on $\policyParam$. Therefore, the gradient information for the entire trajectory must be kept along the way to be leveraged by automatic differentiation to optimize $\policy$. Luckily, the Neural ODE framework is capable of solving an IVP while efficiently storing the gradient information through implicit differentiation~\citep{chen2018neuralode}. 

The Neural ODE framework solves the IVP in \autoref{eq:policy} for a selected horizon $\horizon$, which determines the number of integrations. Larger $\horizon$ enables the policy to imitate expert behavior with finer granularity at a higher computational cost (see~\aref{app:horizon}). The IVP solutions, which we synonymously call \emph{policy rollouts}, are formed as differentiable trajectories: $\rollout = \bigl(\policyTraj_0, \; \policyTraj_1, \; \cdots, \; \policyTraj_{\horizon - 1}\bigr)$. We further improve the efficiency using batched multiple shooting, which generates an array of rollouts corresponding to different initial conditions in parallel. Next, these policy rollouts are compared to expert demonstrations to train the policy $\policy$.

\subsection{Trajectory space loss}
\label{sec:trajectory_loss}
We compare the policy rollout, $\rollout$, with each expert demonstration, $\state^m$, using a discrepancy measure $\ell: \stateSpace^{\rolloutLen} \times \stateSpace^{\sampleCount{\demonstration}} \xrightarrow[]{} \R_{\geq 0}$, where $\rolloutLen$ and $\sampleCount{\demonstration}$ are the sizes of $\rollout$ and $\state^m$, respectively. In the simplest case, where the trajectories are of the same size, $\rolloutLen = \sampleCount{\demonstration}$ for all $\demonstration$, the function $\ell$ can be defined as the mean squared error (MSE). Otherwise, a better alternative is dynamic time warping (DTW)~\citep{keogh2005dtw}, which can handle trajectories of different lengths. Indeed, unlike MSE, DTW is not sensitive to time discrepancies: DTW is zero if two trajectories follow the same path but at different speeds. This behavior is desirable because spatial accuracy in robotic tasks is often the primary concern, rather than the execution velocity~\citep{rana2020euclideanizing, sochopoulos2024learningNode}. We employ the differentiable \emph{soft-DTW} loss~\citep{cuturi2017soft_dtw} instead of the original formulation, which is tailored for gradient-based optimization (\aref{app:soft_dtw}). 

\looseness -1
Next, we introduce the loss function for multiple demonstrations. An intuitive approach is to take a convex combination of the discrepancies associated with each expert trajectory, given by:
\begin{align} \label{eq:loss_convex_comb}
    \lossInitState (\rollout_0; \; \policyParam) 
    \triangleq
    \sum_{\demonstration=1}^{\demonstrationCount} 
    \lambda_{\demonstration}(\rollout_0) \,
    \ell \bigl(\rollout ,\; \state^{\demonstration}\bigr), \;\; \text{s.t.} \; \rollout = ivp(\policy,\; \rollout_0), 
\end{align}
\begin{wrapfigure}{r}{0.35\textwidth}
  \centering
  \includegraphics[width=0.35\textwidth]{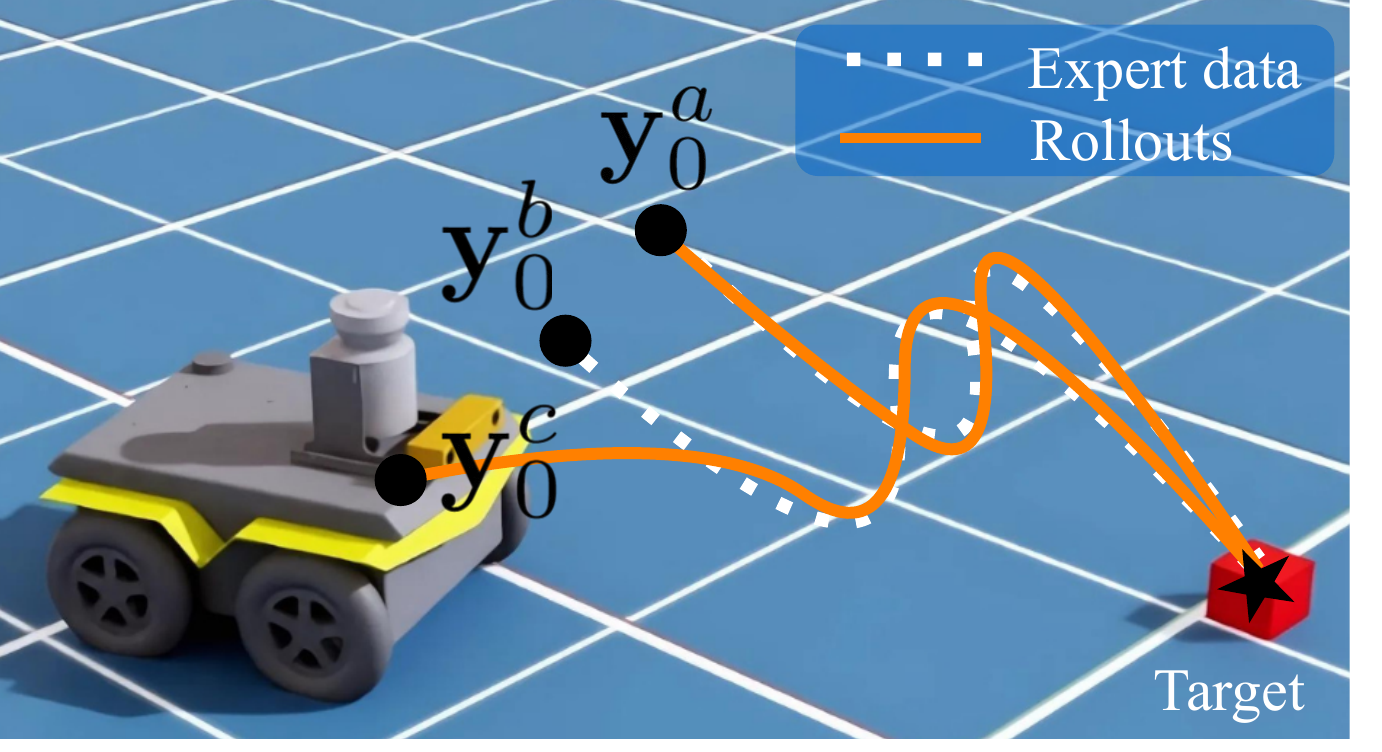} 
  \caption{Uniform weighting averages the deviation from both demonstrations, hence, $\lossInitState (\state_0^a; \; \policyParam) \neq 0$.
  In contrast, \autoref{eq:lambda} results in $\lossInitState (\state_0^a; \; \policyParam) = 0$. For $\state_0^c$, the weights in~\autoref{eq:lambda} assign higher importance to the closer $\state_0^b$'s demonstration, which is intuitive.}
  \vspace{-15pt} 
  \label{fig:loss_illustration}
\end{wrapfigure}
where $\lambda_{\demonstration}(\rollout_0) \in [0,1]$ is the weight assigned to the difference from the $\demonstration$-th demonstration, and these weights satisfy the condition $\sum_{\demonstration=1}^{\demonstrationCount} \lambda_{\demonstration}(\rollout_0) = 1$.
A naive choice for the combination weights is to set them all equal, $\lambda_\demonstration(\rollout_0) = \frac{1}{\demonstrationCount}$, which results in an average loss. However, this approach can lead to a counterintuitive situation, as illustrated in~\autoref{fig:loss_illustration}, where the loss is nonzero even if the policy perfectly replicates one of the expert trajectories, i.e., $\lossInitState (\state^{\demonstration}_0; \policyParam) \neq 0$ when $\rollout = \state^{\demonstration}$.
Instead, we propose selecting the combination weights inversely proportional to the squared distance between the given initial condition and the initial condition of the $\demonstration$-th demonstration,
\begin{align} \label{eq:lambda}
    \lambda_{\demonstration}(\rollout_0)
    = 
    \frac{
        \bigl(\Vert \rollout_0 - \state^{\demonstration}_0 \Vert_2^2 \bigr)^{-1}
    }
    {
    \sum_{\demonstration'=1}^{\demonstrationCount} \bigl(\Vert \rollout_0 - \state^{\demonstration'}_0 \Vert_2^2 \bigr)^{-1}
    }\;\; .
\end{align}
The condition $\sum_{\demonstration=1}^{\demonstrationCount} \lambda_{\demonstration}(\rollout_0) = 1$ can easily be verified for the above weights. In this case, the loss is zero when
$\rollout = \state^{\demonstration}$, since only $\lambda_m$ is nonzero, which is the expected behavior.

In real-world scenarios, there is often uncertainty about the initial robot state, $\rollout_0$. A common approach to modeling this uncertainty is to assume that the initial state lies within a set $\stateSpace_0 \subset \stateSpace$ and has the distribution $\initStateDist$. Then, the \emph{true} and \emph{empirical} losses, denoted by $\lossTrue$ and $\lossEmp$, are given by:
\begin{align} \label{eq:true_loss_def}
    \lossTrue (\initStateDist; \; \policyParam) \triangleq \E_{\rollout_0 \sim \initStateDist} \lossInitState (\rollout_0; \; \policyParam) 
    \xrightarrow[\text{estimation}]{\text{unbiased}} 
    \lossEmp( {\initStateEmp}  ; \; \policyParam)
    \triangleq 
    \frac{1}{\initStateEmpSize} \sum_{\initStateEmpIdx=1}^{\initStateEmpSize} 
    \lossInitState (\rollout^\initStateEmpIdx_0; \; \policyParam)
\end{align}
where $\initStateSamples \triangleq \bigl\{ \rollout^s_0 \bigr\}_{\initStateEmpIdx=1}^\initStateEmpSize$ is a sampled set of initial states in $\stateSpace_0$. The true loss, $\lossTrue$, offers a robust criterion for assessing $\policy$ through averaging out the uncertainty in $\rollout_0$ but is impractical to compute. The empirical loss, $\lossEmp$, is a computationally efficient approximation of $\lossTrue$ and can be minimized to train the policy.
In the simple case where $\initStateSamples$ consists of the initial conditions in $\dataset$, the empirical loss boils down to
$ \lossEmp\bigl(
        \bigl\{ 
            \state^\demonstration_0    
        \bigr\}_{\demonstration=1}^{\demonstrationCount}
        ; \policyParam
    \bigr)
    =
    \frac{1}{\demonstrationCount}
    \sum_{\demonstration=1}^{\demonstrationCount} 
    \ell \bigl(\rollout^\demonstration, \; 
    \state^{\demonstration}\bigr),$
where $\rollout^\demonstration$ is the solution to $ivp(\policy,\; \state^\demonstration_0)$.
We employ this loss in our experiments.

\subsection{Optimization problem}
\label{sec:optimization}
The optimal parameters, $\policyParam^*$, is learned by minimizing the empirical loss $\lossEmp$ as follows:
\begin{equation} \label{eq:opt_policy_def}
    \policyParam^* \triangleq \arg\min_{\policyParam \in \R^{\policyParamDim}}
    \lossEmp\bigl(
        \bigl\{ 
            \state^\demonstration_0    
        \bigr\}_{\demonstration=1}^{\demonstrationCount}
        ; \; \policyParam
    \bigr),
\end{equation}
resulting in the optimal policy $\policyStar$.
Since the policy parameterization in~\autoref{subsec:policy_param} is contracting for all parameter choices, we can solve the optimization problem over $\R^{\policyParamDim}$ without additional constraints. The pseudocode for our method is presented in~\autoref{alg:contractive_policy_learning} and its implementation details are discussed in~\aref{app:implementation}.

\begin{remark}
    The contraction rate $\gamma$ in \autoref{alg:contractive_policy_learning} can be either set manually or learned as a model parameter. Augmented Lagrangian methods can be used to learn higher contraction rates (\aref{app:c_rate}).
\end{remark}
\tikzexternaldisable
\begin{algorithm}[t!]
\small
\caption{State-only contractive policy learning}
\label{alg:contractive_policy_learning}
\begin{algorithmic}[1]
    \BeginBox[fill=gray!10]
    \State {\bfseries Require:}
    Expert data $\dataset$, contraction rate $\contractionRate$, learning rate $\eta$, horizon $\horizon$ $\qquad \qquad \qquad \qquad \qquad \qquad \quad \quad$
    \EndBox
    \BeginBox[fill=gray!10]
    \State Initialize $\policyParam$
    \Comment{Policy parameter initialization}
    \While{not converged}
    \EndBox
    \BeginBox[fill=cyan!10]
    \State Sample $\rollout_0 \sim \state^{\demonstration}_0 \in \dataset$
    \State Initialize $\latentState(0) \leftarrow \linearProj^\dag \, \outputMap^{-1}(\rollout_0)$
    \Comment{Latent initial state (\autoref{eq:ic_latent_space})}
    \EndBox
    \BeginBox[fill=cyan!10]
    \State  $\latentState := \{\latentState_t\}_{t=0}^{\horizon - 1}$ $\gets$ {\texttt{Neural\_ODE}}$\bigl( \policy, \;\latentState(0)\bigr)$
    \Comment{Differentiable ODE solver (\autoref{sec:differentiable_policy_rollout})}

    \State $\rollout := \{\rollout_t\}_{t=0}^{\horizon-1} \gets \outputMap(\latentState)$
    \Comment{Output map (\autoref{eq:outputmap})}
    \EndBox
    \BeginBox[fill=cyan!10]
    \For{each $\state^m$ in $\dataset$}
    \State Find $\lambda_m, \; \ell(\rollout, \;\state^m)$ 
    \Comment{\autoref{eq:true_loss_def}, \autoref{eq:lambda}}
    \State Compute $\lossEmp = \sum_m \lambda_m \ell(\rollout, \;\state^m)$ 
    \Comment{Empirical loss (\autoref{sec:trajectory_loss})}
    \EndBox
    \BeginBox[fill=gray!5]    
        \EndFor
        \State Update $\policyParam \leftarrow \policyParam - \eta \nabla \lossEmp$
        \Comment{Gradient descent}
    \EndWhile
    \State \Return $\policyParam$
    \EndBox
\end{algorithmic}
\end{algorithm}
\tikzexternalenable

\section{Out-of-sample performance guarantees}
\label{sec:performance}
The policy $\policyStar$ is trained to replicate the trajectories in $\dataset$. Thanks to its contractive behavior, even if the robot starts from an unseen state $\rollout_0\in\stateSpace_0$ at deployment time, the generated trajectory will converge to those originating within $\dataset$, likely leading to successful task completion. Still, evaluating the incurred loss $\lossInitState (\rollout_0; \; \policyParam)$, as defined in~\autoref{eq:loss_convex_comb}, is crucial for ensuring reliability in real-world scenarios. As precisely computing $\lossInitState$ for every $\rollout_0$ is computationally expensive due to the need to solve $ivp(\policy, \; \rollout_0)$, we instead provide an efficient upper bound. To this end, we first introduce an assumption regarding the locality of the initial state.
\begin{assumption}\label{assumption}
    The initial state $\rollout_0$ lies within a multi-focal ellipse region~\citep{Vincze_1982} with $\demonstrationCount$ focal points at the initial conditions in the dataset $\dataset$, defined as:
    \begin{align}
        \stateSpace_0 = \Bigl\{
            \rollout_0 \Big\vert \sum_{\demonstration=1}^{\demonstrationCount} \Vert \rollout_0 - \state^{\demonstration}_0 \Vert_2 \leq R 
        \Bigr\},
    \end{align} 
    where $R \in \R_+$ is a constant scaling the region and $\demonstrationCount$ is the number of expert demonstrations.
\end{assumption}
\autoref{assumption} requires the initial state to lie within a bounded region relative to those in $\dataset$, which is a realistic restriction. 
Under this assumption, we upper-bound the loss for a given initial state.

\begin{theorem}\label{theo:ub}
    Consider a contractive policy $\policy$ with contraction rate $\contractionRate$ and constant $\contractionConst$ as specified in \autoref{def:Contractivity_continuous_time}. 
    Assume the loss function $\ell$ is the MSE and that all expert demonstrations and policy rollouts have the same length $\rolloutLen$. For $\rollout_0 \in \stateSpace_0$ satisfying \autoref{assumption}, it holds that:
    \begin{align}
    \lossInitState (\rollout_0; \; \policyParam)
    \leq
    \underbrace{
        \sum_{\demonstration=1}^{\demonstrationCount} 
        \lambda_{\demonstration}(\rollout_0) \, 
        \ell (\rollout^{\demonstration} , \state^{\demonstration})
    }_{\text{(i)}}
    +
    \underbrace{
        \frac{
            \contractionConst^2 \, R^2 \, (e^{-2 \contractionRate} - 1)
        }{
            \rolloutLen \, M \,(e^{\frac{-2 \contractionRate}{ \rolloutLen}} - 1)
        }
    }_{\text{(ii)}}
    ,
    \nonumber
    \end{align}
where $\state^{\demonstration}$ is the $m$-th demonstration, $\rollout^{\demonstration}$ is the solution to $ivp(\policy, \state^{\demonstration}_0)$, and $\lossInitState$ is defined in~\autoref{eq:loss_convex_comb}.
\end{theorem}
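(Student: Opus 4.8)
The plan is to start from the definition $\lossInitState(\rollout_0;\policyParam)=\sum_{\demonstration=1}^{\demonstrationCount}\lambda_\demonstration(\rollout_0)\,\ell(\rollout,\state^\demonstration)$ with $\rollout = ivp(\policy,\rollout_0)$, and to compare the rollout $\rollout$ launched from the out-of-sample $\rollout_0$ against the in-sample rollout $\rollout^\demonstration = ivp(\policy,\state^\demonstration_0)$. Since both $\rollout$ and $\rollout^\demonstration$ are solutions of the same policy ODE and, by \autoref{proposition:contractive_g}, contractivity is transferred to the output space with the same rate $\contractionRate$ and constant $\contractionConst$, \autoref{def:Contractivity_continuous_time} applies directly at each sample time $t_k=k/\rolloutLen$, giving $\Vert\rollout_k-\rollout^\demonstration_k\Vert_2\le\contractionConst\,e^{-\contractionRate t_k}\Vert\rollout_0-\state^\demonstration_0\Vert_2$. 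I would first note that, under the full-column-rank condition on $\linearProj$ discussed after \autoref{eq:ic_latent_space}, $\rollout(0)=\rollout_0$ and $\rollout^\demonstration(0)=\state^\demonstration_0$, so the right-hand side multiplies exactly $\Vert\rollout_0-\state^\demonstration_0\Vert_2$.

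Writing the MSE as $\ell(\rollout,\state^\demonstration)=\tfrac{1}{\rolloutLen}\sum_{k=0}^{\rolloutLen-1}\Vert\rollout_k-\state^\demonstration_k\Vert_2^2$, the next step is to decompose $\rollout_k-\state^\demonstration_k=(\rollout_k-\rollout^\demonstration_k)+(\rollout^\demonstration_k-\state^\demonstration_k)$ and expand the square. After weighting by $\lambda_\demonstration$, the $\Vert\rollout^\demonstration_k-\state^\demonstration_k\Vert_2^2$ contributions reassemble into term (i), $\sum_\demonstration\lambda_\demonstration(\rollout_0)\,\ell(\rollout^\demonstration,\state^\demonstration)$, while the $\Vert\rollout_k-\rollout^\demonstration_k\Vert_2^2$ contributions are routed into term (ii); the inner-product cross terms are set aside for the last paragraph.

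For term (ii) I would substitute the contraction bound to get $\sum_\demonstration\lambda_\demonstration\tfrac{1}{\rolloutLen}\sum_k\Vert\rollout_k-\rollout^\demonstration_k\Vert_2^2\le\tfrac{\contractionConst^2}{\rolloutLen}\bigl(\sum_k e^{-2\contractionRate t_k}\bigr)\sum_\demonstration\lambda_\demonstration\Vert\rollout_0-\state^\demonstration_0\Vert_2^2$. The key algebraic simplification exploits the precise weights in \autoref{eq:lambda}: writing $d_\demonstration\triangleq\Vert\rollout_0-\state^\demonstration_0\Vert_2$, the inverse-square weighting collapses $\sum_\demonstration\lambda_\demonstration d_\demonstration^2=\demonstrationCount/\sum_{\demonstration'}d_{\demonstration'}^{-2}$, which is exactly the harmonic mean of the squared distances. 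I would then chain the AM--HM inequality $\demonstrationCount/\sum_{\demonstration'}d_{\demonstration'}^{-2}\le\tfrac{1}{\demonstrationCount}\sum_\demonstration d_\demonstration^2$ with the elementary bound $\sum_\demonstration d_\demonstration^2\le(\sum_\demonstration d_\demonstration)^2\le R^2$ furnished by the multi-focal ellipse \autoref{assumption}, obtaining $\sum_\demonstration\lambda_\demonstration d_\demonstration^2\le R^2/\demonstrationCount$. Finally, recognizing $\sum_{k=0}^{\rolloutLen-1}e^{-2\contractionRate k/\rolloutLen}$ as a geometric series equal to $(e^{-2\contractionRate}-1)/(e^{-2\contractionRate/\rolloutLen}-1)$ reproduces term (ii) verbatim.

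The main obstacle is the cross term $\tfrac{2}{\rolloutLen}\sum_\demonstration\lambda_\demonstration\sum_k\langle\rollout_k-\rollout^\demonstration_k,\;\rollout^\demonstration_k-\state^\demonstration_k\rangle$, which has no definite sign and therefore cannot simply be dropped while preserving the clean additive form (i)$+$(ii). A direct Cauchy--Schwarz bound on it tends to yield $\bigl(\sqrt{\text{(i)}}+\sqrt{\text{(ii)}}\bigr)^2$ rather than (i)$+$(ii), so I expect to need either an additional structural argument---for instance, that the trained in-sample residuals $\rollout^\demonstration_k-\state^\demonstration_k$ are small enough to fold the cross term into (i)---or a mild tightening of the statement to close the gap exactly. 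Two secondary points I would verify are the identification of the sample times $t_k=k/\rolloutLen$ used in the geometric sum, and that \autoref{proposition:contractive_g} indeed delivers \autoref{def:Contractivity_continuous_time} in the output coordinates with the stated $\contractionConst$ and $\contractionRate$.
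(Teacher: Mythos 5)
Your derivation follows the same route as the paper's proof in \aref{app:proof_theo_ub}: pass through the in-sample rollout $\rollout^{\demonstration}$, bound $\tfrac{1}{\rolloutLen}\sum_{k}\Vert\rollout_k-\rollout^{\demonstration}_k\Vert_2^2$ via the contraction inequality evaluated at $t_k=k/\rolloutLen$ and the geometric series $(e^{-2\contractionRate}-1)/(e^{-2\contractionRate/\rolloutLen}-1)$, collapse $\sum_{\demonstration}\lambda_{\demonstration}(\rollout_0)\Vert\rollout_0-\state^{\demonstration}_0\Vert_2^2$ to the harmonic mean of the squared distances, and chain AM--HM with $\sum_{\demonstration}d_{\demonstration}^2\le\bigl(\sum_{\demonstration}d_{\demonstration}\bigr)^2\le R^2$ from \autoref{assumption}. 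Every one of these steps matches the paper (which packages AM--HM as Lemma~\ref{lemma:invinvineq}), and your two secondary checks are consistent with it: the paper also evaluates \autoref{def:Contractivity_continuous_time} at $t=i/\rolloutLen$, relies on $\rollout^{\demonstration}_0=\state^{\demonstration}_0$, and uses \autoref{proposition:contractive_g} to carry contraction into the output coordinates.

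The one step you declined to close---the sign-indefinite cross term from expanding $\Vert(\rollout_k-\rollout^{\demonstration}_k)+(\rollout^{\demonstration}_k-\state^{\demonstration}_k)\Vert_2^2$---is precisely the step the paper dispatches by asserting ``the triangle inequality for MSE,'' namely $\ell(\rollout,\state^{\demonstration})\le\ell(\rollout,\rollout^{\demonstration})+\ell(\rollout^{\demonstration},\state^{\demonstration})$. Your hesitation is well founded: that inequality is false for squared distances in general (length-one scalar sequences $0,1,2$ give $4\not\le 1+1$); only $\sqrt{\ell}$ is a metric, so the valid conclusions are $\ell(\rollout,\state^{\demonstration})\le\bigl(\sqrt{\ell(\rollout,\rollout^{\demonstration})}+\sqrt{\ell(\rollout^{\demonstration},\state^{\demonstration})}\bigr)^2$, or the relaxation with a factor of $2$ on each term, exactly as you anticipated. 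So you have reproduced the substantive content of the paper's argument and, beyond that, correctly located a genuine gap in it: the additive bound (i)$+$(ii) as stated holds only up to the cross term $2\sqrt{\ell(\rollout,\rollout^{\demonstration})\,\ell(\rollout^{\demonstration},\state^{\demonstration})}$ (or up to constant factors), unless one additionally assumes the in-sample fit is exact, $\rollout^{\demonstration}=\state^{\demonstration}$, in which case the cross term vanishes and term (i) is zero.
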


The proof is outlined in \aref{app:proof_theo_ub}. \autoref{theo:ub} validates the intuition that a contractive policy performs effectively in deployment through upper-bounding the incurred loss. The bound consists of two terms. The first term (i) is a weighted sum of MSE between rollouts starting from the initial states in $\dataset$ and their corresponding demonstrations. These MSE values are precomputed, leaving only the coefficients $\lambda_\demonstration[\rollout_0]$ to be calculated for each $\rollout_0$. 
The second term (ii) accounts for uncertainty in $\rollout_0$. It decreases when $R$ is smaller, reflecting less uncertainty, and when $\policy$ is more contractive—either through a smaller $\contractionConst$ or a higher $\contractionRate$—as the influence of $\rollout_0$ fades more quickly.
\begin{figure}[ht]
  \centering
  \includegraphics[width=\linewidth]{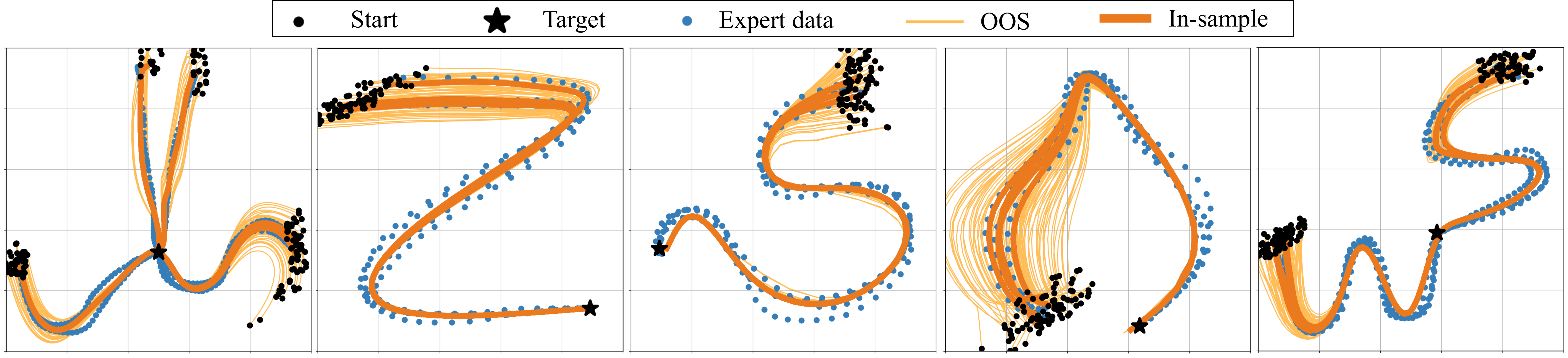}
  \caption{In-sample and \oos{} policy rollouts for selected 2D tasks in the LASA dataset. The training process promotes higher contraction rates ( \aref{app:learning_contraction_rate}), resulting in effective \oos{} recovery.}
  \label{fig:scds_lasa_policies}
\end{figure}
\begin{figure}[ht]
  \centering
  \includegraphics[width=\linewidth]{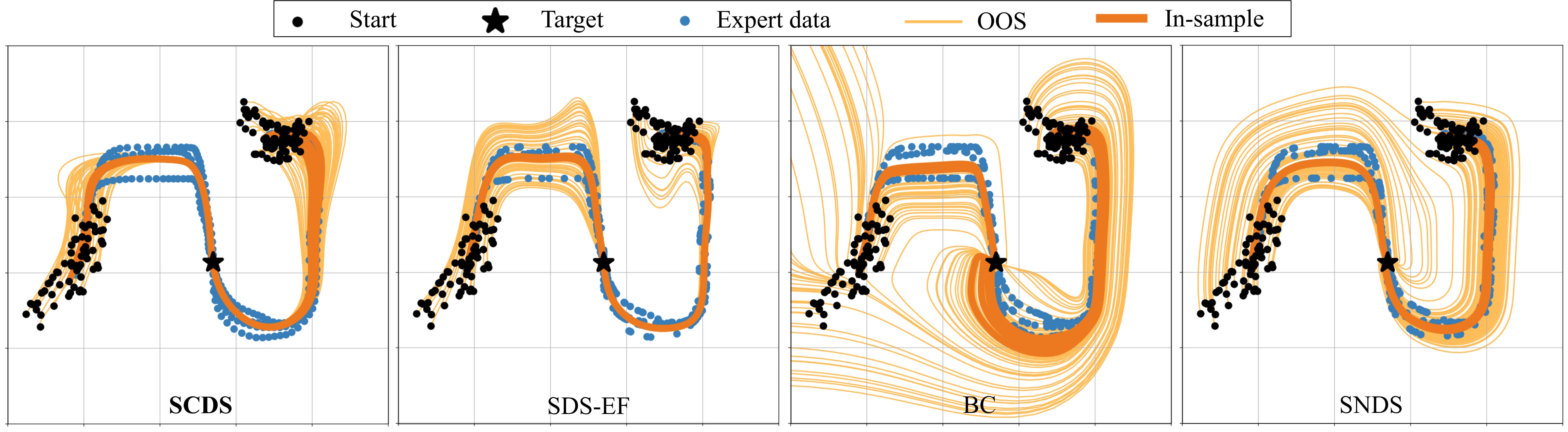}
  \caption{Comparing \ours{} to the selected baselines on in-sample and \oos{} rollouts in the 2D task space. BC results in diverging trajectories due to the lack of stability guarantees. While SNDS and SDS-EF ensure global stability and reach the target, they display large deviations from expert data. In contrast, \ours{} efficiently recovers from \oos{} states through its contracting transient behavior.}
  \label{fig:scds_lasa_baselines}
\end{figure}
Finally, we present an upper bound on the true loss.
\begin{corollary}\label{corol:ub}
    Under the same conditions as~\autoref{theo:ub}, the true loss $\lossTrue$ defined in~\autoref{eq:true_loss_def} is upper-bounded by:
    \begin{align}
        \lossTrue (\initStateDist; \; \policyParam)
        \leq
        \max_{\demonstration\in\{1,\cdots, \demonstrationCount\}} 
            \ell (\rollout^{\demonstration} , \state^{\demonstration}) 
        +
        \frac{
            \contractionConst^2 \, R^2 \, (e^{-2 \contractionRate} - 1)
        }{
            \rolloutLen \, M \,(e^{\frac{-2 \contractionRate}{ \rolloutLen}} - 1)
        }
        ,
        \nonumber
\end{align}
where $\initStateDist$ is any distribution over the set $\stateSpace_0$ satisfying~\autoref{assumption}.
\end{corollary}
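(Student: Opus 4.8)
The plan is to obtain the statement by pushing the pointwise bound of \autoref{theo:ub} through the expectation operator defining $\lossTrue$, after first replacing the $\rollout_0$-dependent weighted sum by a constant. The central observation is that the right-hand side of \autoref{theo:ub} can be upper-bounded by a quantity that does \emph{not} depend on $\rollout_0$ at all; once this is achieved, averaging over $\initStateDist$ becomes trivial because the expectation of a constant is that same constant. I would begin by recalling that, by construction in \autoref{eq:lambda}, the weights satisfy $\lambda_{\demonstration}(\rollout_0)\ge 0$ and $\sum_{\demonstration=1}^{\demonstrationCount}\lambda_{\demonstration}(\rollout_0)=1$ for every $\rollout_0$, so term (i) of \autoref{theo:ub} is a convex combination of the fixed numbers $\ell (\rollout^{\demonstration} , \state^{\demonstration})$.

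Any convex combination is dominated by the largest of the combined values, hence for each $\rollout_0 \in \stateSpace_0$,
\begin{align}
\sum_{\demonstration=1}^{\demonstrationCount} \lambda_{\demonstration}(\rollout_0) \, \ell (\rollout^{\demonstration} , \state^{\demonstration}) \leq \max_{\demonstration\in\{1,\cdots, \demonstrationCount\}} \ell (\rollout^{\demonstration} , \state^{\demonstration}).
\end{align}
Here each $\rollout^{\demonstration}$ is the solution of $ivp(\policy, \state^{\demonstration}_0)$ emanating from a \emph{dataset} initial condition, so the maximum on the right is a fixed number independent of the deployment-time state $\rollout_0$. Combining this with \autoref{theo:ub} and noting that term (ii) is already constant in $\rollout_0$ yields the pointwise constant bound
\begin{align}
\lossInitState (\rollout_0; \; \policyParam) \leq \max_{\demonstration\in\{1,\cdots, \demonstrationCount\}} \ell (\rollout^{\demonstration} , \state^{\demonstration}) + \frac{\contractionConst^2 \, R^2 \, (e^{-2 \contractionRate} - 1)}{\rolloutLen \, M \,(e^{\frac{-2 \contractionRate}{ \rolloutLen}} - 1)}, \qquad \forall \rollout_0 \in \stateSpace_0.
\end{align}

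Finally, I would take the expectation of both sides under $\rollout_0 \sim \initStateDist$. Because $\initStateDist$ is, by hypothesis, a distribution supported on a set $\stateSpace_0$ satisfying \autoref{assumption}, the previous inequality holds $\initStateDist$-almost surely, and monotonicity of expectation gives $\lossTrue (\initStateDist; \; \policyParam) = \E_{\rollout_0 \sim \initStateDist}\,\lossInitState (\rollout_0; \; \policyParam)$ bounded by the expectation of the right-hand side; since that right-hand side is a constant, its expectation equals itself, which is exactly the claimed bound. I do not expect a genuine obstacle: the only point demanding care is verifying that the hypotheses of \autoref{theo:ub} hold at every point in the support of $\initStateDist$, which is precisely guaranteed by requiring $\initStateDist$ to live on a region $\stateSpace_0$ of the form in \autoref{assumption}. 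The convex-combination step is the conceptual crux, but it follows immediately from the normalization of the weights in \autoref{eq:lambda}.
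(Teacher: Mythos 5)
Your proposal is correct and follows essentially the same route as the paper's own proof: bound the weighted sum (a convex combination, by the normalization of the weights in \autoref{eq:lambda}) by the maximum of the per-demonstration losses, observe that the resulting right-hand side is constant in $\rollout_0$, and then pass to the expectation over $\initStateDist$. No gaps.
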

The proof is provided in~\aref{app:proof_corol_ub}. Similar to~\autoref{theo:ub}, the bound tightens when the policy replicates the demonstrations more closely and the uncertainty level is lower, either due to a smaller $R$ and $\contractionConst$ or a larger $\contractionRate$.

\section{Experiments}
\label{sec:experiments}
We conduct empirical studies on the LASA dataset~\citep{khansari2011learning}, as well as the more complex Robomimic dataset~\citep{robomimic2021}. These datasets are detailed in~\aref{app:datasets}. 
We deploy our policies in the Isaac Lab (Orbit) simulator~\citep{mittal2023orbit} for the Franka Panda manipulator arm and the Clearpath Jackal wheeled robot. A highlight of the results is discussed here, while further experiments, computation time comparisons, and ablation studies on the impact of the contraction rate, the latent space, and the coupling layers are provided in \aref{app:additional_exps}. 

\subsection{Setup and baselines}
\label{sec:exp_setup}

\textbf{Baselines.} We compare the accuracy of our approach to the following baselines: Stable Neural Dynamical System (SNDS)~\citep{abyaneh2024globally}, Stable Dynamical System Learning using Euclideanizing Flows (SDS-EF)~\citep{rana2020euclideanizing}, and Behavioral Cloning (BC)~\citep{pomerleau1988behavioralcloning, mandlekar2020learning_bcrnn}.
Although BC does not guarantee stability, we include it in our empirical evaluation to provide further ground for comparison. Other baselines are trained using both states and their time derivatives, while \ours{} relies solely on state measurements for training. 
The implementation details and hyperparameters are discussed in \aref{app:implementation_baselines}.

\textbf{Evaluation. } After training each policy, we generate two sets of rollouts: one with the initial state drawn from the dataset $\dataset$ (in-sample rollouts), and another with the initial state not in $\dataset$ (\oos{} rollouts). 
To generate an \oos{} initial state, we first sample one of the demonstrations uniformly at random and then sample uniformly from a hyper-sphere around it. The hyper-sphere around $\state^{\demonstration}_0$ has a radius of $0.1 \Vert \state^{\demonstration}_0 \Vert_2$.\footnote{Since this is a bounded region, an encapsulating multi-focal ellipse exists for it.}
The loss from in-sample rollouts corresponds to the empirical loss defined in~\autoref{sec:trajectory_loss}, while the \oos{} loss approximates the true loss given by~\autoref{eq:true_loss_def}.
A low in-sample loss indicates the policy is expressive, whereas a low \oos{} loss signifies it generalizes well. We use MSE and soft-DTW as the metrics $\ell$ to capture the difference between rollouts and expert trajectories. 

\begin{figure}[t!]
  \centering
  \includegraphics[width=\linewidth]{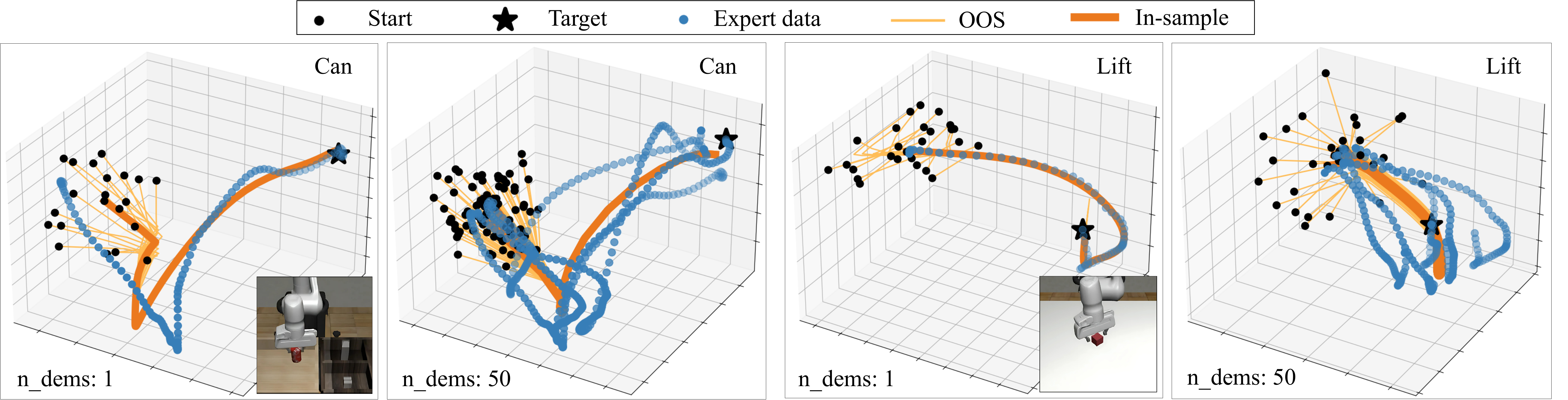}
  \caption{In-sample and \oos{} policy rollouts for the 6D \textit{Can} and \textit{Lift} tasks in the Robomimic dataset, using both $1$ and $50$ demonstrations. \ours{} generates contractive trajectories even in higher-dimensional spaces, where expert trajectories exhibit non-converging patterns.}
  \label{fig:scds_robomimic}
\end{figure}

\subsection{Results and discussion}
\label{sec:exp_results}

\textbf{LASA dataset.}  \autoref{fig:scds_lasa_policies} illustrates \oos{} rollouts for different expert handwriting demonstrations in the LASA-2D (position) and the LASA-4D (position and velocity) datasets.  For all random initial states, the induced trajectories reliably contract towards the expert demonstrations and accurately imitate them with high precision. 
Next, in \autoref{fig:scds_lasa_baselines}, we compare the performance of \ours{} with the introduced baselines for one of the motions. As shown, some trajectories generated by BC diverge from the target, which is expected since this baseline lacks stability guarantees. While the stable SDS-EF and SNDS baselines ensure convergence to the target, they lack efficient \oos{} recovery and deviate from expert data. 
Unlike the stable baselines, \ours{} generates smoothly contracting trajectories that converge to the target, offering a more precise replication of expert demonstrations. 

Subsequently, we extend our comparison to all motions in the LASA dataset and summarize the results for in-sample and \oos{} rollouts in \autoref{tab:imitation_accuracy_handwriting_robomimic}. We also consider a 4-dimensional version of the LASA dataset by replicating both the expert's position and velocity. As shown in the table, \ours{} outperforms or produces highly competitive results across all scenarios.

\textbf{Robomimic dataset.} We utilize the Robomimic dataset, specifically 6D (task space position and orientation) and 14D (joint position and velocity) expert demonstrations, to highlight the applicability of our method in high-dimensional spaces. Expert trajectories do not necessarily end at the same target, in contrast to the LASA dataset. \autoref{fig:scds_robomimic} shows \oos{} rollouts for policies trained on single and multiple expert trajectories. When multiple trajectories are used, the contractive policy learns to contract to an average behavior to accomplish the task. \autoref{tab:imitation_accuracy_handwriting_robomimic} presents results averaged across four tasks: Lift, Can, Square, and Transport. As shown, previous methods tend to overfit the dataset, resulting in high errors on \oos{} data. In contrast, \ours{} demonstrates strong generalization, reducing the best baseline result by up to a factor of $2.5$.

\textbf{Upper bound.} We compute the upper bound on the true loss when using the MSE metric following \autoref{corol:ub} and report it in \autoref{tab:imitation_accuracy_handwriting_robomimic}. The constant $\contractionConst$ that appears in the bound is approximated empirically using a Monte Carlo approach. As expected, the MSE upper bound is consistently larger than the observed loss in \autoref{tab:imitation_accuracy_handwriting_robomimic}, yet it remains relatively tight.

\textbf{Robot deployment.} Finally, we deploy the trained policies in two sample cases: i) the Robomimic-14D Lift manipulation task using the Franka Panda arm, and ii) a navigation task with the Clearpath Jackal wheeled robot following the Sine motion in the LASA-4D dataset. 
In both cases, the rollouts are tested in the Isaac Lab (Orbit) simulator~\citep{mittal2023orbit}.
The average \oos{} soft-DTW errors measured over $50$ different initial states are $3.68 \pm 0.33$ and $1.94 \pm 0.37$, respectively. Snapshots of random rollouts are illustrated in \autoref{fig:robot_sim_rollouts}.

\section{Conclusion, Limitations, and Future work}
\label{sec:conclusion}

We introduced \ours{}: a neural contractive imitation policy that ensures all trajectories contract over time and ultimately converge to the target. \ours{} offers robustness to \oos{} states throughout the transient phase, yielding stronger reliability than asymptotic stability. By parameterizing only contractive policies, \ours{} learns the policy through efficient unconstrained optimization. Performance is further enhanced by learning in a high-dimensional latent space and employing coupling layers. Finally, we provide generalization guarantees when the robot starts from unseen initial conditions. 

\textbf{Limitations.} Learning from rapidly changing expert data demands a longer horizon, and thus more computation. As such, there exists a trade-off between computational complexity and imitation accuracy for intricate expert behavior (\aref{app:horizon}). 

\newcommand{\pmcolor}{black!45}
\newcommand{\hlcolor}{cyan!20}
\newcommand{\insample}{yellow!40}
\newcommand{\outsample}{green!20}

\setlength{\fboxsep}{1.5pt}

\begin{table}[t!]
    \centering
    \caption{Evaluating \colorbox{\insample}{in-sample} and \colorbox{\outsample}{out-of-sample} rollouts error on the LASA and the Robomimic datasets. 
    The \colorbox{\hlcolor}{lowest (best) values} for each metric and dataset are highlighted. The expressive power of \ours{} enables it to achieve near state-of-the-art performance on in-sample data. More notably, it consistently outperforms other baselines in \oos{} recovery, and conforms with the theoretical upper bound for MSE loss, $\lossTrue_{\text{ub}}^{\text{MSE}}$, as derived in \autoref{sec:performance}.}
    \vspace{1em}
    
    \resizebox{\textwidth}{!}{
        \begin{tabular}{@{}ccccccccc@{}}
        \toprule
            \multicolumn{1}{c}{Expert $\;$} & \multicolumn{2}{c}{LASA-2D} & \multicolumn{2}{c}{LASA-4D} & \multicolumn{2}{c}{Robomimic-6D} & \multicolumn{2}{c}{Robomimic-14D} \\ \midrule
            Metric & MSE & soft-DTW & MSE & soft-DTW & MSE & soft-DTW & MSE & soft-DTW 
            \\ \midrule 
            \cellcolor{\insample}SNDS   & \colorbox{\hlcolor}{0.02} \textcolor{\pmcolor}{$\pm$ 0.01}  
                                        & 0.72 \textcolor{\pmcolor}{$\pm$ 0.14} 
                                        & \colorbox{\hlcolor}{0.03} \textcolor{\pmcolor}{$\pm$ 0.01}  
                                        & 1.04 \textcolor{\pmcolor}{$\pm$ 0.19}  
                                        & 0.65 \textcolor{\pmcolor}{$\pm$ 0.55} 
                                        & 1.26 \textcolor{\pmcolor}{$\pm$ 0.70}  
                                        & 2.42 \textcolor{\pmcolor}{$\pm$ 1.37} 
                                        & 4.15 \textcolor{\pmcolor}{$\pm$ 0.92} \\ 
                                        
            \cellcolor{\insample}BC     & 0.04 \textcolor{\pmcolor}{$\pm$ 0.02}           
                                        & 0.98 \textcolor{\pmcolor}{$\pm$ 0.12} 
                                        & 0.05 \textcolor{\pmcolor}{$\pm$ 0.03}  
                                        & 1.48 \textcolor{\pmcolor}{$\pm$ 0.16}   
                                        & 0.56 \textcolor{\pmcolor}{$\pm$ 0.32}
                                        & 1.88 \textcolor{\pmcolor}{$\pm$ 0.20}
                                        & 1.75 \textcolor{\pmcolor}{$\pm$ 0.22}
                                        & 4.86 \textcolor{\pmcolor}{$\pm$ 0.58} \\ 
                                        
            \cellcolor{\insample}SDS-EF & 0.03 \textcolor{\pmcolor}{$\pm$ 0.01}           
                                       & 0.85 \textcolor{\pmcolor}{$\pm$ 0.13} 
                                       & 0.05 \textcolor{\pmcolor}{$\pm$ 0.02}  
                                       & 1.10 \textcolor{\pmcolor}{$\pm$ 0.15}  
                                       & \colorbox{\hlcolor}{0.50} \textcolor{\pmcolor}{$\pm$ 0.28}
                                       & \colorbox{\hlcolor}{1.01} \textcolor{\pmcolor}{$\pm$ 0.66}
                                       & 3.30 \textcolor{\pmcolor}{$\pm$ 0.75}
                                       & 5.65 \textcolor{\pmcolor}{$\pm$ 0.58} \\ 
                                       
            \cellcolor{\insample}\textbf{\ours{}} &  \colorbox{\hlcolor}{0.02}  \textcolor{\pmcolor}{$\pm$ 0.01} 
                                          & \colorbox{\hlcolor}{0.65} \textcolor{\pmcolor}{$\pm$ 0.05} 
                                          &  \colorbox{\hlcolor}{0.03}  \textcolor{\pmcolor}{$\pm$ 0.01} 
                                          & \colorbox{\hlcolor}{0.72}  \textcolor{\pmcolor}{$\pm$ 0.12} 
                                          & 0.56 \textcolor{\pmcolor}{$\pm$ 0.22}
                                          & 1.05 \textcolor{\pmcolor}{$\pm$ 0.37}
                                          & \colorbox{\hlcolor}{1.68} \textcolor{\pmcolor}{$\pm$ 0.45}
                                          & \colorbox{\hlcolor}{4.10} \textcolor{\pmcolor}{$\pm$ 0.40} \\

            \midrule
            \cellcolor{\outsample}SNDS   & 2.73 \textcolor{\pmcolor}{$\pm$ 1.67}  
                                       & 6.91 \textcolor{\pmcolor}{$\pm$ 1.46}  
                                       & 3.65 \textcolor{\pmcolor}{$\pm$ 2.12} 
                                       & 9.85 \textcolor{\pmcolor}{$\pm$ 0.63}  
                                       & 1.58 \textcolor{\pmcolor}{$\pm$ 0.93} 
                                       & 3.27 \textcolor{\pmcolor}{$\pm$ 1.88}  
                                       & 6.88 \textcolor{\pmcolor}{$\pm$ 2.16} 
                                       & 12.17 \textcolor{\pmcolor}{$\pm$ 2.70}  \\ 
                                       
            \cellcolor{\outsample}BC     & 8.63 \textcolor{\pmcolor}{$\pm$ 4.05} 
                                       & 16.25 \textcolor{\pmcolor}{$\pm$ 5.27}  
                                       & 19.25 \textcolor{\pmcolor}{$\pm$ 7.34}  
                                       & 27.48 \textcolor{\pmcolor}{$\pm$ 6.83}  
                                       & 11.05 \textcolor{\pmcolor}{$\pm$ 7.41}
                                       & 19.94 \textcolor{\pmcolor}{$\pm$ 9.57}
                                       & 44.98 \textcolor{\pmcolor}{$\pm$ 15.11}
                                       & 37.82 \textcolor{\pmcolor}{$\pm$ 14.13} \\ 
                                       
            \cellcolor{\outsample}SDS-EF & 1.78 \textcolor{\pmcolor}{$\pm$ 0.34}  
                                       & 7.13 \textcolor{\pmcolor}{$\pm$ 1.51}  
                                       & 2.33 \textcolor{\pmcolor}{$\pm$ 0.47}  
                                       & 10.21 \textcolor{\pmcolor}{$\pm$ 1.98}  
                                       & 1.15 \textcolor{\pmcolor}{$\pm$ 0.81}
                                        & 2.67 \textcolor{\pmcolor}{$\pm$ 1.40}
                                        & 11.10 \textcolor{\pmcolor}{$\pm$ 2.10}
                                        & 10.44 \textcolor{\pmcolor}{$\pm$ 1.66} \\
                                       
            \cellcolor{\outsample}\textbf{\ours{}} & \colorbox{\hlcolor}{0.32} \textcolor{\pmcolor}{$\pm$ 0.15} 
                                                 & \colorbox{\hlcolor}{1.72} \textcolor{\pmcolor}{$\pm$ 0.54} 
                                                 & \colorbox{\hlcolor}{1.09} \textcolor{\pmcolor}{$\pm$ 0.21} 
                                                 & \colorbox{\hlcolor}{2.58} \textcolor{\pmcolor}{$\pm$ 0.30} 
                                                 & \colorbox{\hlcolor}{0.89} \textcolor{\pmcolor}{$\pm$ 0.26}
                                                 & \colorbox{\hlcolor}{1.71} \textcolor{\pmcolor}{$\pm$ 0.53}
                                                 & \colorbox{\hlcolor}{2.68} \textcolor{\pmcolor}{$\pm$ 0.65}
                                                 & \colorbox{\hlcolor}{6.27} \textcolor{\pmcolor}{$\pm$ 0.48}

            \\ 
            \midrule
            {$\lossTrue_{\text{ub}}^{\text{MSE}}$} & {0.49 $\pm$ 0.07} && {1.33 $\pm$ 0.14} && { 1.43 $\pm$ 0.22} && {2.91 $\pm$ 0.45} \\
            \bottomrule
        \end{tabular}
    }
    \label{tab:imitation_accuracy_handwriting_robomimic}
\end{table}

\begin{figure}[t!]
  \centering
  \includegraphics[width=\linewidth]{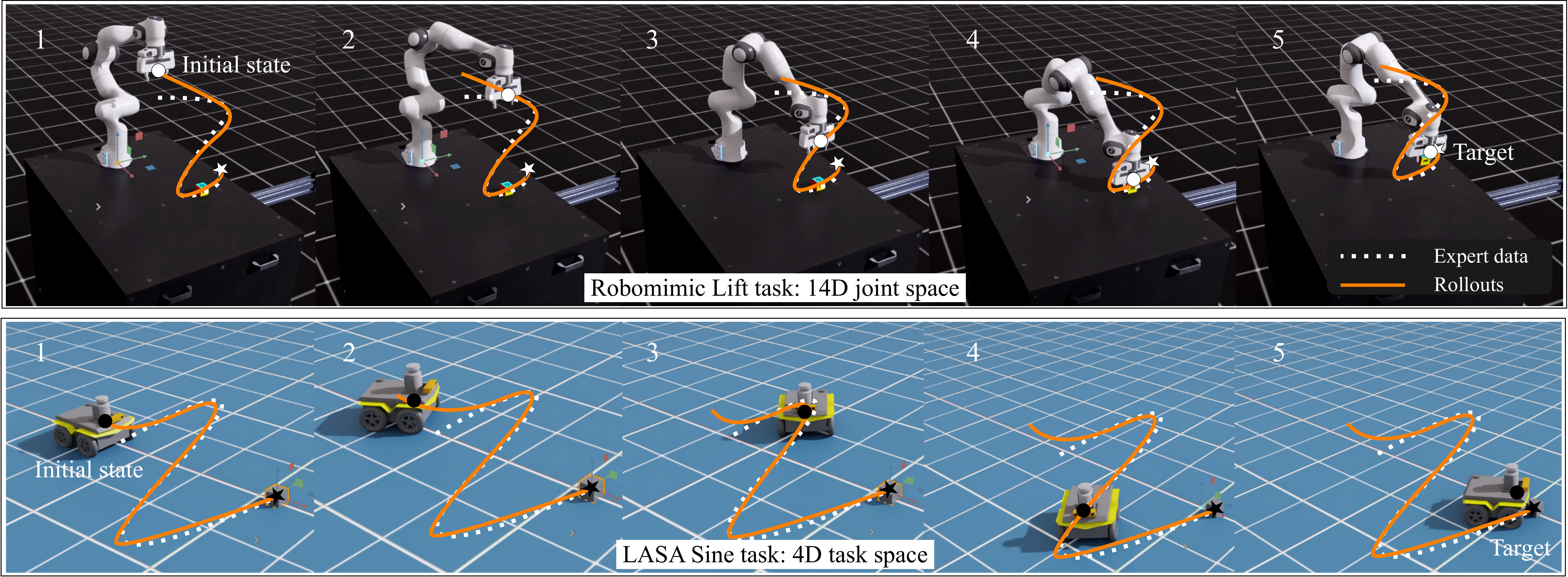}
  \caption{Simulation rollouts with Franka arm and Jackal mobile robots in Isaac Lab. Both robots start from a random \oos{} initial state and swiftly converge to expert data.}
  \label{fig:robot_sim_rollouts}
\end{figure}

\textbf{Future work.} Subsequent research may extend the current method to long-horizon expert trajectories while maintaining a reasonable computational load and the contractive nature of the policy. Additionally, RENs can be replaced by other parameterized contraction models. The applicability of \ours{} to other robotic tasks, such as quadrupeds and drones, is straightforward, and \ours{} can imitate intricate expert behavior in these scenarios. Finally, applications of contraction theory to stochastic imitation policies~\citep{airl_imitation} can be explored.

\subsubsection*{Acknowledgments}    This work was supported as a part of NCCR Automation, a National Centre of Competence in Research, funded by the Swiss National Science Foundation (grant number 51NF40\_225155) and the NECON project (grant number 200021\-219431).

\bibliography{main.bib}

\begin{thebibliography}{50}
\providecommand{\natexlab}[1]{#1}
\providecommand{\url}[1]{\texttt{#1}}
\expandafter\ifx\csname urlstyle\endcsname\relax
  \providecommand{\doi}[1]{doi: #1}\else
  \providecommand{\doi}{doi: \begingroup \urlstyle{rm}\Url}\fi

\bibitem[Abbeel \& Ng(2004)Abbeel and Ng]{abbeel2004apprenticeship}
Pieter Abbeel and Andrew~Y Ng.
\newblock Apprenticeship learning via inverse reinforcement learning.
\newblock In \emph{International {C}onference on {M}achine {L}earning}, pp.\ ~1, 2004.

\bibitem[Abyaneh \& Lin(2023)Abyaneh and Lin]{abyaneh2023plyds}
Amin Abyaneh and Hsiu-Chin Lin.
\newblock Learning {L}yapunov-stable polynomial dynamical systems through imitation.
\newblock In \emph{7th Annual Conference on Robot Learning}, 2023.

\bibitem[Abyaneh et~al.(2024)Abyaneh, Guzmán, and Lin]{abyaneh2024globally}
Amin Abyaneh, Mariana~Sosa Guzmán, and Hsiu-Chin Lin.
\newblock Globally stable neural imitation policies.
\newblock In \emph{IEEE International Conference on Robotics and Automation (ICRA)}, pp.\  15061--15067, 2024.

\bibitem[Bahl et~al.(2020)Bahl, Mukadam, Gupta, and Pathak]{bahl2020neural_ds_policy}
Shikhar Bahl, Mustafa Mukadam, Abhinav Gupta, and Deepak Pathak.
\newblock Neural dynamic policies for end-to-end sensorimotor learning.
\newblock \emph{Advances in Neural Information Processing Systems}, 33:\penalty0 5058--5069, 2020.

\bibitem[Blocher et~al.(2017)Blocher, Saveriano, and Lee]{blocher2017learning_contractive}
Caroline Blocher, Matteo Saveriano, and Dongheui Lee.
\newblock Learning stable dynamical systems using contraction theory.
\newblock In \emph{14th International Conference on Ubiquitous Robots and Ambient Intelligence (URAI)}, pp.\  124--129, 2017.

\bibitem[Boroujeni et~al.(2021)Boroujeni, Daneshman, Righetti, and Khadiv]{boroujeni2021unified}
Mahrokh~Ghoddousi Boroujeni, Elham Daneshman, Ludovic Righetti, and Majid Khadiv.
\newblock A unified framework for walking and running of bipedal robots.
\newblock In \emph{2021 20th International Conference on Advanced Robotics (ICAR)}, pp.\  396--403. IEEE, 2021.

\bibitem[Boroujeni et~al.(2024)Boroujeni, Galimberti, Krause, and Ferrari-Trecate]{boroujeni2024pac}
Mahrokh~Ghoddousi Boroujeni, Clara~Luc{\'\i}a Galimberti, Andreas Krause, and Giancarlo Ferrari-Trecate.
\newblock A {P}{A}{C}-{B}ayesian framework for optimal control with stability guarantees.
\newblock \emph{arXiv preprint arXiv:2403.17790}, 2024.

\bibitem[Chen et~al.(2018)Chen, Rubanova, Bettencourt, and Duvenaud]{chen2018neuralode}
Ricky~TQ Chen, Yulia Rubanova, Jesse Bettencourt, and David~K Duvenaud.
\newblock Neural ordinary differential equations.
\newblock \emph{Advances in {N}eural {I}nformation {P}rocessing {S}ystems}, 31, 2018.

\bibitem[Chi et~al.(2023)Chi, Xu, Feng, Cousineau, Du, Burchfiel, Tedrake, and Song]{chi2023diffusion_policy}
Cheng Chi, Zhenjia Xu, Siyuan Feng, Eric Cousineau, Yilun Du, Benjamin Burchfiel, Russ Tedrake, and Shuran Song.
\newblock Diffusion policy: {V}isuomotor policy learning via action diffusion.
\newblock \emph{The International Journal of Robotics Research}, pp.\  02783649241273668, 2023.

\bibitem[Cuturi \& Blondel(2017)Cuturi and Blondel]{cuturi2017soft_dtw}
Marco Cuturi and Mathieu Blondel.
\newblock Soft-{DTW}: a differentiable loss function for time-series.
\newblock In \emph{International {C}onference on {M}achine {L}earning}, pp.\  894--903. PMLR, 2017.

\bibitem[Dawson et~al.(2023)Dawson, Gao, and Fan]{dawson2023safe_control_learned_certificates}
Charles Dawson, Sicun Gao, and Chuchu Fan.
\newblock Safe control with learned certificates: A survey of neural lyapunov, barrier, and contraction methods for robotics and control.
\newblock \emph{IEEE Transactions on Robotics}, 39\penalty0 (3):\penalty0 1749--1767, 2023.

\bibitem[Devaney(2021)]{devaney2021introduction_dynamical_systems}
Robert~L Devaney.
\newblock \emph{An introduction to chaotic dynamical systems}.
\newblock CRC press, 2021.

\bibitem[Dinh et~al.(2017)Dinh, Sohl-Dickstein, and Bengio]{dinh2017density}
Laurent Dinh, Jascha Sohl-Dickstein, and Samy Bengio.
\newblock Density estimation using real {NVP}.
\newblock In \emph{International Conference on Learning Representations}, 2017.

\bibitem[Figueroa \& Billard(2018)Figueroa and Billard]{figueroa18a}
Nadia Figueroa and Aude Billard.
\newblock A physically-consistent bayesian non-parametric mixture model for dynamical system learning.
\newblock In \emph{2nd Annual Conference on Robot Learning}, pp.\  927--946, 2018.

\bibitem[Fu et~al.(2018)Fu, Luo, and Levine]{airl_imitation}
Justin Fu, Katie Luo, and Sergey Levine.
\newblock Learning robust rewards with adversarial inverse reinforcement learning.
\newblock In \emph{International Conference on Learning Representations}, 2018.

\bibitem[Ho \& Ermon(2016)Ho and Ermon]{ho2016generative}
Jonathan Ho and Stefano Ermon.
\newblock Generative adversarial imitation learning.
\newblock \emph{Advances in {N}eural {I}nformation {P}rocessing {S}ystems}, 29:\penalty0 4572–4580, 2016.

\bibitem[Hussein et~al.(2017)Hussein, Gaber, Elyan, and Jayne]{hussein2017imitation}
Ahmed Hussein, Mohamed~Medhat Gaber, Eyad Elyan, and Chrisina Jayne.
\newblock Imitation learning: A survey of learning methods.
\newblock \emph{ACM Computing Surveys (CSUR)}, 50\penalty0 (2):\penalty0 1--35, 2017.

\bibitem[Keogh \& Ratanamahatana(2005)Keogh and Ratanamahatana]{keogh2005dtw}
Eamonn Keogh and Chotirat~Ann Ratanamahatana.
\newblock Exact indexing of dynamic time warping.
\newblock \emph{Knowledge and {I}nformation {S}ystems}, 7:\penalty0 358--386, 2005.

\bibitem[Khansari-Zadeh \& Billard(2011)Khansari-Zadeh and Billard]{khansari2011learning}
S~Mohammad Khansari-Zadeh and Aude Billard.
\newblock Learning stable nonlinear dynamical systems with {G}aussian mixture models.
\newblock \emph{IEEE Transactions on Robotics}, 27\penalty0 (5):\penalty0 943--957, 2011.

\bibitem[Khansari{-}Zadeh \& Billard(2014)Khansari{-}Zadeh and Billard]{khansari2014learning}
S~Mohammad Khansari{-}Zadeh and Aude Billard.
\newblock Learning control {L}yapunov function to ensure stability of dynamical system-based robot reaching motions.
\newblock \emph{Robotics and Autonomous Systems}, 62\penalty0 (6):\penalty0 752--765, 2014.

\bibitem[Kolter \& Manek(2019)Kolter and Manek]{manek2019learning}
J~Zico Kolter and Gaurav Manek.
\newblock Learning stable deep dynamics models.
\newblock \emph{Advances in {N}eural {I}nformation {P}rocessing {S}ystems}, 32, 2019.

\bibitem[Laskey et~al.(2017)Laskey, Lee, Fox, Dragan, and Goldberg]{laskey2017dart}
Michael Laskey, Jonathan Lee, Roy Fox, Anca Dragan, and Ken Goldberg.
\newblock D{ART}: Noise injection for robust imitation learning.
\newblock In \emph{1st Conference on {R}obot {L}earning}, 2017.

\bibitem[Lohmiller \& Slotine(1998)Lohmiller and Slotine]{lohmiller1998on_contraction}
Winfried Lohmiller and Jean-Jacques~E. Slotine.
\newblock On contraction analysis for non-linear systems.
\newblock \emph{Autom.}, 34:\penalty0 683--696, 1998.

\bibitem[Manchester \& Slotine(2015)Manchester and Slotine]{Manchester2015ControlCM}
Ian~R. Manchester and Jean-Jacques~E. Slotine.
\newblock Control contraction metrics: Convex and intrinsic criteria for nonlinear feedback design.
\newblock \emph{IEEE Transactions on Automatic Control}, 62:\penalty0 3046--3053, 2015.

\bibitem[Mandlekar et~al.(2020)Mandlekar, Xu, Martín-Martín, Savarese, and Fei-Fei]{mandlekar2020learning_bcrnn}
Ajay Mandlekar, Danfei Xu, Roberto Martín-Martín, Silvio Savarese, and Li~Fei-Fei.
\newblock {GTI: Learning to Generalize across Long-Horizon Tasks from Human Demonstrations}.
\newblock In \emph{Proceedings of Robotics: Science and Systems}, Corvalis, Oregon, USA, July 2020.

\bibitem[Mandlekar et~al.(2021)Mandlekar, Xu, Wong, Nasiriany, Wang, Kulkarni, Fei-Fei, Savarese, Zhu, and Mart{\'\i}n-Mart{\'\i}n]{robomimic2021}
Ajay Mandlekar, Danfei Xu, Josiah Wong, Soroush Nasiriany, Chen Wang, Rohun Kulkarni, Li~Fei-Fei, Silvio Savarese, Yuke Zhu, and Roberto Mart{\'\i}n-Mart{\'\i}n.
\newblock What matters in learning from offline human demonstrations for robot manipulation.
\newblock In \emph{5th Annual Conference on Robot Learning}, 2021.

\bibitem[Martin \& Furieri(2024)Martin and Furieri]{martin2024learning}
Andrea Martin and Luca Furieri.
\newblock Learning to optimize with convergence guarantees using nonlinear system theory.
\newblock \emph{IEEE Control Systems Letters}, 2024.

\bibitem[Martinelli et~al.(2023)Martinelli, Galimberti, Manchester, Furieri, and Ferrari-Trecate]{martinelli2023unconstrained}
Daniele Martinelli, Clara~Lucía Galimberti, Ian~R. Manchester, Luca Furieri, and Giancarlo Ferrari-Trecate.
\newblock Unconstrained parametrization of dissipative and contracting neural ordinary differential equations.
\newblock In \emph{2023 62nd IEEE Conference on Decision and Control (CDC)}, pp.\  3043--3048, 2023.

\bibitem[Menda et~al.(2019)Menda, Driggs-Campbell, and Kochenderfer]{dagger_ensemble2019}
Kunal Menda, Katherine Driggs-Campbell, and Mykel~J. Kochenderfer.
\newblock Ensemble{D}{A}gger: A bayesian approach to safe imitation learning.
\newblock In \emph{2019 IEEE/RSJ International Conference on Intelligent Robots and Systems (IROS)}, pp.\  5041--5048, 2019.

\bibitem[Mittal et~al.(2023)Mittal, Yu, Yu, Liu, Rudin, Hoeller, Yuan, Singh, Guo, Mazhar, Mandlekar, Babich, State, Hutter, and Garg]{mittal2023orbit}
Mayank Mittal, Calvin Yu, Qinxi Yu, Jingzhou Liu, Nikita Rudin, David Hoeller, Jia~Lin Yuan, Ritvik Singh, Yunrong Guo, Hammad Mazhar, Ajay Mandlekar, Buck Babich, Gavriel State, Marco Hutter, and Animesh Garg.
\newblock Orbit: A unified simulation framework for interactive robot learning environments.
\newblock \emph{IEEE Robotics and Automation Letters}, 8\penalty0 (6):\penalty0 3740--3747, 2023.

\bibitem[Mohammadi et~al.(2024)Mohammadi, Hauberg, Arvanitidis, Figueroa, Neumann, and Rozo]{neural_contractive_beik2024neural}
Hadi~Beik Mohammadi, S{\o}ren Hauberg, Georgios Arvanitidis, Nadia Figueroa, Gerhard Neumann, and Leonel Rozo.
\newblock Neural contractive dynamical systems.
\newblock In \emph{The Twelfth International Conference on Learning Representations}, 2024.

\bibitem[Neumann \& Steil(2015)Neumann and Steil]{neumann2015learning}
Klaus Neumann and Jochen~J Steil.
\newblock Learning robot motions with stable dynamical systems under diffeomorphic transformations.
\newblock \emph{Robotics and Autonomous Systems}, 70:\penalty0 1--15, 2015.

\bibitem[Papamakarios et~al.(2021)Papamakarios, Nalisnick, Rezende, Mohamed, and Lakshminarayanan]{papamakarios2021normalizing}
George Papamakarios, Eric Nalisnick, Danilo~Jimenez Rezende, Shakir Mohamed, and Balaji Lakshminarayanan.
\newblock Normalizing flows for probabilistic modeling and inference.
\newblock \emph{Journal of Machine Learning Research}, 22\penalty0 (57):\penalty0 1--64, 2021.

\bibitem[Paszke et~al.(2019)Paszke, Gross, Massa, Lerer, Bradbury, Chanan, Killeen, Lin, Gimelshein, Antiga, Desmaison, Kopf, Yang, DeVito, Raison, Tejani, Chilamkurthy, Steiner, Fang, Bai, and Chintala]{torch}
Adam Paszke, Sam Gross, Francisco Massa, Adam Lerer, James Bradbury, Gregory Chanan, Trevor Killeen, Zeming Lin, Natalia Gimelshein, Luca Antiga, Alban Desmaison, Andreas Kopf, Edward Yang, Zachary DeVito, Martin Raison, Alykhan Tejani, Sasank Chilamkurthy, Benoit Steiner, Lu~Fang, Junjie Bai, and Soumith Chintala.
\newblock {P}y{T}orch: An imperative style, high-performance deep learning library.
\newblock In H.~Wallach, H.~Larochelle, A.~Beygelzimer, F.~d\textquotesingle Alch\'{e}-Buc, E.~Fox, and R.~Garnett (eds.), \emph{Advances in {N}eural {I}nformation {P}rocessing {S}ystems 32}, pp.\  8024--8035. Curran Associates, Inc., 2019.

\bibitem[Peng et~al.(2019)Peng, Kanazawa, Toyer, Abbeel, and Levine]{peng2018variational_vail}
Xue~Bin Peng, Angjoo Kanazawa, Sam Toyer, Pieter Abbeel, and Sergey Levine.
\newblock Variational discriminator bottleneck: Improving imitation learning, inverse {RL}, and {GAN}s by constraining information flow.
\newblock In \emph{International Conference on Learning Representations}, 2019.

\bibitem[Penrose(1956)]{Penrose1956}
R.~Penrose.
\newblock On best approximate solutions of linear matrix equations.
\newblock \emph{Mathematical Proceedings of the Cambridge Philosophical Society}, 52\penalty0 (1):\penalty0 17–19, 1956.
\newblock \doi{10.1017/S0305004100030929}.

\bibitem[Pomerleau(1988)]{pomerleau1988behavioralcloning}
Dean~A Pomerleau.
\newblock {ALVINN}: An autonomous land vehicle in a neural network.
\newblock In \emph{Advances in {N}eural {I}nformation {P}rocessing {S}ystems}, volume~1, pp.\  305--313, 1988.

\bibitem[Rana et~al.(2020)Rana, Li, Fox, Boots, Ramos, and Ratliff]{rana2020euclideanizing}
Muhammad~Asif Rana, Anqi Li, Dieter Fox, Byron Boots, Fabio Ramos, and Nathan Ratliff.
\newblock Euclideanizing flows: Diffeomorphic reduction for learning stable dynamical systems.
\newblock In \emph{Learning for Dynamics and Control}, pp.\  630--639. PMLR, 2020.

\bibitem[Ravichandar et~al.(2017)Ravichandar, Salehi, and Dani]{ravichandar2017learning}
Harish Ravichandar, Iman Salehi, and Ashwin Dani.
\newblock Learning partially contracting dynamical systems from demonstrations.
\newblock In \emph{1st Annual Conference on Robot Learning}, 2017.

\bibitem[Ravichandar et~al.(2020)Ravichandar, Polydoros, Chernova, and Billard]{ravichandar2020advances_LfD}
Harish Ravichandar, Athanasios~S Polydoros, Sonia Chernova, and Aude Billard.
\newblock Recent advances in robot learning from demonstration.
\newblock \emph{Annual review of control, robotics, and autonomous systems}, 3\penalty0 (1):\penalty0 297--330, 2020.

\bibitem[Revay et~al.(2023)Revay, Wang, and Manchester]{revay2023recurrent}
Max Revay, Ruigang Wang, and Ian~R Manchester.
\newblock Recurrent equilibrium networks: Flexible dynamic models with guaranteed stability and robustness.
\newblock \emph{IEEE Transactions on Automatic Control}, 2023.

\bibitem[Ross \& Bagnell(2010)Ross and Bagnell]{ross2010efficient_dagger}
St{\'e}phane Ross and Drew Bagnell.
\newblock Efficient reductions for imitation learning.
\newblock In \emph{Proceedings of the thirteenth international conference on artificial intelligence and statistics}, pp.\  661--668. JMLR Workshop and Conference Proceedings, 2010.

\bibitem[Sindhwani et~al.(2018)Sindhwani, Tu, and Khansari]{sindhwani2018learning_contractive}
Vikas Sindhwani, Stephen Tu, and Mohi Khansari.
\newblock Learning contracting vector fields for stable imitation learning.
\newblock \emph{arXiv preprint arXiv:1804.04878}, 2018.

\bibitem[Sochopoulos et~al.(2024)Sochopoulos, Gienger, and Vijayakumar]{sochopoulos2024learningNode}
Andreas Sochopoulos, Michael Gienger, and Sethu Vijayakumar.
\newblock Learning deep dynamical systems using stable neural {O}{D}{E}s.
\newblock In \emph{Proceedings of the 2024 IEEE/RSJ International Conference on Intelligent Robots and Systems}, 2024.

\bibitem[Tabak \& Turner(2013)Tabak and Turner]{tabak2013family}
Esteban~G Tabak and Cristina~V Turner.
\newblock A family of nonparametric density estimation algorithms.
\newblock \emph{Communications on Pure and Applied Mathematics}, 66\penalty0 (2):\penalty0 145--164, 2013.

\bibitem[Torabi et~al.(2018)Torabi, Warnell, and Stone]{torabi2018behavioral}
Faraz Torabi, Garrett Warnell, and Peter Stone.
\newblock Behavioral cloning from observation.
\newblock In \emph{Proceedings of the 27th International Joint Conference on Artificial Intelligence}, pp.\  4950--4957, 2018.

\bibitem[Tsukamoto et~al.(2021)Tsukamoto, Chung, and Slotine]{tsukamoto2021contraction}
Hiroyasu Tsukamoto, Soon-Jo Chung, and Jean-Jaques~E Slotine.
\newblock Contraction theory for nonlinear stability analysis and learning-based control: A tutorial overview.
\newblock \emph{Annual Reviews in Control}, 52:\penalty0 135--169, 2021.

\bibitem[Vincze(1982)]{Vincze_1982}
P.~Erdös~I. Vincze.
\newblock On the approximation of convex, closed plane curves by multifocal ellipses.
\newblock \emph{Journal of Applied Probability}, 19\penalty0 (A):\penalty0 89–96, 1982.

\bibitem[Zhang et~al.(2022)Zhang, Mohammadi, and Rozo]{zhang2022riemaniandiffeomorphism}
Jiechao Zhang, Hadi~Beik Mohammadi, and Leonel Rozo.
\newblock Learning {R}iemannian stable dynamical systems via diffeomorphisms.
\newblock In \emph{6th Annual Conference on Robot Learning}, 2022.

\bibitem[Ziebart et~al.(2008)Ziebart, Maas, Bagnell, Dey, et~al.]{ziebart2008maximumentropy}
Brian~D Ziebart, Andrew~L Maas, J~Andrew Bagnell, Anind~K Dey, et~al.
\newblock Maximum entropy inverse reinforcement learning.
\newblock In \emph{Association for the Advancement of Artificial Intelligence}, volume~8, pp.\  1433--1438, 2008.

\end{thebibliography}
\bibliographystyle{iclr2025_conference}

\clearpage
\appendix

\renewcommand \thepart{}
\renewcommand \partname{}

\doparttoc %
\faketableofcontents %

\part{Appendix}\label{appendix} %
\parttoc %

\clearpage
\section{Mathematical proofs}
\label{app:proofs}

\subsection{Proposition~\ref{proposition:contractive_g}}\label{app:proof_lemma_contractive_g}
We first state and prove a classic linear algebra lemma.
\begin{lemma}\label{lemma:courant}
    Let $\boldsymbol{P} \in \R^{m \times n}$ be an arbitrary matrix and $\sigma_{min}(\boldsymbol{P})$ denote its smallest singular value. For every vector $\boldsymbol{v} \in \R^n$, it holds that:
    \begin{equation*}
        \sigma_{min}(\boldsymbol{P})^2
        \leq 
        \frac{\Vert \boldsymbol{P} \boldsymbol{v} \Vert_2^2}{\Vert \boldsymbol{v} \Vert_2^2}.
    \end{equation*}
\end{lemma}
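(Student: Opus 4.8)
The plan is to recognize this as a direct consequence of the Rayleigh quotient characterization of the smallest eigenvalue of a symmetric positive semidefinite matrix — the Courant--Fischer theorem that lends the lemma its name. The entire argument rests on passing from $\boldsymbol{P}$ to its Gram matrix $\boldsymbol{P}^T\boldsymbol{P}$.

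First I would rewrite the numerator on the right-hand side as a quadratic form: for any nonzero $\boldsymbol{v} \in \R^n$,
\[
\Vert \boldsymbol{P}\boldsymbol{v}\Vert_2^2 = (\boldsymbol{P}\boldsymbol{v})^T(\boldsymbol{P}\boldsymbol{v}) = \boldsymbol{v}^T \boldsymbol{P}^T\boldsymbol{P}\, \boldsymbol{v},
\]
so that the ratio in the statement is exactly the Rayleigh quotient of the matrix $A \triangleq \boldsymbol{P}^T\boldsymbol{P}$. (The case $\boldsymbol{v} = 0$ is implicitly excluded, since the quotient is then undefined.) Alongside this I would record the defining relation $\sigma_{min}(\boldsymbol{P})^2 = \lambda_{min}(\boldsymbol{P}^T\boldsymbol{P})$, i.e. the squared singular values of $\boldsymbol{P}$ are the eigenvalues of its Gram matrix.

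Next, since $A = \boldsymbol{P}^T\boldsymbol{P}$ is symmetric and positive semidefinite, the spectral theorem supplies an orthonormal eigenbasis $\{\boldsymbol{u}_i\}_{i=1}^n$ with real eigenvalues $\lambda_1 \geq \cdots \geq \lambda_n \geq 0$, where $\lambda_n = \lambda_{min}(A) = \sigma_{min}(\boldsymbol{P})^2$. Expanding $\boldsymbol{v} = \sum_{i=1}^n c_i \boldsymbol{u}_i$ and invoking orthonormality gives $\Vert \boldsymbol{v}\Vert_2^2 = \sum_i c_i^2$ together with
\[
\boldsymbol{v}^T A \boldsymbol{v} = \sum_{i=1}^n \lambda_i\, c_i^2 \;\geq\; \lambda_n \sum_{i=1}^n c_i^2 = \sigma_{min}(\boldsymbol{P})^2 \, \Vert \boldsymbol{v}\Vert_2^2,
\]
and dividing by $\Vert \boldsymbol{v}\Vert_2^2 > 0$ yields the claim. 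An equivalent route replaces the eigen-expansion with the singular value decomposition $\boldsymbol{P} = U\Sigma V^T$ and the substitution $\boldsymbol{w} = V^T\boldsymbol{v}$, exploiting the orthogonal invariance $\Vert \boldsymbol{w}\Vert_2 = \Vert \boldsymbol{v}\Vert_2$.

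I do not anticipate a genuine obstacle, as the result is standard; the only point requiring care is the convention for $\sigma_{min}$. To keep the statement valid for an arbitrary (possibly wide, rank-deficient) $\boldsymbol{P}$, one should read $\sigma_{min}(\boldsymbol{P})^2$ as $\lambda_{min}(\boldsymbol{P}^T\boldsymbol{P})$, which may be zero when $n > m$; in that degenerate case the bound holds trivially. In the paper's application $\boldsymbol{P} = \linearProj$ is taken with full column rank, so $\sigma_{min}(\linearProj) > 0$ and the bound is informative for establishing \autoref{proposition:contractive_g}.
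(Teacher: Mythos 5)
Your proposal is correct and follows essentially the same route as the paper's proof: rewrite the ratio as the Rayleigh quotient of the Gram matrix $\boldsymbol{P}^T\boldsymbol{P}$, apply the Courant--Fischer bound (which you additionally derive from the spectral theorem rather than merely citing), and identify $\lambda_{min}(\boldsymbol{P}^T\boldsymbol{P})$ with $\sigma_{min}(\boldsymbol{P})^2$. Your added remarks on the nonzero-$\boldsymbol{v}$ convention and the possibly rank-deficient case are sensible clarifications but do not change the argument.
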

\begin{proof}
    We start by expanding the right-hand side, as $\Vert \boldsymbol{P} \boldsymbol{v} \Vert_2^2 / \Vert \boldsymbol{v} \Vert_2^2
    = \boldsymbol{v}^T \boldsymbol{P}^T \boldsymbol{P} \boldsymbol{v} / \Vert \boldsymbol{v} \Vert_2^2$.
    By the Courant–Fischer Theorem\footnote{For every square matrix $\boldsymbol{A} \in \R^{n \times n}$ with smallest eigenvalue $\lambda_{min}(\boldsymbol{A})$ and every vector $\boldsymbol{v} \in \R^n$, it holds that $\lambda_{min}(\boldsymbol{A})
        \leq 
        \boldsymbol{v}^T \boldsymbol{A} \boldsymbol{v} / \Vert \boldsymbol{v} \Vert_2^2$.}, we obtain
    $
        \lambda_{min}(\boldsymbol{P}^T \boldsymbol{P})
        \leq 
        \boldsymbol{v}^T \boldsymbol{P}^T \boldsymbol{P} \boldsymbol{v} / \Vert \boldsymbol{v} \Vert_2^2
    $.
    Finally, since the singular values of $\boldsymbol{P}$ are the square roots of the eigenvalues of $\boldsymbol{P}^T \boldsymbol{P}$, one has $\lambda_{min}(\boldsymbol{P}^T \boldsymbol{P}) = \sigma_{min}(\boldsymbol{P})^2$. This completes the proof. 
\end{proof}
Below, we present the proof of Proposition~\ref{proposition:contractive_g}. 
\begin{proof}
    Let $\rollout_0^a$ and $\rollout_0^b$ $\in \R^{\stateDim}$ represent two arbitrary initial states, and denote the state and latent state trajectories obtained by solving~\autoref{eq:policy} as $\state^a, \state^b$ and $\latentState^a, \latentState^b$, respectively. 
    The proof proceeds in two steps. First, we show that the linear projection preserves the contraction behavior at the same rate. 
    We arrive at the following inequality through the following algebraic operations:
    \begin{align}
        \bigl\Vert 
            \linearProj \, \latentState^a(t) - \linearProj \, \latentState^b(t) 
        \bigr\Vert    
    &\le 
        \bigl\Vert 
            \linearProj 
        \bigr\Vert 
        \bigl\Vert
            \latentState^a(t) - \latentState^b(t) 
        \bigr\Vert 
        \quad &(\textit{norm definition})
    \nonumber \\
    &\le
        \contractionConst 
        \bigl\Vert 
            \linearProj 
        \bigr\Vert 
        e^{- \contractionRate t}
        \bigl\Vert 
            \latentState^a(0) - \latentState^b(0)
        \bigr\Vert 
        \quad &(\latentState \textit{ satisfies~\autoref{eq:definition_contracting_system}})
    \nonumber
    \\
    &\le
        \frac{\contractionConst  \bigl\Vert
            \linearProj
        \bigr\Vert }{\sigma_{min}^2(\linearProj)} 
        e^{- \contractionRate t}
        \bigl\Vert 
            \linearProj \, \latentState^a(0) - \linearProj \, \latentState^b(0)
        \bigr\Vert, 
        \quad &(\textit{Lemma~\ref{lemma:courant}})
        \label{eq:proof_sigma}
    \end{align}
    where $\sigma_{min}(\linearProj)$ denotes the smallest singular value of $\linearProj$.
    Hence, with $\contractionConst^\prime = \frac{\contractionConst  \bigl\Vert \linearProj \bigr\Vert }{\sigma_{min}^2(\linearProj)}$ as the new constant, it follows from~\autoref{eq:proof_sigma} that $\linearProj \, \latentState$ satisfies the condition in~\autoref{eq:definition_contracting_system}. 
    
    Furthermore, it is established in~\citep{Manchester2015ControlCM} that bijective maps preserve contractivity. It can also be concluded from the proof in~\citep{Manchester2015ControlCM} that \emph{$\contractionRate$ remains unchanged}. Combined with the first step, this proves the proposition.
\end{proof}

\subsection{Theorem~\ref{theo:ub}}\label{app:proof_theo_ub}
We first state and prove a basic inequality.
\begin{lemma}\label{lemma:invinvineq}
    For every $x_1, \cdots, x_\demonstrationCount \in \R_+$, it holds that $\frac{\demonstrationCount}{\sum_{\demonstration=1}^{\demonstrationCount} 1/x_\demonstration} \leq \frac{\sum_{\demonstration=1}^{\demonstrationCount} x_\demonstration}{\demonstrationCount}$. 
\end{lemma}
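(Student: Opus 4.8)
The plan is to recognize this as the arithmetic mean--harmonic mean (AM--HM) inequality and to prove it through a single application of the Cauchy--Schwarz inequality. Since every $x_\demonstration \in \R_+$ is strictly positive, both $\sum_{\demonstration=1}^{\demonstrationCount} x_\demonstration$ and $\sum_{\demonstration=1}^{\demonstrationCount} 1/x_\demonstration$ are positive, so I may clear denominators: the claimed inequality is equivalent to $\demonstrationCount^2 \leq \bigl(\sum_{\demonstration=1}^{\demonstrationCount} x_\demonstration\bigr)\bigl(\sum_{\demonstration=1}^{\demonstrationCount} 1/x_\demonstration\bigr)$.

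First I would apply Cauchy--Schwarz to the vectors $(\sqrt{x_1}, \ldots, \sqrt{x_{\demonstrationCount}})$ and $(1/\sqrt{x_1}, \ldots, 1/\sqrt{x_{\demonstrationCount}})$. Their inner product is $\sum_{\demonstration=1}^{\demonstrationCount} \sqrt{x_\demonstration}\cdot 1/\sqrt{x_\demonstration} = \demonstrationCount$, while the product of their squared norms is exactly $\bigl(\sum_{\demonstration} x_\demonstration\bigr)\bigl(\sum_{\demonstration} 1/x_\demonstration\bigr)$. Cauchy--Schwarz then yields $\demonstrationCount^2 \leq \bigl(\sum_\demonstration x_\demonstration\bigr)\bigl(\sum_\demonstration 1/x_\demonstration\bigr)$, and dividing through by the two positive sums recovers the stated bound.

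There is essentially no substantive obstacle here, as this is a textbook inequality; the only point requiring care is the positivity hypothesis $x_\demonstration \in \R_+$, which guarantees that the square roots and reciprocals are well defined and that cross-multiplying preserves the direction of the inequality. An alternative I could use instead is Jensen's inequality applied to the convex function $t \mapsto 1/t$ on $\R_+$, which gives $1/\bar{x} \leq \frac{1}{\demonstrationCount}\sum_\demonstration 1/x_\demonstration$ with $\bar{x}$ the arithmetic mean; but the Cauchy--Schwarz route is the most self-contained and avoids invoking any convexity machinery.
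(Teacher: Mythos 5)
Your proof is correct, but it takes a different route from the paper's. Both arguments begin by clearing denominators to reduce the claim to $\demonstrationCount^2 \leq \bigl(\sum_{\demonstration} x_\demonstration\bigr)\bigl(\sum_{\demonstration} 1/x_\demonstration\bigr)$; from there you invoke Cauchy--Schwarz on the vectors $(\sqrt{x_\demonstration})_\demonstration$ and $(1/\sqrt{x_\demonstration})_\demonstration$, whereas the paper expands the product into the double sum $\frac{1}{\demonstrationCount^2}\sum_{\demonstration}\sum_{\demonstration'} x_{\demonstration'}/x_\demonstration$ and applies the AM--GM inequality to these $\demonstrationCount^2$ positive terms, whose product telescopes to $1$. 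Your Cauchy--Schwarz argument is the more compact of the two: it is a genuine one-liner once the right vectors are chosen, and it makes the equality case ($x_1=\cdots=x_\demonstrationCount$) transparent from the equality condition of Cauchy--Schwarz. The paper's AM--GM route avoids any appeal to inner-product structure and stays entirely within elementary manipulations of positive reals, at the cost of an extra expansion step. Your handling of the positivity hypothesis (needed for the square roots, the reciprocals, and the direction-preserving cross-multiplication) is careful and matches the role $x_\demonstration \in \R_+$ plays in the paper's version. Either proof is fully adequate for the way the lemma is used in Theorem~\ref{theo:ub}, namely to bound $\demonstrationCount/\sum_\demonstration \Vert \rollout_0 - \state^{\demonstration}_0\Vert_2^{-2}$ by the arithmetic mean of the squared distances.
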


\begin{proof}[Proof of Lemma~\ref{lemma:invinvineq}]
    We begin by rewriting the lemma as follows:
    \begin{align}
        \frac{\demonstrationCount}{\sum_{\demonstration=1}^{\demonstrationCount} \frac{1}{x_\demonstration}} \leq \frac{\sum_{\demonstration=1}^{\demonstrationCount} x_\demonstration}{\demonstrationCount}
        &\iff
        1 \leq
        \frac{1}{\demonstrationCount^2} \sum_{\demonstration=1}^{\demonstrationCount} 
        \Bigl(
            \frac{1}{x_\demonstration}
        \Bigr)
        \Bigl(
            \sum_{\demonstration=1}^{\demonstrationCount} x_\demonstration
        \Bigr)
        \nonumber \\
        &\iff
        1 \leq
        \frac{1}{\demonstrationCount^2} \sum_{\demonstration=1}^{\demonstrationCount} \sum_{\demonstration'=1}^{\demonstrationCount} \frac{x_{\demonstration'}}{x_\demonstration}.\label{eq:invinveq}
    \end{align}
    Since $x_\demonstration \in \R_+$ for all $\demonstration \in \{1, \cdots, \demonstrationCount\}$, we apply the inequality of arithmetic and geometric means\footnote{This inequality states that the arithmetic mean of a list of non-negative real numbers is greater than or equal to the geometric mean of the same list.} to derive:
    \begin{align}
        \frac{1}{\demonstrationCount^2} \sum_{\demonstration=1}^{\demonstrationCount} \sum_{\demonstration'=1}^{\demonstrationCount} \frac{x_{\demonstration'}}{x_\demonstration} \geq (\prod_{\demonstration=1}^{\demonstrationCount} \prod_{\demonstration'=1}^{\demonstrationCount} \frac{x_{\demonstration'}}{x_\demonstration})^{1/{M^2}} = 1.
    \end{align}
    This inequality satisfies \eqref{eq:invinveq}, thereby completing the proof.
\end{proof}

\begin{proof}[Proof of \autoref{theo:ub}]
    
Using~\autoref{eq:loss_convex_comb} and the triangle inequality for MSE, we have:
\begin{align}
     \lossInitState (\rollout_0; \; \policyParam)
    &=
    \sum_{\demonstration=1}^{\demonstrationCount} 
    \lambda_{\demonstration}(\rollout_0) \,
    \ell \bigl(\rollout , \; \state^{\demonstration}\bigr)
    \nonumber\\
    &\leq
    \sum_{\demonstration=1}^{\demonstrationCount} 
        \lambda_{\demonstration}(\rollout_0) \, 
        \ell (\rollout , \; \rollout^{\demonstration})
    +
    \sum_{\demonstration=1}^{\demonstrationCount} 
        \lambda_{\demonstration}(\rollout_0) \, 
        \ell (\rollout^{\demonstration} , \state^{\demonstration})
    ,
    \label{eq:proof_triangle}
\end{align}
where $\rollout$ is the solution to $ivp(\policy, \rollout_0)$ for an arbitrary $\rollout_0 \in \stateSpace_0$.
We expand the first loss term, assuming MSE loss is used:
    \begin{align}
        \ell (\rollout , \; \rollout^{\demonstration})
        &= 
        \frac{1}{\rolloutLen}
        \sum_{\sample=0}^{\rolloutLen-1} \bigl\Vert
            \rollout_\sample - \rollout^{\demonstration}_\sample
        \bigr\Vert_2^2
        \nonumber\\
        &\leq
        \frac{1}{\rolloutLen}
        \sum_{\sample=0}^{\rolloutLen-1} 
            \contractionConst^2 e^{- 2 \contractionRate \sample / \rolloutLen}
        \bigl\Vert
            \rollout_0 - \rollout^{\demonstration}_0
        \bigr\Vert_2^2
        \nonumber\\
        &=
        \frac{
            \contractionConst^2 \, (e^{- 2 \contractionRate} - 1)
        }{
            \rolloutLen \,(e^{- 2 \contractionRate / \rolloutLen} - 1)
        }
        \bigl\Vert 
            \rollout_0 - \rollout^{\demonstration}_0
        \bigr\Vert_2^2,\label{eq:proof_l_bound}
    \end{align}
where the inequality follows from the contractivity definition in~\autoref{eq:definition_contracting_system} evaluated at time $t=i/\rolloutLen$.
We then simplify the first term in~\autoref{eq:proof_triangle} using~\autoref{eq:proof_l_bound} and the formula for $\lambda_{\demonstration}$ given by~\autoref{eq:lambda}, keeping in mind that $\rollout^\demonstration_0 = \state^\demonstration_0$:
\begin{align}
    \sum_{\demonstration=1}^{\demonstrationCount} 
        \lambda_{\demonstration}(\rollout_0) \, 
        \ell (\rollout , \; \rollout^{\demonstration})
    \leq
    \frac{
        \contractionConst^2 \, (e^{-2 \contractionRate} - 1)
    }{
        \rolloutLen \,(e^{-2 \contractionRate / \rolloutLen} - 1)
    }
    \cdot
    \frac{
        M
    }{
        \sum_{\demonstration=1}^{\demonstrationCount} \bigl(\Vert \rollout_0 - \state^{\demonstration}_0 \Vert_2^2\bigr)^{-1}
    }
    \;\; .
    \label{eq:proof_lambdal_bound}
\end{align}
Next, by applying Lemma~\ref{lemma:invinvineq}, we obtain: 
\begin{align}
    \frac{
        M
    }{
        \sum_{\demonstration=1}^{\demonstrationCount} \frac{
            1
        }{
            \Vert \rollout_0 - \state^{\demonstration}_0 \Vert_2^2
        }
    }
    \leq 
        \frac{
            \sum_{\demonstration=1}^{\demonstrationCount} \Vert \rollout_0 - \state^{\demonstration}_0 \Vert_2^2
        }{
            M
        }
    \leq
        \frac{
            \Bigl(\sum_{\demonstration=1}^{\demonstrationCount} \Vert \rollout_0 - \state^{\demonstration}_0 \Vert_2 \Bigr)^2
        }{
            M
        }
    \leq
    \frac{R^2}{M},
    \label{eq:proof_AMGM}
\end{align}
where the last inequality follows from the conic initial set, $\rollout_0 \in \stateSpace_0$, in \autoref{assumption}. 
Combining \autoref{eq:proof_triangle}, \autoref{eq:proof_lambdal_bound}, and \autoref{eq:proof_AMGM} completes the proof: 

\begin{align}
    \lossInitState (\rollout_0; \; \policyParam)
    \leq
    \frac{
        \contractionConst^2 \, (e^{-2 \contractionRate} - 1)
    }{
        \rolloutLen \,(e^{-2 \contractionRate / \rolloutLen} - 1)
    }
    \cdot
    \frac{R^2}{M}
    +
    \sum_{\demonstration=1}^{\demonstrationCount} 
        \lambda_{\demonstration}(\rollout_0) \, 
        \ell (\rollout^{\demonstration} , \state^{\demonstration})
    .
    \label{eq:proof_lambdal_bound_rep}
\end{align}

\end{proof}

\subsection{Corollary~\ref{corol:ub}}\label{app:proof_corol_ub}
\begin{proof}
First, we use the fact that the convex combination of some values is smaller than their maximum to obtain from Theorem~\ref{theo:ub} that: 
\begin{align}
        \lossInitState (\rollout_0; \; \policyParam)
        \leq
        \max_{\demonstration\in\{1, \cdots, \demonstrationCount\}} 
            \ell (\rollout^{\demonstration} , \state^{\demonstration})
        +
        \frac{
            \contractionConst^2 \, R^2 \, (e^{-2 \contractionRate} - 1)
        }{
            \rolloutLen \, M \,(e^{-2 \contractionRate / \rolloutLen} - 1)
        }
        ,
        \nonumber
\end{align}
for all $\rollout_0 \in \mathcal{Y}_0$.
Next, notice that if a bound holds for every $\rollout[0] \in \stateSpace_0$, it also holds when taking the expectation with respect to any distribution over $\rollout[0]$, i.e.:
\begin{align}
    \lossTrue (\initStateDist; \; \policyParam) = \E_{\rollout_0 \sim \initStateDist} \lossInitState (\rollout_0; \; \policyParam)
    \leq
        \max_{\demonstration\in\{1, \cdots, \demonstrationCount\}} 
            \ell (\rollout^{\demonstration} , \state^{\demonstration})
        +
        \frac{
            \contractionConst^2 \, R^2 \, (e^{-2 \contractionRate} - 1)
        }{
            \rolloutLen \, M \,(e^{-2 \contractionRate / \rolloutLen} - 1)
        }.
\end{align}
Since the left-hand side is the definition of the true loss, $\lossTrue (\initStateDist; \; \policyParam)$, the proof is completed.
\end{proof}

\clearpage
\section{Extended related work}
\label{app:extended_related_work}


We introduced contractive imitation policies in \autoref{sec:introduction}, and an overview of these methods is provided in \autoref{tab:contractive_methods_comparison}. In this section, we extend the discussion to imitation policies in a broader context. Specifically, we review both stable and unstable imitation policies.

\begin{table}[ht]
    \centering
    \caption{Comparing contractive DS methods for imitation learning. Except for NCDS and \ours{}, the rest attempt to solve a computationally challenging constrained optimization problem.}
    \vspace{1em}
    \label{tab:contractive_methods_comparison}
    \resizebox{\textwidth}{!}{
        \begin{tabular}{@{}l c c c c@{}}
            \toprule
            {}    &   
            {Unconstrained optimization}  &    
            {Adjustable contraction rate}  &   
            {State-only}  &   
            {Learn in latent space}
            \\
            \midrule
            C-GMR~\citep{blocher2017learning_contractive}   & X & X & X & X\\
            CDSP~\citep{ravichandar2017learning}    & X & X & X & X\\
            CVF~\citep{sindhwani2018learning_contractive}     & X & X & X & X\\
            NCDS~\citep{neural_contractive_beik2024neural}    & $\checkmark$ & X & X & only lower-dimensional\\ 
            \textbf{\ours{} (ours)}      & $\checkmark$ & $\checkmark$ & $\checkmark$ & $\checkmark$ \\
            \bottomrule
        \end{tabular}
    }
\end{table}

\subsection{Stable dynamical policies with restricted representations} 
A common approach to ensuring global stability is through Lyapunov theory~\citep{devaney2021introduction_dynamical_systems}. In this framework, a Lyapunov candidate function must be defined at each step of policy optimization, ensuring that the policy satisfies Lyapunov conditions with respect to this candidate. To increase the flexibility of these methods, the Lyapunov candidate is typically learned rather than fixed a priori.

However, jointly learning both the policy and the Lyapunov candidate is computationally intensive. To alleviate this, prior efforts have often limited the hypothesis class of the policy or the Lyapunov candidate, which in turn reduces the expressiveness and accuracy of imitation. In this context, \cite{khansari2011learning} employs a fixed quadratic Lyapunov candidate with a dynamical policy represented by a Gaussian mixture model. 
Nevertheless, trajectories with non-decreasing distance from the equilibrium are not consistent with this Lyapunov function and hence, cannot be achieved. 

To address this limitation, \cite{khansari2014learning} and \cite{abyaneh2023plyds} introduce trainable parameterized Lyapunov candidates, while \cite{neumann2015learning} uses diffeomorphisms to achieve similar goals. Despite these improvements, the expressive power remains limited. Additionally, scalability to higher-dimensional spaces remains a challenge, primarily due to the complexity of the underlying non-convex optimization.

\subsection{Stable neural dynamical policies} 
Recent advancements in globally stable imitation learning have explored using neural networks to model policies, enhancing their expressive power. To ensure stability, a line of research has focused on learning a simple stable DS and mapping it to a more complex one through a trainable diffeomorphism~\citep{rana2020euclideanizing, zhang2022riemaniandiffeomorphism}. A diffeomorphism is an invertible map that preserves stability and enables the generation of highly nonlinear policies. 
While these techniques have shown potential even for complex motion tasks, they have several limitations: they require a large number of demonstrations during training, tend to produce policies that are quasi-stable, and, most importantly, struggle to learn in high-dimensional state spaces due to the need to preserve the invertibility of the diffeomorphic transformation.

Another class of methods~\citep{manek2019learning, sochopoulos2024learningNode, abyaneh2024globally} employs projection at each training step to enforce stability constraints. These projection-based approaches have the advantage of requiring fewer demonstrations. However, projection does not inherently guarantee the smoothness of the resulting policy. As a result, these policies can be difficult to train, particularly in high-dimensional spaces with rapidly changing dynamics.

\subsection{Imitation policies with no stability guarantees}
A parallel research direction focuses on learning highly expressive policies without stability guarantees. In this context, recent advancements in apprenticeship learning~\citep{abbeel2004apprenticeship} and inverse reinforcement learning~\citep{ziebart2008maximumentropy} can be leveraged for learning imitation policies.
For instance, GAIL~\citep{ho2016generative}, AIRL~\citep{airl_imitation}, and VAIL~\citep{peng2018variational_vail} introduce an adversarial IRL framework, with AIRL focusing on robustness to dynamic changes and VAIL improving training stability. DART~\citep{laskey2017dart} adds noise to the training data to mitigate compounding errors in "off-policy" methods. Dagger-style approaches~\citep{ross2010efficient_dagger, dagger_ensemble2019} iteratively aggregate new data from both expert and novice policies, offering some level of safety assurance.
However, the absence of stability guarantees in these methods means they offer no out-of-sample guarantees, which limits their applicability in real-world robotic environments

\subsection{Recurrent equilibrium networks (RENs)} 
In this work, we employed RENs to model the dynamics of our policy. RENs were introduced in the discrete-time setup by~\citet{revay2023recurrent} as a broad class of nonlinear dynamical models with built-in stability and reliability guarantees. Subsequently,~\citet{martinelli2023unconstrained} extended this idea to continuous-time models, proving similar characteristics. In addition to being contraction by design, RENs are highly flexible. For instance, they can represent all stable linear systems, all previously known sets of contracting recurrent neural networks, and all deep feedforward neural networks~\citep{revay2023recurrent}. RENs have been employed in various applications, such as system identification~\citep{revay2023recurrent, martinelli2023unconstrained}, control systems~\citep{boroujeni2024pac}, and optimization~\citep{martin2024learning}.

\clearpage
\section{Design considerations}
\label{app:design_considerations}

\ours{} involves design choices that balance accuracy with computational efficiency. In this section, we highlight the key design parameters that impact this trade-off and offer guidance on how to select them for achieving optimal performance.

\subsection{Contraction rate}\label{app:c_rate} 
As highlighted in \autoref{sec:formulation}, the contraction rate $\contractionRate$ can either be fixed or learned alongside the model parameters. Higher values of $\contractionRate$ enforce more restrictive behavior, causing trajectories to contract exponentially faster, according to the definition of contraction theory~\autoref{def:Contractivity_continuous_time}. This introduces the first key trade-off: while higher contraction rates lead to faster out-of-sample recovery and improved robustness to unseen data, they also reduce the model’s expressiveness.

The bijection block in the output map, presented in \autoref{eq:outputmap}, helps mitigate this reduction in expressivity. In our experiments, we achieve high contraction rates with bijection blocks of reasonable sizes, maintaining a balance between contraction and policy expressivity. The specific hyperparameters used for the LASA and Robomimic datasets are detailed in \autoref{tab:scds_hyperparams}. 

\subsection{Horizon}\label{app:horizon} Extending the horizon length, denoted as $\horizon$, necessitates additional integrations by the ODE solver, which increases the computational complexity of the method. This is exemplified by a differentiable Euler integration step, expressed as:
\begin{align}
    \latentState_{t + 1} = \latentState_t + \latentDyn(\latentState_t) \Delta t, \quad \forall t \in {0, 1, \ldots, \horizon}.
\end{align}
The need to retain gradient information across all time steps further adds to memory and computational demands. However, extending the horizon may be imperative to capture long-horizon or fast-evolving expert behaviors. In cases where long horizons are crucial—though not required for the Robomimic and LASA datasets—using more advanced integrators and reducing solver tolerance can help manage computational costs. The horizon lengths for the LASA and the Robomimic datasets are given in \autoref{tab:scds_hyperparams}.

\subsection{Soft DTW}\label{app:soft_dtw} 
The computation of Dynamic Time Warping (DTW) using its original formulation~\citep{keogh2005dtw} is known to be resource-intensive. To address this, soft-DTW~\citep{cuturi2017soft_dtw} presents a more computationally efficient alternative while still yielding acceptable precision. The soft-DTW algorithm assumes a matrix of admissible paths $\mathcal{A}$, and defines the loss function as follows:

\begin{equation}
\text{soft-}DTW^{\beta}(\rollout, \state^m) =
    \min_{\tau \in \mathcal{A}(\rollout, \state^m)}{}^\beta
        \sum_{(i, j) \in \tau} d(\hat{y}_i, y^m_j)^2,
\label{eq:softdtw}
\end{equation}
where \(\min{}^\beta\) denotes the soft-min operator with a smoothing factor \(\beta\), and \(d\) represents a distance metric, such as the Euclidean distance. The soft mean operator is defined by $\min{}^\beta(a_1, \dots, a_n) = - \beta \log \sum_i e^{-a_i / \beta}$. As \(\beta \rightarrow 0^+\), the soft-DTW loss converges to the original DTW loss. Importantly, unlike standard DTW, the soft-DTW variant is \emph{differentiable} across its entire domain, making it suitable for use as a loss function in model training via backpropagation.

\subsection{Implicit depth and invertible layers} 
The expressiveness of the REN architecture can be increased by increasing the dimension of $\boldsymbol v$ in \autoref{eq:REN_dyn}, which deepens the implicit layer~\citep{martinelli2023unconstrained}. 
However, our empirical observations revealed that this approach substantially increases training time. Consequently, we chose to keep this parameter fixed and instead enhance representation power through the bijection block.
Particularly, we show that increasing the number of invertible (bijective) layers enhances the representation power in \aref{app:ablation_bijection}. The efficient implementation of these layers, covered in \aref{app:implementation_scds}, reduces computational overhead, making the additional computations feasible.

\clearpage
\section{Continuous REN properties}
\label{app:continuous_ren_properties}

We provide the corresponding theory, stating that continuous RENs are contractive for \emph{any} choice of parameters. We encourage referring to the original paper \citep{martinelli2023unconstrained, revay2023recurrent} to fully understand the mechanism behind the REN's contractive structure. However, highlights of the core theorems are briefly glossed over in this section.

\subsection{Detailed structure}
We show an abstract format of the original REN formulation in \autoref{eq:REN_dyn}. The exact formulation is given by, 

\begin{equation}\label{eq:ren_original_def}
\begin{bmatrix}
\dot{x}(t) \\
v(t) \\
y(t)
\end{bmatrix}
=
\underbrace{
\begin{bmatrix}
A & B_1 & B_2 \\
C_1 & D_{11} & D_{12} \\
C_2 & D_{21} & D_{22}
\end{bmatrix}
}_{\tilde{A}}
\begin{bmatrix}
x(t) \\
w(t) \\
u(t)
\end{bmatrix}
+
\underbrace{
\begin{bmatrix}
b_x \\
b_v \\
b_y
\end{bmatrix}
}_{\tilde{b}},
\end{equation}
\begin{equation}
w(t) = \sigma(v(t)),
\end{equation}

where $x(t) \in \mathbb{R}^n$, $u(t) \in \mathbb{R}^m$ and $y(t) \in \mathbb{R}^p$ are respectively the state ($\latentState$ in our paper), the input, and output ($\state$ in our paper) at time $t$. The function $\sigma(\cdot)$ is an entry-wise nonlinear activation function. The input and output of $\sigma(\cdot)$ are $v(t)$, $w(t) \in \mathbb{R}^q$, respectively.

First, we restate the \autoref{eq:REN_dyn} used in the main text to describe RENs: 

\begin{gather*}    
   \begin{bmatrix}
        \Dot{\latentState}(t) \\
        \boldsymbol{v}(t)
    \end{bmatrix}
    = 
    \Omega(\policyParam, \contractionRate)
    \begin{bmatrix}
        \latentState(t) \\
        \sigma \bigl(\boldsymbol{v}(t)\bigr)
    \end{bmatrix}.
\end{gather*}

We make the following key alterations to \autoref{eq:ren_original_def} to derive \autoref{eq:REN_dyn} for use in the policy structure:

\begin{itemize}
    \item Remove the output equation, $y(t)$. This involves setting $D_{21}$ and $D_{22}$ to zero while replacing the linear transformation $C_2$ with our contractivity-preserving output transformation in \autoref{eq:outputmap}.
    
    \item Eliminate the bias terms, $b_x, b_y, b_v$, as they are not essential for learning the autonomous policies we target.
    
    \item Abstract the implicit relationship between $v$ and $w$, eliminating its explicit dependency on the latent state.
\end{itemize}

\subsection{Universal contraction}
The first theorem in the paper guarantees contraction at a specific rate, contingent upon the existence of two matrix structures, as formally stated below.

\begin{theorem} \label{the:ren_contraction_theorem}(\cite{martinelli2023unconstrained}: Theorem 1)
The REN architecture in \autoref{eq:REN_dyn} is contracting with the rate $\contractionRate$, if there exists a matrix $P \succ 0$ and a diagonal matrix $\Lambda \succ 0$ such that:
\[
\begin{bmatrix}
    -A^\top P - PA & -C_1^\top \Lambda - PB_1 \\
    * & W
\end{bmatrix} \succ 0,
\]
where,
\[
W = 2\Lambda - \Lambda D_{11} - D_{11}^\top \Lambda.
\]
\end{theorem}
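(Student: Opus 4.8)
The plan is to prove contractivity through a \emph{differential} (incremental) Lyapunov argument coupled with the slope/sector property of $\sigma$, encoded via the diagonal multiplier $\Lambda$. Writing the autonomous REN of \autoref{eq:ren_original_def} (dropping the input and output rows) as $\dot x = A x + B_1 w$, $w = \sigma(v)$, $v = C_1 x + D_{11} w$, I would take two solutions $x^a, x^b$ with their associated internal signals and set $\Delta x = x^a - x^b$, $\Delta v = v^a - v^b$, $\Delta w = w^a - w^b$. The incremental Lyapunov candidate is $V = \Delta x^\top P \Delta x$, which is well-defined since $P \succ 0$; the target is the differential inequality $\dot V \le -2\gamma V$, whose integration (comparison lemma) yields $\lVert \Delta x(t)\rVert \le \alpha\, e^{-\gamma t}\lVert \Delta x(0)\rVert$ with $\alpha = \sqrt{\lambda_{\max}(P)/\lambda_{\min}(P)}$, matching \autoref{def:Contractivity_continuous_time} and \autoref{eq:definition_contracting_system}.

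The key step is to turn the slope restriction on $\sigma$ into a usable quadratic constraint. Because $\sigma$ acts element-wise with each scalar component having derivative in $[0,1]$, for every coordinate $i$ one has $\Delta w_i(\Delta v_i - \Delta w_i) \ge 0$. Weighting by the positive diagonal entries $\Lambda_{ii} > 0$ and summing gives the incremental quadratic constraint $\Delta w^\top \Lambda(\Delta v - \Delta w) \ge 0$, and substituting $\Delta v = C_1 \Delta x + D_{11}\Delta w$ expresses it purely in $(\Delta x, \Delta w)$.

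I would then invoke the S-procedure. Differentiating $V$ along the difference dynamics gives $\dot V = \Delta x^\top(A^\top P + PA)\Delta x + 2\Delta x^\top P B_1 \Delta w$. Since the slope term is nonnegative, $\dot V \le \dot V + 2\Delta w^\top \Lambda(\Delta v - \Delta w)$, and the right-hand side collects into a single quadratic form $\zeta^\top M \zeta$ in $\zeta = [\Delta x^\top,\ \Delta w^\top]^\top$, where
\[
M = \begin{bmatrix} A^\top P + PA & P B_1 + C_1^\top \Lambda \\ * & \Lambda D_{11} + D_{11}^\top \Lambda - 2\Lambda \end{bmatrix}.
\]
This $M$ is exactly the negative of the matrix in the theorem's LMI, since its $(2,2)$ block equals $-W$; the hypothesis therefore states $M \prec 0$. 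Hence $\dot V \le -c\lVert \zeta\rVert^2 \le -c\lVert \Delta x\rVert^2 \le -2\gamma V$, where the final step uses $V \le \lambda_{\max}(P)\lVert \Delta x\rVert^2$. To certify a \emph{prescribed} rate $\gamma$ rather than merely some positive rate, the $(1,1)$ block is understood to carry the extra term $2\gamma P$, i.e. $-A^\top P - PA - 2\gamma P$; this is the form under which the matrices are read.

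The main obstacle is the implicit, only piecewise-differentiable nonlinearity. Two aspects need care. First, \emph{well-posedness}: the relation $v = C_1 x + D_{11}\sigma(v)$ must admit a unique solution so that $\Delta v$ and $\Delta w$ are well-defined functions of $\Delta x$; this is guaranteed by the REN's lower-triangular $D_{11}$ together with $W \succ 0$, which is implied by the LMI (a principal block of a positive-definite matrix is positive definite). Second, $\sigma$ may have kinks, so the per-coordinate slope bound must be justified through the mean-value form $\Delta w_i = \bigl(\int_0^1 \sigma'(v_i^b + s\,\Delta v_i)\,ds\bigr)\Delta v_i$ rather than pointwise differentiation. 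Establishing the incremental quadratic constraint rigorously at these non-smooth points, and checking that $V$ is differentiable along trajectories, is where the genuine effort lies.
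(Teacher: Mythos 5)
The paper does not prove this statement itself; it is quoted verbatim from \citet{martinelli2023unconstrained} (Theorem 1), with the remark that the detailed proof is in that reference. Your argument --- an incremental Lyapunov function $V=\Delta x^\top P\Delta x$, the $\Lambda$-weighted incremental sector constraint $\Delta w^\top\Lambda(\Delta v-\Delta w)\ge 0$ from the $[0,1]$ slope restriction, an S-procedure collecting everything into the negative of the stated LMI, and $W\succ 0$ for well-posedness of the implicit layer --- is exactly the standard proof underlying that cited result, and your algebra (in particular the identification of the $(2,2)$ block with $-W$) checks out. Your observation that the $(1,1)$ block must carry an additional $-2\gamma P$ term to certify the \emph{prescribed} rate $\gamma$, rather than merely some positive rate, is also correct and points to a genuine imprecision in how the theorem is transcribed in this appendix.
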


While some notations in this theorem are abstracted for simplicity in our paper, the key takeaway is that the existence of a positive-definite matrix $P$, along with a diagonal and positive-definite matrix $\Lambda$, is sufficient to ensure the contractive nature of the architecture described in \autoref{eq:REN_dyn}. A detailed proof can be found in the corresponding paper, and extensive experimental results further validate this theory in practice.

\subsection{Free parameterization and constructability}
Following \autoref{the:ren_contraction_theorem}, it is necessary to prove that there exists a way to construct the model parameter matrices, including $\Lambda$ and $P$, making the entire architecture contractive by design and with the specified contraction rate. The following theorem essentially serves that purpose. It outlines a procedure to construct all the necessary matrices to build a \emph{contractive-by-design} REN.
\begin{theorem}
 (\cite{martinelli2023unconstrained}: Theorem 3) For any $\policyParam \in \mathbb{R}^{N_{\policyParam}}$, and for any $\epsilon, \epsilon_P > 0$, there exist matrices $\{\tilde{A}, \tilde{b}\} \in \policyParam $ and the matrices $(Y, W, Z, P) \in \policyParam$, and a way to construct these matrices for any choice of parameters, such that: 
    \begin{gather*}
    \begin{pmatrix}
    -Y^\top - Y & -U - Z \\
    * & W
    \end{pmatrix}
    = X^\top X + \epsilon I, \quad
    P = X^\top_P X_P + \epsilon_P I.
    \end{gather*}
Note that $U = 0$ for autonomous models. Then, the latent state, $\latentState$, in the continuous REN architecture defined by \autoref{eq:REN_dyn}: 
\begin{gather*}    
   \begin{bmatrix}
        \Dot{\latentState}(t) \\
        \boldsymbol{v}(t)
    \end{bmatrix}
    = 
    \Omega(\policyParam, \contractionRate)
    \begin{bmatrix}
        \latentState(t) \\
        \sigma \bigl(\boldsymbol{v}(t)\bigr)
    \end{bmatrix},
\end{gather*}

is \emph{contractive} with the rate $\contractionRate$ for any choice of $\policyParam$. Further, matrices  $(Y, W, Z, P)$ and $\{\tilde{A}, \tilde{b}\}$ can be computed unambiguously as described in the proof.
\end{theorem}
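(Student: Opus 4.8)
The plan is to certify contraction not by checking the condition of \autoref{the:ren_contraction_theorem} after the fact, but by \emph{building} the REN coefficient matrices so that this condition holds identically for every parameter choice. The entire argument hinges on one elementary observation: for any matrix $X$ and any $\epsilon>0$ the matrix $X^\top X + \epsilon I$ is symmetric positive definite, and conversely every symmetric positive definite matrix can be represented in this form. Hence, by \emph{defining} the two positive-definiteness requirements of \autoref{the:ren_contraction_theorem}---namely $P \succ 0$ and positivity of the block matrix---through free matrices $X_P$ and $X$, both requirements are met by construction, irrespective of $\policyParam$, and \autoref{the:ren_contraction_theorem} then certifies contraction at the prescribed rate $\contractionRate$ (which enters through $\Omega(\policyParam,\contractionRate)$).

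First I would set $P = X_P^\top X_P + \epsilon_P I \succ 0$. Next, using $P = P^\top$, I would introduce the change of variables $Y := PA$ and $Z := C_1^\top \Lambda + P B_1$ (with $U = 0$ in the autonomous case), under which the certificate block of \autoref{the:ren_contraction_theorem} becomes exactly $\begin{pmatrix} -Y^\top - Y & -Z \\ * & W \end{pmatrix}$. Equating this to $X^\top X + \epsilon I$ makes the block positive definite for free $X$ and lets me read $W$, $Z$, and the symmetric part of $Y$ directly off the blocks of $X^\top X + \epsilon I$.

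The remaining work is to invert these relations and recover the coefficient matrices $A$, $B_1$, $C_1$, $D_{11}$ unambiguously. Since $P$ is invertible, $A = P^{-1} Y$, and the skew-symmetric freedom left in $Y$ is fixed by a separate free parameter. Treating $C_1$ as a free matrix, I recover $B_1 = P^{-1}(Z - C_1^\top \Lambda)$. The delicate extraction is that of $\Lambda$ and $D_{11}$ from the single relation $W = 2\Lambda - \Lambda D_{11} - D_{11}^\top \Lambda$: I would impose that $D_{11}$ be strictly lower triangular, so that $\Lambda D_{11}$ is strictly lower triangular and $D_{11}^\top \Lambda$ strictly upper triangular. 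Then the diagonal of $W$ equals $2\Lambda$, yielding $\Lambda = \tfrac{1}{2}\,\mathrm{diag}(W) \succ 0$ with positivity inherited from $W \succ 0$, and the strictly lower part of $W$ equals $-\Lambda D_{11}$, yielding $D_{11} = -\Lambda^{-1}\,\mathrm{tril}(W)$. Each step is a well-posed linear solve, so the matrices are constructed unambiguously.

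The main obstacle is precisely this last extraction: the relation $W = 2\Lambda - \Lambda D_{11} - D_{11}^\top \Lambda$ constrains only the symmetric combination $\Lambda D_{11} + D_{11}^\top \Lambda$, leaving $D_{11}$ underdetermined in general. The resolution---and the crux of the construction---is the acyclic (strictly lower triangular) restriction on $D_{11}$, which breaks this symmetry and forces the triangular-plus-diagonal decomposition of $W$ to identify a unique positive-definite $\Lambda$ together with a unique $D_{11}$. Checking that this structural restriction is compatible with the positive-definiteness guarantees and does not collapse the model's expressive power is the subtle step; the complete verification is given in \citet{martinelli2023unconstrained}.
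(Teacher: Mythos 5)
The paper does not actually prove this statement: it is imported verbatim from \citet{martinelli2023unconstrained} (their Theorem~3), and the appendix explicitly defers the proof to that reference. So the only comparison available is between your reconstruction and the known REN direct-parameterization argument. On that score, your sketch is essentially the right construction: setting $P = X_P^\top X_P + \epsilon_P I$, equating the certificate block of \autoref{the:ren_contraction_theorem} to $X^\top X + \epsilon I$ via the substitutions $Y = PA$ and $Z = C_1^\top \Lambda + P B_1$, and recovering $\Lambda$ and $D_{11}$ from the symmetric block $W$ by the decomposition $W = D + L + L^\top$ with $\Lambda = \tfrac{1}{2}D$ and $D_{11} = -\Lambda^{-1}L$ is exactly how the published parameterization works, including the positivity of $\Lambda$ being inherited from the diagonal of a positive-definite block. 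Your identification of the $\Lambda$/$D_{11}$ extraction as the delicate step is also accurate.

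There is one genuine gap: your construction certifies contraction, but not contraction \emph{at the prescribed rate} $\contractionRate$. With $Y := PA$, the $(1,1)$ block gives only $-A^\top P - PA \succ 0$, which yields some positive contraction rate that could be arbitrarily small. To enforce the rate $\contractionRate$ appearing in the theorem, the LMI must carry an extra $2\contractionRate P$ term in the $(1,1)$ block (equivalently, one constructs $A = P^{-1}(\cdot) - \contractionRate I$, shifting the spectrum), which is precisely how $\contractionRate$ enters $\Omega(\policyParam, \contractionRate)$ in \citet{martinelli2023unconstrained}. You gesture at this by saying the rate ``enters through $\Omega$,'' but your explicit change of variables does not realize it; to be fair, the paper's own abstracted restatement of \autoref{the:ren_contraction_theorem} omits the $\contractionRate$-dependent term as well, so the omission traces partly to the source you were reconstructing. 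A second, minor caveat: a fixed $\epsilon$ means $X^\top X + \epsilon I$ ranges only over matrices with smallest eigenvalue at least $\epsilon$, so your ``conversely'' claim is not needed and not quite true as stated; fortunately the argument only requires the forward direction.
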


The full proof for this theorem may be found in \cite{martinelli2023unconstrained}, p. 8. All the matrices required in the theorem can be efficiently designed and computed using standard automatic differentiation tools, such as PyTorch, so that the resulting model is trainable, just like a normal recurrent neural network.

\clearpage
\section{Additional experiments}
\label{app:additional_exps}
We present a set of additional experiments that highlight various features of \ours{}, alongside ablation studies on its key components: contractive RENs and the output map.

\subsection{Further experiments on the LASA dataset}

We illustrated in \autoref{sec:experiments} how the state trajectories contract towards each other. As explained in \autoref{subsec:policy_param}, the latent state trajectories, $\latentState$, also exhibit contraction. To illustrate this, we trained a policy with different contraction rates for a sample motion from the LASA dataset and plotted the first element of the latent state when starting from various initial conditions in \autoref{fig:latent_space_contraction}. In this experiment, the latent state has a dimension of $10$. As expected, the latent states contract toward each other, with the contraction speed increasing for larger values of the contraction rate $\contractionRate$.

\begin{figure}[ht]
  \centering
  \includegraphics[width=\linewidth]{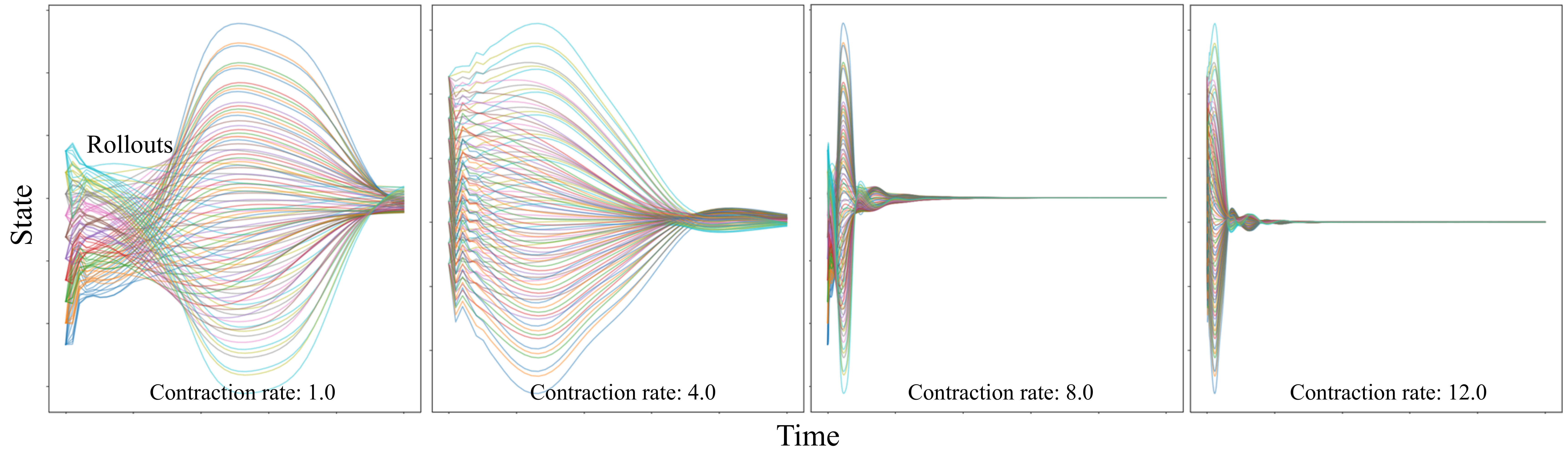}
  \caption{Contractive behavior of the latent state. The figure illustrates the rollouts over time for varying dimensions of the latent state, under different contraction rates. Higher contraction rates accelerate convergence in both the latent state space and the robot's state space. The dimension of the latent state here is set to 10.}
  \label{fig:latent_space_contraction}
\end{figure} 

In \autoref{fig:scds_lasa_policies_more}, we complement our experiments in \autoref{sec:experiments} on the LASA dataset by illustrating more expert motions.
For all motions, \ours{} produces trajectories that contract to an average of expert demonstrations, resulting in a high imitation accuracy.
\begin{figure}[ht]
  \centering
  \includegraphics[width=\linewidth]{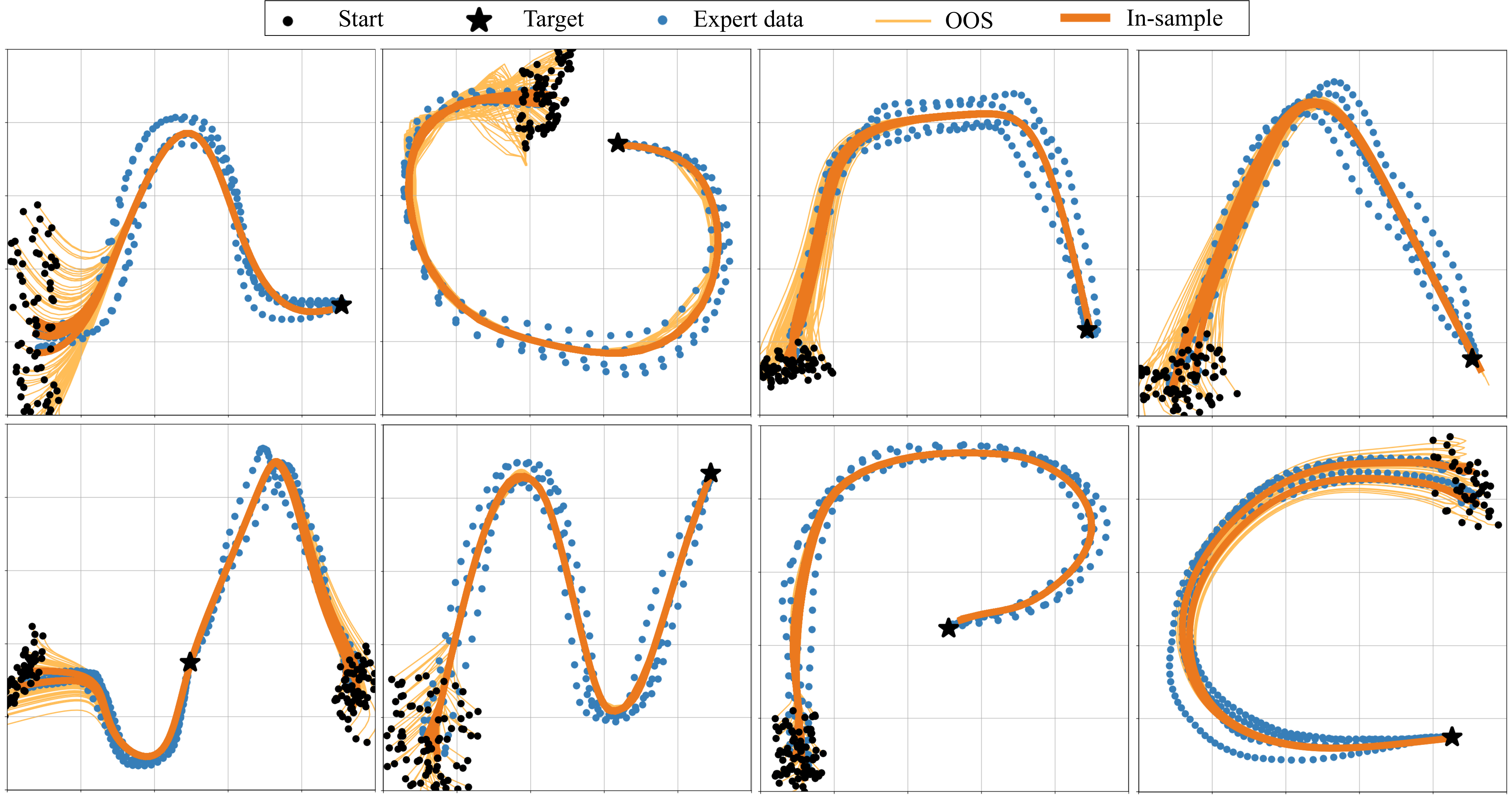}
  \caption{\ours{} is tested over additional motions in the LASA dataset. In all cases, the induced trajectories accurately imitate the expert, even for out-of-sample initial states.}
  \label{fig:scds_lasa_policies_more}
\end{figure}

\begin{figure}[ht]
  \centering
  \includegraphics[width=\linewidth]{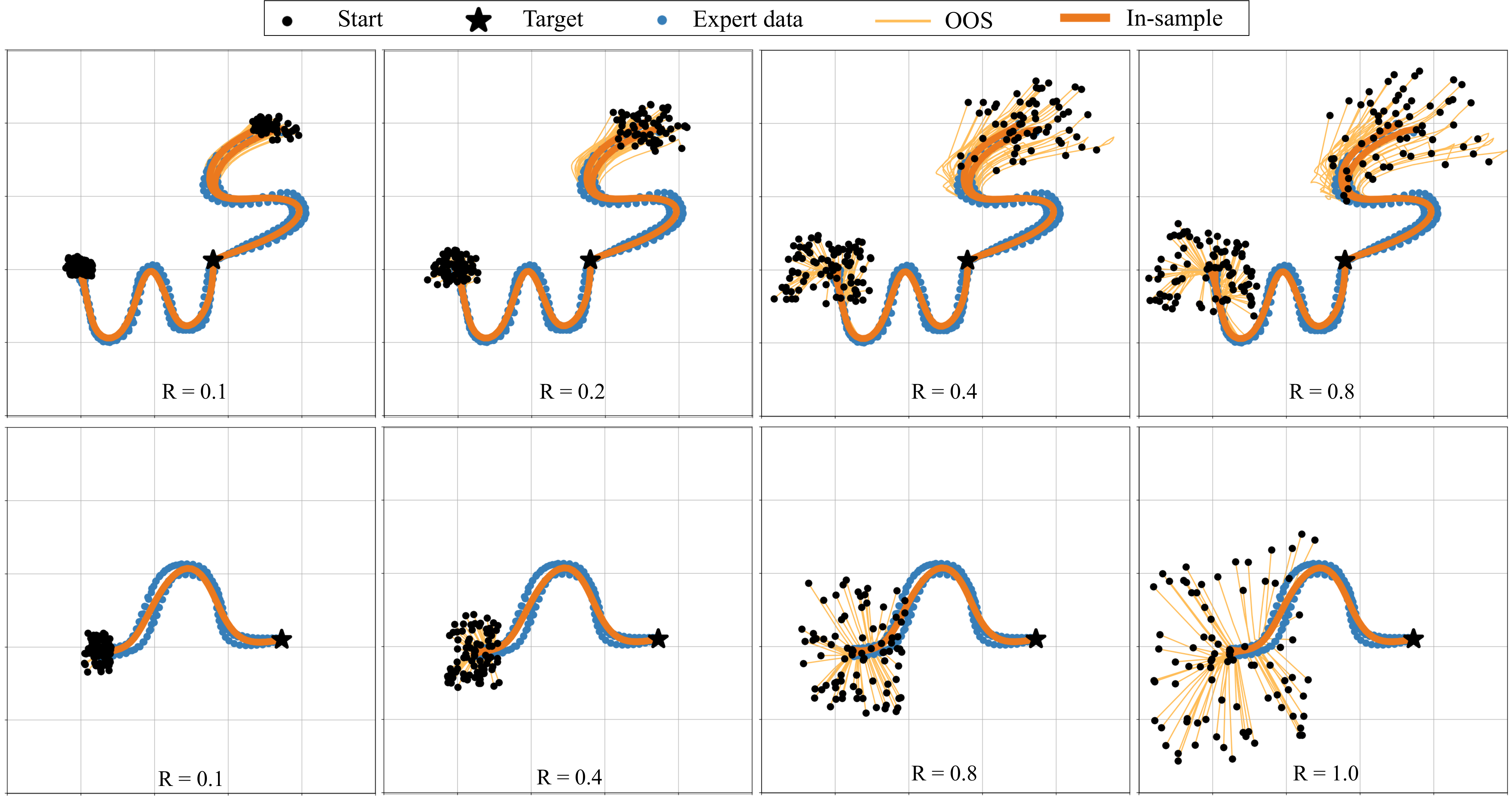}
  \caption{Out-of-sample recovery for initial states sampled from regions with different volumes according to the definition of $R$ in \autoref{assumption}. Notably, \ours{} recovers even from more distant initial states.}
  \label{fig:wider_init_state_sets}
\end{figure}

Next, we analyze the effect of the distance of an \oos{} initial state from those in $\dataset$ in \autoref{fig:wider_init_state_sets} and \autoref{tab:accuracy_further_initial_conditions}. As can be seen in the figure, \ours{} demonstrates effective \oos{} recovery, even when the initial states are very far from the in-sample ones. 

\begin{table}[ht]
    \centering
    \caption{Evaluating \colorbox{\outsample}{out-of-sample} rollouts error on the LASA dataset. Compared to Table~\ref{tab:imitation_accuracy_handwriting_robomimic}, the \oos{} initial states are sampled from a wider distribution with a radius of $0.25 \Vert \state_0^m \Vert_2$ for the LASA dataset. 
    The \colorbox{\hlcolor}{lowest (best) values} for each metric and dataset are highlighted. These results confirm that the recovery behavior observed in \autoref{fig:wider_init_state_sets} persists across all the other tasks in the LASA dataset.}

    \vspace{1em}
    
    \resizebox{0.6\textwidth}{!}{
        \begin{tabular}{@{} c c c c c @{}}
        \toprule
            \multicolumn{1}{c}{Expert $\;$} & \multicolumn{2}{c}{LASA-2D} & \multicolumn{2}{c}{LASA-4D} \\ \midrule
            Metric & MSE & soft-DTW & MSE & soft-DTW \\ \midrule

\cellcolor{\outsample}SNDS   & 4.64 \textcolor{\pmcolor}{$\pm$ 2.50} & 11.06 \textcolor{\pmcolor}{$\pm$ 2.35} & 6.57 \textcolor{\pmcolor}{$\pm$ 3.56} & 14.78 \textcolor{\pmcolor}{$\pm$ 1.08} \\

\cellcolor{\outsample}BC     & 13.81 \textcolor{\pmcolor}{$\pm$ 6.08} & 29.25 \textcolor{\pmcolor}{$\pm$ 8.45} & 32.73 \textcolor{\pmcolor}{$\pm$ 11.45} & 41.22 \textcolor{\pmcolor}{$\pm$ 11.51} \\

\cellcolor{\outsample}SDS-EF & 2.67 \textcolor{\pmcolor}{$\pm$ 0.55} & 12.12 \textcolor{\pmcolor}{$\pm$ 2.29} & 4.19 \textcolor{\pmcolor}{$\pm$ 0.78} & 16.34 \textcolor{\pmcolor}{$\pm$ 3.28} \\

\cellcolor{\outsample}\textbf{\ours{}} & \colorbox{\hlcolor}{0.57} \textcolor{\pmcolor}{$\pm$ 0.23} & \colorbox{\hlcolor}{2.12} \textcolor{\pmcolor}{$\pm$ 0.70} & \colorbox{\hlcolor}{1.44} \textcolor{\pmcolor}{$\pm$ 0.35} & \colorbox{\hlcolor}{3.40} \textcolor{\pmcolor}{$\pm$ 0.56} \\
            \bottomrule
        \end{tabular}
    }
    \label{tab:accuracy_further_initial_conditions}
\end{table}

\subsection{Learning the contraction rate}
\label{app:learning_contraction_rate}
As discussed in \aref{app:continuous_ren_properties} and \autoref{eq:REN_dyn}, we can either set the contraction rate $\contractionRate$ to a desired value (see \autoref{tab:scds_hyperparams}) or learn it as a model parameter. For learning $\contractionRate$, the Lagrange multipliers method can be used which replaces the loss $\lossEmp$ with: 
\begin{align}\label{eq:contraction_rate_learning}
\lossEmp_{\text{aug}}\bigl(
        \bigl\{ 
            \state^\demonstration_0    
        \bigr\}_{\demonstration=1}^{\demonstrationCount}
        ; \policyParam, \contractionRate
    \bigr)
\triangleq 
\lossEmp\bigl(
        \bigl\{ 
            \state^\demonstration_0    
        \bigr\}_{\demonstration=1}^{\demonstrationCount}
        ; \policyParam
    \bigr)
- \mu \left(h(\gamma) - c \right), 
\end{align}
where $h(.)$ can be defined to encourage larger $\gamma$, and the constants $\mu$ and $c$ are regularization terms. A natural choice for $h$ can be $h(\gamma) = \frac{1}{(\gamma - \gamma_0)^2}$, with $\gamma_0$ being a baseline value. 

\autoref{fig:crate_high_low} illustrates the difference in out-of-sample recovery between two cases: setting a low contraction rate (top row) versus promoting higher contraction rates through \autoref{eq:contraction_rate_learning} (bottom row). As shown, learning $\contractionRate$ leads to a higher contraction rate, which results in trajectories that contract toward each other more quickly.

\begin{figure}[ht]
  \centering
  \includegraphics[width=\linewidth]{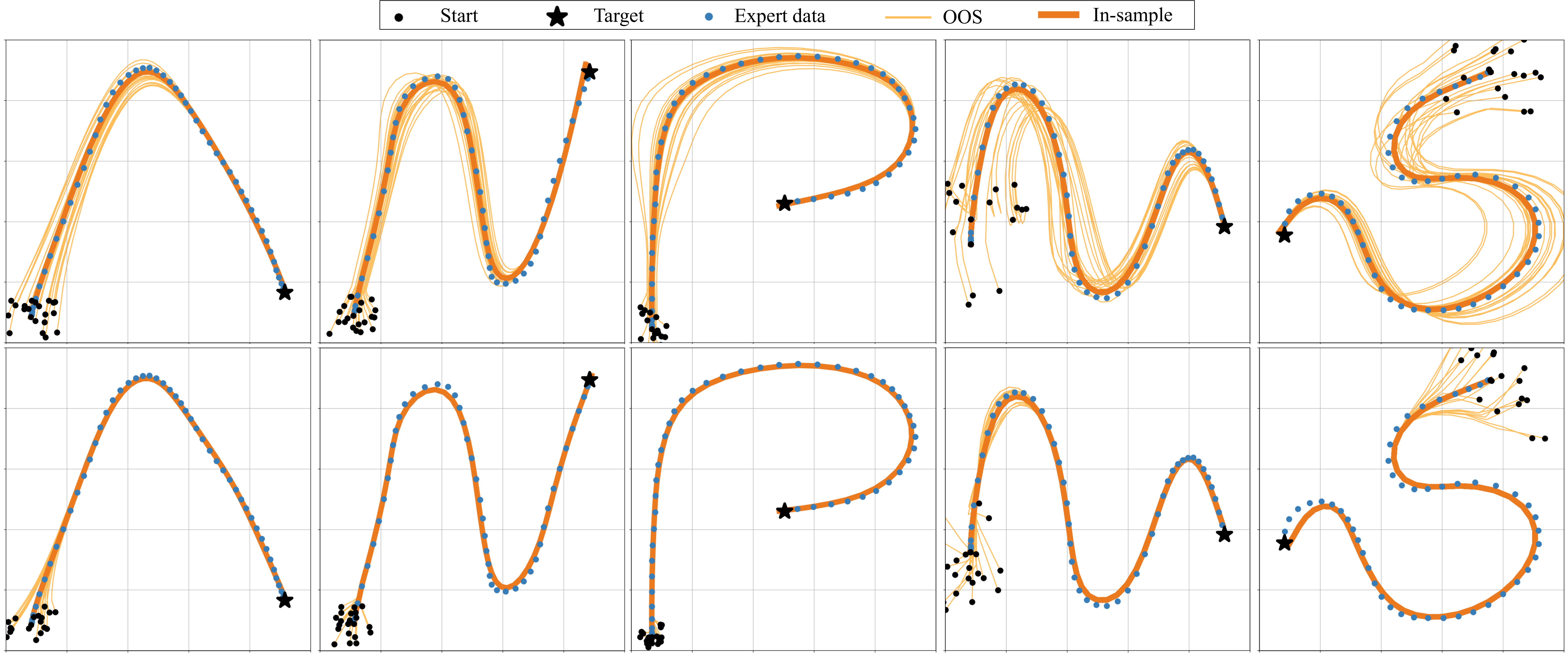}
  \caption{Policies trained to have low (first row) and high (second row) contraction rates. \ours{} can promote higher contraction rates during training to achieve more robustness to out-of-sample initial states and perturbations.}
  \label{fig:crate_high_low}
\end{figure}

\subsection{Latent state dimension}
\label{app:latent_space_dim_contract}
One of the key advantages of \ours{} is its ability to learn in a latent space. As discussed in \autoref{sec:introduction}, increasing the latent space dimension enhances the expressive power of the policy. We explore this aspect in \autoref{fig:latent_space_dim} for a 2D motion in the LASA dataset. As shown, a low-dimensional latent space restricts representation capacity, leading to poor imitation. However, as the latent dimension increases, imitation accuracy improves.

It is important to highlight that the policy remains contractive by design, regardless of the latent state dimension. This ensures that even the poorly trained policy in the left-most plot of \autoref{fig:latent_space_dim} remains globally stable and contracting.
\begin{figure}[ht]
  \centering
  \includegraphics[width=\linewidth]{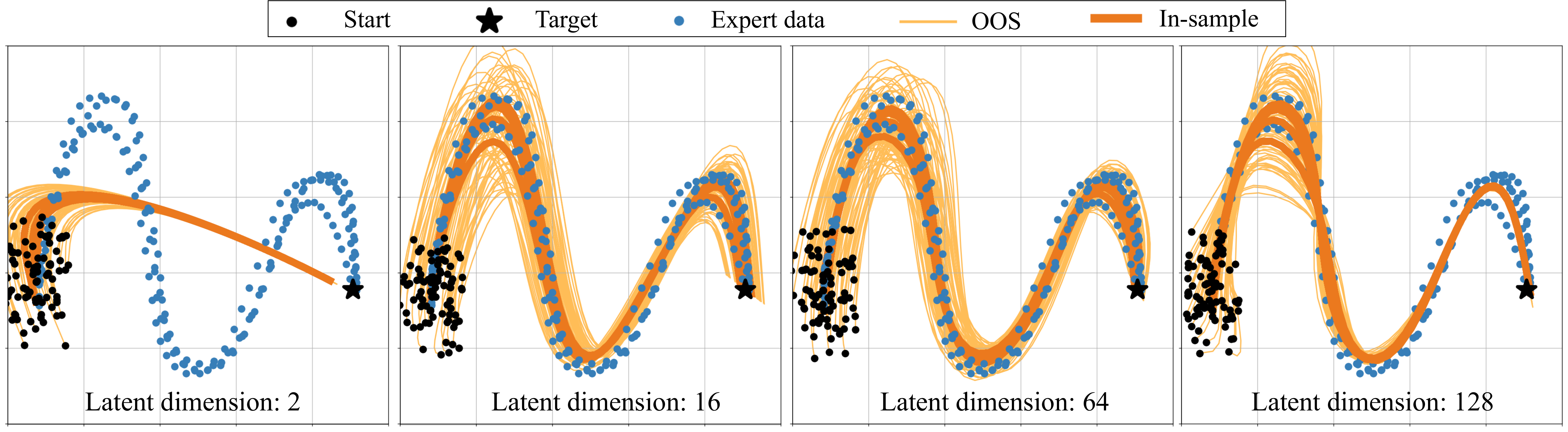}
  \caption{Training results for a sample motion in the LASA dataset are plotted for various latent space dimensions. Training in higher dimensional latent space is more effective, especially when facing complex expert behavior that involves multiple demonstrations. Note that the time horizon is fixed here for all the rollouts.}
  \label{fig:latent_space_dim}
\end{figure}

\subsection{Ablation study: bijective layers}
\label{app:ablation_bijection}
As discussed in \autoref{subsec:policy_param}, bijective layers efficiently enhance the expressive power of the policy. To empirically validate this, we compare the in-sample loss, which is an indicator of the policy’s expressiveness, in two cases: first, training a policy with bijection layers (referred to as \ours{}) and second, training a policy without bijection layers (referred to as \ours{}-NB). The REN structure remains identical across both \ours{} and \ours{}-NB. The results for the LASA and Robomimic datasets are presented in the first two rows of \autoref{tab:ablation_bijective_layers}.
As expected, \ours{} consistently and significantly outperforms \ours{}-NB. 

However, directly comparing \ours{} and \ours{}-NB is not entirely fair, as \ours{} includes more parameters and, therefore, is expected to have greater flexibility. To address this, we introduce a third policy, \ours{}-NB*, which lacks bijection layers but compensates by having a more complex REN structure, achieved by increasing the size of the implicit layer, $N_{\boldsymbol{v}}$ (see \autoref{subsec:contractive_latent_dyn}).
We show in \autoref{tab:ablation_bijective_layers} that \ours{} and \ours{}-NB* achieve comparable performance. However, it is important to note that training \ours{} is significantly faster than training \ours{}-NB*. This finding demonstrates that using bijection layers is a more efficient alternative to complicating the REN structure when seeking to increase flexibility.

\begin{table}[ht]
    \centering
    \caption{Ablation study on invertible layers to improve the policy's expressive power. \ours{} is the standard policy formulation presented in \autoref{eq:policy}. \ours{}-NB uses the same REN structure but does not have bijection layers. \ours{}-NB$^{*}$ makes \ours{}-NB more complex by increasing the complexity of the implicit layer in the REN architecture. Still \ours{}-NB$^{*}$ uses no invertible layers. The \colorbox{\hlcolor}{best results} are displayed for 50 trials on \colorbox{\insample}{in-sample} initial states.}
    \vspace{1em}
    \resizebox{\textwidth}{!}{
        \begin{tabular}{@{}ccccccccc@{}}
        \toprule
            \multicolumn{1}{c}{Expert $\;$} & \multicolumn{2}{c}{LASA-2D} & \multicolumn{2}{c}{LASA-4D} & \multicolumn{2}{c}{Robomimic-6D} & \multicolumn{2}{c}{Robomimic-14D} \\ \midrule
            Metric & MSE & soft-DTW & MSE & soft-DTW & MSE & soft-DTW & MSE & soft-DTW 
            \\ \midrule 
            
            \cellcolor{\insample}{\ours{}} 
            &  {0.02}  \textcolor{\pmcolor}{$\pm$ 0.01} 
            &  {0.65} \textcolor{\pmcolor}{$\pm$ 0.05} 
            &  \colorbox{\hlcolor}{0.03}  \textcolor{\pmcolor}{$\pm$ 0.01} 
            & \colorbox{\hlcolor}{0.72}  \textcolor{\pmcolor}{$\pm$ 0.12} 
            & 0.56 \textcolor{\pmcolor}{$\pm$ 0.22}
            & 1.05 \textcolor{\pmcolor}{$\pm$ 0.37}
            & \colorbox{\hlcolor}{1.68} \textcolor{\pmcolor}{$\pm$ 0.45}
            & \colorbox{\hlcolor}{4.10} \textcolor{\pmcolor}{$\pm$ 0.40} \\

            \cellcolor{\insample}{\ours{}-NB}
            & 0.43 \textcolor{\pmcolor}{$\pm$ 0.18} 
            & 2.41 \textcolor{\pmcolor}{$\pm$ 0.75} 
            & 1.45 \textcolor{\pmcolor}{$\pm$ 0.30} 
            & 3.38 \textcolor{\pmcolor}{$\pm$ 0.42} 
            & 1.24 \textcolor{\pmcolor}{$\pm$ 0.35}
            & 2.23 \textcolor{\pmcolor}{$\pm$ 0.74}
            & 3.49 \textcolor{\pmcolor}{$\pm$ 0.85}
            & 8.24 \textcolor{\pmcolor}{$\pm$ 0.64} \\ 
            
            \cellcolor{\insample}{\ours{}-NB$^*$}
            & \colorbox{\hlcolor}{0.01}  \textcolor{\pmcolor}{$\pm$ 0.00} 
            & \colorbox{\hlcolor}{0.50} \textcolor{\pmcolor}{$\pm$ 0.06} 
            & 0.04  \textcolor{\pmcolor}{$\pm$ 0.02} 
            & 0.79  \textcolor{\pmcolor}{$\pm$ 0.13} 
            & \colorbox{\hlcolor}{0.45} \textcolor{\pmcolor}{$\pm$ 0.15}
            & \colorbox{\hlcolor}{0.88} \textcolor{\pmcolor}{$\pm$ 0.26}
            & 1.81 \textcolor{\pmcolor}{$\pm$ 0.24}
            & 4.87 \textcolor{\pmcolor}{$\pm$ 0.32} \\
            
            \bottomrule
        \end{tabular}
    }
    \label{tab:ablation_bijective_layers}
\end{table}

\subsection{Ablation study: replacing REN with a non-contractive model}
\label{app:ablation_contraction}
In this section, we analyze the impact of the contractivity of REN on the resulting policy. To this end, we replace the REN in \autoref{eq:policy} with a recurrent neural network (RNN), a subclass of REN that is not contractive. Although the RNN architecture captures temporal dependencies, \oos{} trajectories are not guaranteed to converge toward expert data, as the resulting policy is no longer contractive.

We illustrate this behavior in \autoref{fig:ablation_contraction} for both the LASA and the Robomimic datasets. Furthermore, \autoref{fig:trajectory_distance_over_time} plots the average distance between trajectories for \ours{} and the non-contractive baseline over time. Both figures highlight that some \oos{} trajectories diverge when using the RNN, while all trajectories consistently converge to the target when using the contractive REN. 
This analysis reinforces the intuition behind employing contractive RENs as the foundation of the policy. 

\begin{figure}[ht]
    \centering
    \includegraphics[width=1\linewidth]{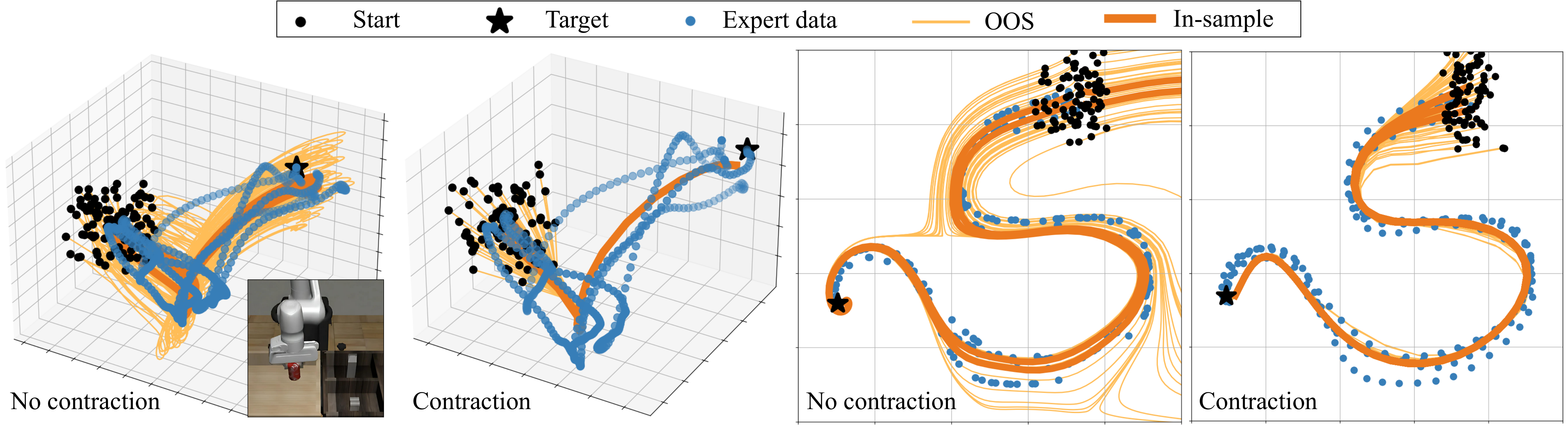}
    \caption{The effect of replacing the contractive REN with a non-contractive model, specifically a recurrent neural network, is investigated across two candidate tasks. Some of the generated trajectories diverge in the \emph{no contraction} case, which lacks guarantees on out-of-sample recovery.}
    \label{fig:ablation_contraction}
\end{figure}

\begin{figure}[ht]
    \centering
    \includegraphics[width=0.8\linewidth]{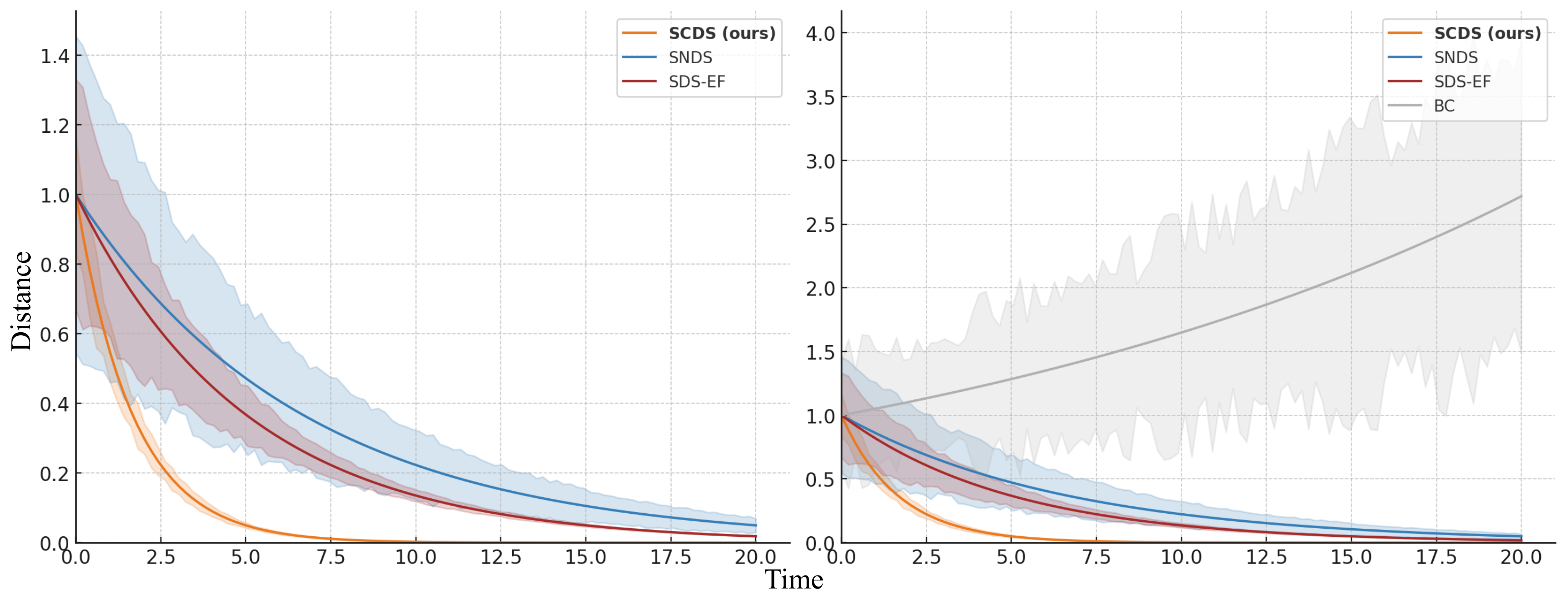}
    \caption{Distance between induced trajectories for the policies using a contractive REN (left) and a non-contractive RNN (right), with a uniformly distributed set of initial states around the true initial conditions from the LASA dataset. Since the policy in the right figure is not contractive or stable, the trajectories begin to diverge, causing the distance between neighboring trajectories to increase.}
    \label{fig:trajectory_distance_over_time}
\end{figure}

\subsection{Computation time} 
\label{app:computation_time}
We compare the training time of \ours{} with the employed baselines in \autoref{tab:computation_time_baselines}, using the same number of epochs and data points. The hyperparameters employed in this comparison are optimized for the best accuracy for each baseline, as detailed in \autoref{tab:baselines_implementation} and \autoref{tab:scds_hyperparams}. Despite the longer training times, our approach ensures contraction, offering a stronger stability guarantee compared to the weaker stability guarantees provided by the stable baseline methods, or no guarantees at all in the case of BC.

It is important to note that, since our method relies solely on state data rather than state time derivatives, the computational cost can be significantly higher in lower-dimensional settings. However, this disparity decreases in higher dimensions, particularly when utilizing parallel computation for the generated trajectories. Additionally, switching to a discrete REN architecture can notably reduce computation time, though at the expense of increased memory consumption.

\begin{table}[ht]
    \centering
    \caption{Evaluating the training time on the LASA and the Robomimic datasets in seconds. \ours{} using either discrete or continuous REN architecture appears to be less computationally efficient in lower dimensions, but will shorten the gap in efficiency in larger datasets. Note that even though the same number of epochs are used here, the continuous \ours{} method typically requires less than half of the number of epochs to achieve the same performance. Although \ours{} has longer training times, it offers the stronger contractivity guarantee compared to the stability guarantees of the stable baselines or the lack of any guarantees in the case of BC.}
    \vspace{1em}
    \resizebox{0.8\textwidth}{!}{
        \begin{tabular}{@{}ccccccccc@{}}
        \toprule
            \multicolumn{1}{c}{Expert $\;$} & \multicolumn{1}{c}{LASA-2D} & \multicolumn{1}{c}{LASA-4D} & \multicolumn{1}{c}{Robomimic-6D} & \multicolumn{1}{c}{Robomimic-14D} 
            \\ \midrule 
            SNDS & 156.8 \textcolor{\pmcolor}{$\pm$ 14.1}  & 372.5 \textcolor{\pmcolor}{$\pm$ 22.8} & 435.2 \textcolor{\pmcolor}{$\pm$ 17.9} & 882.2 \textcolor{\pmcolor}{$\pm$ 45.7} \\ 
                                        
            BC   & 57.2 \textcolor{\pmcolor}{$\pm$ 10.1}   & 62.9 \textcolor{\pmcolor}{$\pm$ 12.5}  & 91.5 \textcolor{\pmcolor}{$\pm$ 17.4}  & 103.0 \textcolor{\pmcolor}{$\pm$ 36.5} \\ 
                                        
            SDS-EF & 417.1 \textcolor{\pmcolor}{$\pm$ 18.7} & 658.8 \textcolor{\pmcolor}{$\pm$ 27.2} & 825.7 \textcolor{\pmcolor}{$\pm$ 59.9} & 1150.2 \textcolor{\pmcolor}{$\pm$ 94.5} \\ 

            \textbf{\ours{}-Continuous} & 1794.7 \textcolor{\pmcolor}{$\pm$ 92.8} & 1767.2 \textcolor{\pmcolor}{$\pm$ 94.6} & 2190.2 \textcolor{\pmcolor}{$\pm$ 43.2} & 2711.1 \textcolor{\pmcolor}{$\pm$ 173.5} \\

            \textbf{\ours{}-Discrete} & 381.9 \textcolor{\pmcolor}{$\pm$ 32.6} & 509.0 \textcolor{\pmcolor}{$\pm$ 18.2} & 610.5 \textcolor{\pmcolor}{$\pm$ 36.3} & 857.4 \textcolor{\pmcolor}{$\pm$ 64.6}
            \\

            \bottomrule
        \end{tabular}
    }
    \label{tab:computation_time_baselines}
\end{table}

\subsection{Longer horizon motions}
\label{app:snake_motion_baseline}
In this section, we use the Snake expert demonstrations employed for experiments in LPV-DS~\citep{figueroa18a} and SNDS~\citep{abyaneh2024globally}. The Snake motion provides a stronger challenge than the typical tasks in the LASA dataset. \autoref{fig:snake_motion} compares SCDS results against the best result mentioned in SNDS. Numerically, the average DTW for SNDS is reported as roughly $0.158$. However, SCDS results in a better imitation accuracy, with an average DTW error of $0.074$ with a low contraction rate, and 0.097 with high contraction rates as depicted in \autoref{fig:snake_motion}. 

\begin{figure}[ht]
    \centering
    \includegraphics[width=\linewidth]{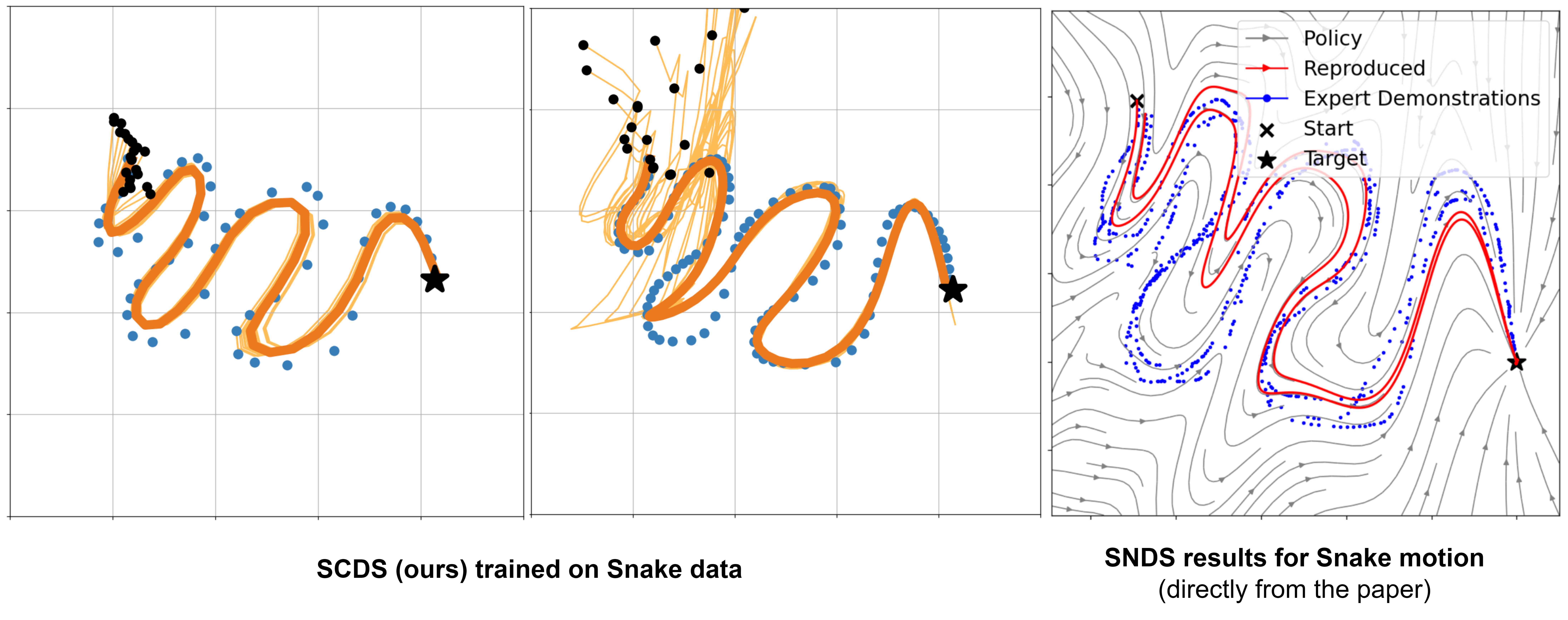}
    \caption{SCDS results for the Snake dataset demonstrate its ability to handle a more complex motion compared to the tasks in the LASA dataset. \ours{} is applied starting from closer (left plot) and further (middle plot) initial conditions and is compared against SNDS (right plot). The complexity of the Snake task arises from the Snake dataset's longer time horizon and the need for rapid adaptation in DS learning.}
    \label{fig:snake_motion}
\end{figure}

\clearpage
\section{Implementation details}
\label{app:implementation}
\begin{figure}[ht]
  \centering
  \includegraphics[width=\linewidth]{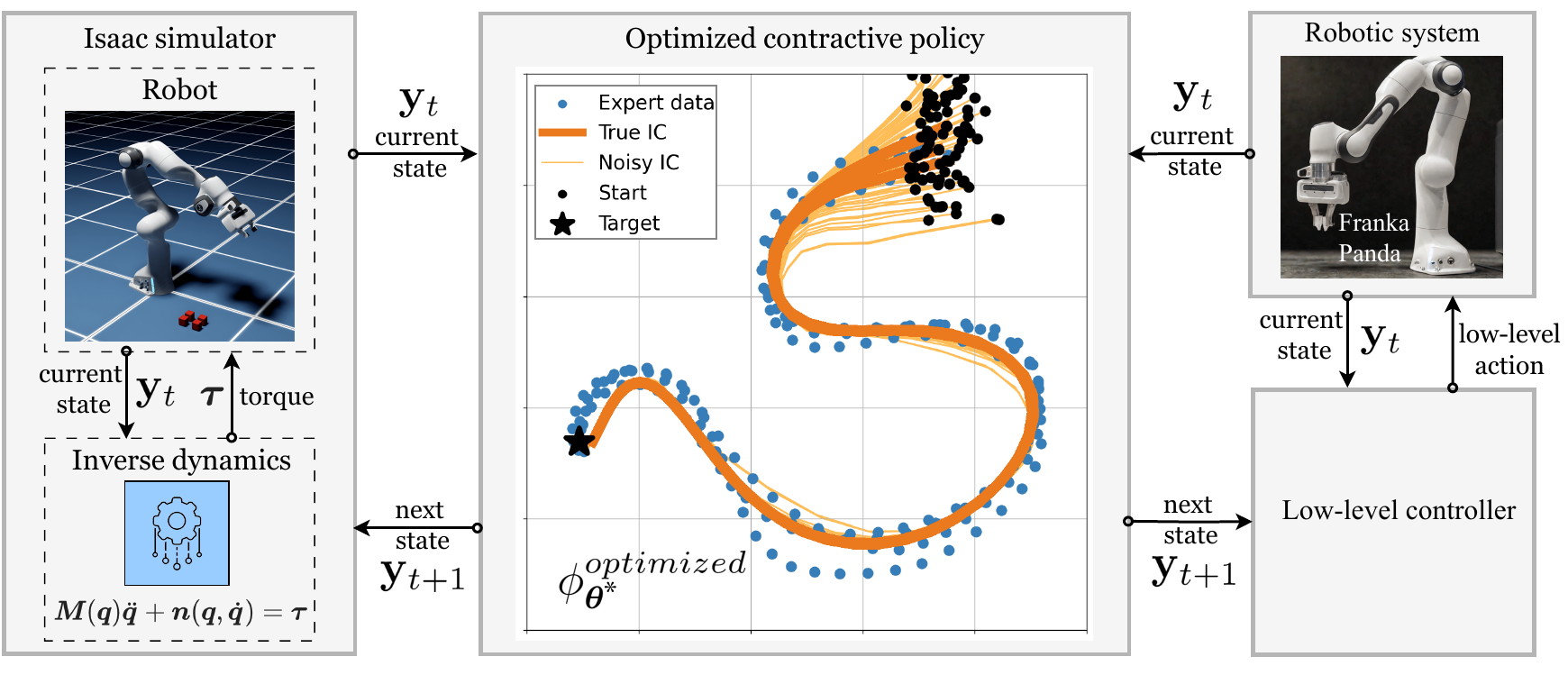}
  \caption{Simulation and real-world deployment pipelines. The trained policy can be deployed to both real and simulated robots with minimal effort. Global stability and highly contractive behavior of induced trajectories ensure safety and precision during these zero-shot transfers.}
  \label{fig:sim_real_pipeline}
\end{figure}
This section is focused on key components of \ours{}'s efficient implementation, hyperparameters, and further details about our selected baselines and their architecture. Note that the entirety of \ours{}'s codebase, including the following modules, are efficiently implemented in PyTorch~\citep{torch}. 

\subsection{Key components of \ours{}}
\label{app:implementation_scds}
As discussed in \autoref{sec:formulation}, our approach benefits from three major design choices to enable (1) contractivity by design and with an adjustable contraction rate, (2) additional expressivity, and (3) efficiently imitating the expert's behavior. 

\begin{figure}[ht]
  \centering
  \includegraphics[width=\textwidth]{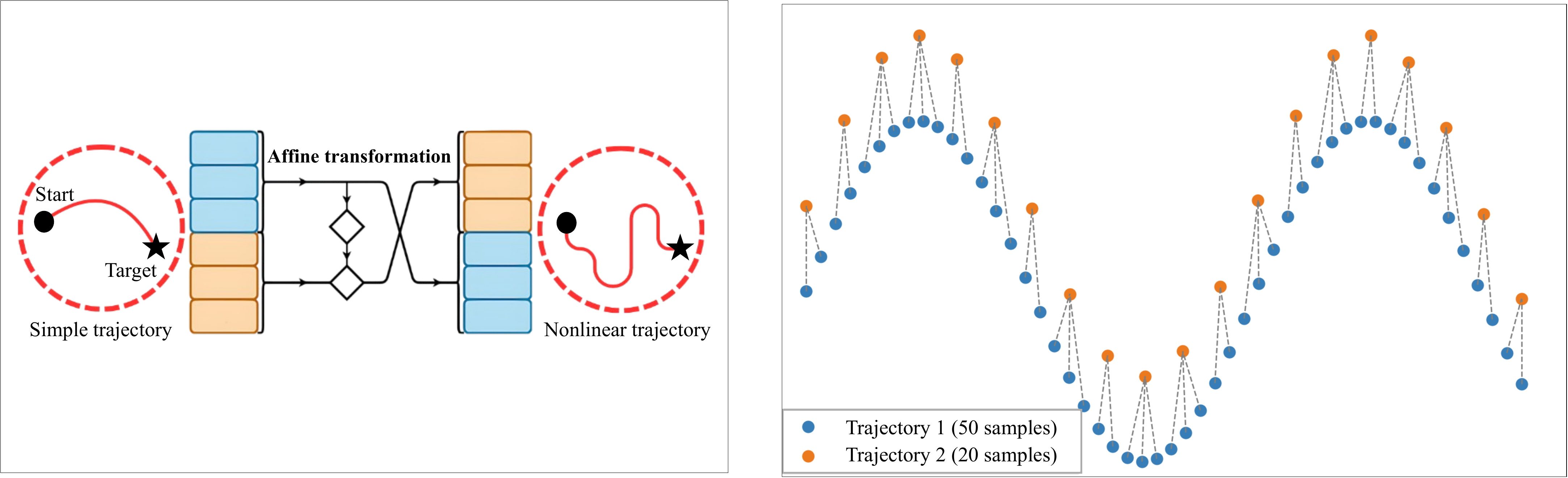}
    \caption{Left: Invertible coupling layer structure implemented as a part of RealNVP. The affine transformation provides increased nonlinearity while maintaining bijection, and as a result, contraction properties. Right: DTW still effectively compares trajectories with different lengths. This is particularly useful when learning long-horizon trajectories, or dealing with higher sampling rates.}
    \label{fig:coupling_layers}
    \label{fig:dtw_illustration}
\end{figure}

\paragraph{(1) Continuous and discrete RENs} There are two implementations of the recurrent equilibrium network architecture: the~\citet{martinelli2023unconstrained}'s continuous time~\footnote{\href{https://github.com/DecodEPFL/NodeREN}{\textcolor{\linkcolor}{github.com/DecodEPFL/NodeREN}}} approach, and the original Julia implementation of \citet{revay2023recurrent}'s discrete-time method~\footnote{\href{https://github.com/acfr/RobustNeuralNetworks.jl}{\textcolor{\linkcolor}{github.com/acfr/RobustNeuralNetworks.jl}}} formulation. Our repository supports both discrete and continuous REN architectures, though most of our experiments focus on continuous RENs. The discrete-time approach can be computationally more efficient, particularly over long horizons. However, it is less memory efficient, as the gradient tree expands with the horizon length. 

In contrast, continuous RENs are compatible with Neural ODEs and still induce well-posed and contractive dynamical systems. Additionally, their compatibility with various integration schemes provides a degree of freedom in balancing accuracy and computational cost, as well as the ability to evaluate at arbitrary time points without the need for uniform time sampling~\citep{martinelli2023unconstrained}. Hence, the policy learned by using continuous RENs outputs smoother generated trajectories with adjustable granularity.

\paragraph{(2) Invertible coupling layers} The RealNVP implementation of coupling layers~\citep{dinh2017density} is a key mechanism for constructing invertible transformations while maintaining computational efficiency. These layers split the input data into two parts, where one part remains unchanged,

and the other is transformed using a function parameterized by the unchanged portion. Specifically, the transformed part is updated via an affine transformation, where the \emph{scale} and \emph{shift} parameters depend on the static part. 
This ensures that the transformation remains invertible, as the unchanged part and the inverse of the affine transformation can be easily recovered. The structure of coupling layers allows for efficient computation of both the forward pass and the log-determinant of the Jacobian, which is crucial for likelihood estimation in normalizing flows.

\paragraph{(3) Soft and differentiable dynamic time warping} Dynamic Time Warping (DTW)~\citep{keogh2005dtw} and its faster variant, soft-DTW~\citep{cuturi2017soft_dtw}, are often used to measure similarity between curves by aligning sequences that may vary in speed, timing, or sampling rate. However, DTW can be computationally expensive, which is where soft-DTW becomes useful. Soft-DTW offers a more efficient approximation with adjustable reduced complexity while maintaining relatively high accuracy. 

DTW methods outperform MSE in scenarios where temporal variations or differences in sampling rate are key to understanding the similarity between expert behavior and generated rollouts.

\subsection{Baselines}
\label{app:implementation_baselines}
This section contains information about the baselines, especially their codebase and parameters, summarized in \autoref{tab:baselines_implementation}. Note that we were unable to compare our method against more recent contractive methods, especially \cite{neural_contractive_beik2024neural}, as there is no official implementation released with the paper. 

\paragraph{SDS-EF} encodes expert demonstrations as dynamical system trajectories on a Riemannian manifold, then transforms them into simpler straight-line paths in a latent Euclidean space using diffeomorphisms~\citep{rana2020euclideanizing}. The \textit{gradient flow} dynamical system is designed so that trajectories in the latent space are linear and converge towards a stable equilibrium. By ensuring that the diffeomorphic transformation used to map the original Riemannian manifold to this Euclidean space is smooth and invertible, the stability of the trajectories is maintained when mapping back to the original space with theoretical guarantees. Hence, SDS-EF can learn and reproduce complex expert data while providing asymptotic stability.

\paragraph{SNDS} guarantees global asymptotic stability by jointly training neural networks for both the policy and a trainable Lyapunov candidate~\citep{abyaneh2024globally}. Therefore, it ensures trajectories converge predictably even under perturbations or out-of-sample initial conditions. SNDS is tested on high-dimensional robotic tasks, and its differentiable loss aligns policy rollouts with expert demonstrations. The method shows strong performance in accuracy and stability across complex environments, such as robotic arm control tasks, but suffers from non-smoothness with rapidly changing expert behavior.

\paragraph{BC} is the standard approach to learning from expert demonstrations without additional safety or stability guarantees. BC-RNN~\citep{mandlekar2020learning_bcrnn} is a variation of the original behavioral cloning, which uses recurrent structures to capture emerging patterns over time. Nonetheless, we still employ a vanilla BC implementation, as the experiments are not focused on long-horizon tasks with substantial recurring patterns. 

\begin{table}[ht]
    \centering
    \caption{Baseline implementation, architecture, and hyperparameters}
    \vspace{1em}
    \resizebox{\textwidth}{!}{
        \begin{tabular}{@{}l l l@{}}
            \toprule
            \multicolumn{3}{c}{\textbf{Baseline implementations}} \\
            \midrule
            \textbf{Method} & \textbf{Codebase} & \textbf{Stability certificate} \\
            \midrule
            SNDS~\citep{abyaneh2024globally} & \href{https://github.com/aminabyaneh/stable-imitation-policy}{\textcolor{\linkcolor}{github.com/aminabyaneh/stable-imitation-policy}} & Lyapunov, \emph{exponential, asymptotic} \\
            SDS-EF~\citep{rana2020euclideanizing} & \href{https://github.com/mrana6/euclideanizing_flows}{\textcolor{\linkcolor}{github.com/mrana6/euclideanizing\_flows}} & Lyapunov, \emph{asymptotic}  \\
            BC~\citep{pomerleau1988behavioralcloning, mandlekar2020learning_bcrnn} & \href{https://github.com/montaserFath/BCO}{\textcolor{\linkcolor}{github.com/montaserFath/BCO}} & None\\
            \midrule
            \multicolumn{3}{c}{\textbf{Baseline parameters}} \\
            \midrule
            SNDS & SDS-EF & BC \\
            \midrule
            \textit{policy} network: [2, 256, 256, 256, 2] & bijection blocks: 10 & network: [2, 256, 256, 256, 256, 2] \\
            \textit{ICNN} network: [2, 64, 64, 1] & hidden layers: 100 & type: \textit{ResNet} or \textit{FeedForward}\\
            activation: \textit{nn.SoftPlus} & activation: {nn.ELU} & activation: \textit{nn.ReLU} \\
            $\alpha = 0.01$ & coupling layer: 'rfnn' & regularization: $1e-5$\\
            $\epsilon = 0.01$ & $\epsilon = 1e-5$ &  \\
            \bottomrule
        \end{tabular}
    }
    \label{tab:baselines_implementation}

\end{table}

\begin{table}[ht]
    \centering
    \caption{Hyperparameters of \ours{} and their optimal values or ranges are shown for every key model parameter.}
    \vspace{1em}
    \resizebox{\textwidth}{!}{
        \begin{tabular}{@{}l l l@{}}
            \toprule
            \textbf{Param} & \textbf{Description} & \textbf{Optimal value(s)} \\
            \midrule
            $\contractionRate$ & Lower bound on the contraction rate of the policy & $[1.0, 18.6]$, \emph{learnable} \\
            $K$ & Number of coupling layer to increase the output nonlinearity & $4 \;\; \text{to} \;\; 10$\\
            $N_\latentState$ & Dimension of the latent state space & $32 \;\; \text{to} \;\; 64$ \\
            $\horizon$ & Forward simulation horizon for the generated trajectory & $20 \;\; \text{to} \;\; 50$ \\
            $\numBijections$ & Number of invertible layers in the output map (also used for hidden blocks) & $4, \;\; 8$ \\
            $\beta$ & Complexity-accuracy trade-off parameter for soft-DTW ($\gamma$ in some references) & $0.1$ \\
            \bottomrule
        \end{tabular}
    }
    \label{tab:scds_hyperparams}
\end{table}

\subsection{Computational resources}
\label{app:computation}
For our experiments, we relied on a computational server running on the Linux operating system, specifically Ubuntu 24.04.1 LTS. The server was equipped with a high-performance NVIDIA RTX 4090 GPU with CUDA drivers version 12.6, which are more efficient when it comes to optimizing a loss on several rollouts generated in parallel. Moreover, an array of CPUs, Intel Core i9-9900K, featuring 8 cores and 16 threads each, and 64 GB of DDR4 RAM (2x32 GB), were used in conjunction with the single GPU pipeline.

To conduct simulations with Isaac Lab for robotics experiments, we required 4-6 GB of VRAM and 4-8 GB of system RAM, depending on the simulation task. We ran the simulations remotely on the server cluster. To interact with these simulations, we used the Omniverse Streaming Client, allowing us to connect to the server and receive real-time broadcasts of the experimental scenes. This setup ensured that we could handle the demanding requirements of our simulations while maintaining low latency and high-quality visualization during the remote experiments. Note that Isaac Lab enables parallel training and testing across randomized environments, as depicted in \autoref{fig:outofsample_parallel_test}. 

\subsection{Simulation in Isaac Lab}
Isaac Lab~\footnote{\href{https://isaac-sim.github.io/IsaacLab/index.html}{\textcolor{\linkcolor}{https://isaac-sim.github.io/IsaacLab/index.html}}} is a cutting-edge framework built on top of NVIDIA's Isaac Sim~\footnote{\href{https://developer.nvidia.com/isaac/sim}{\textcolor{\linkcolor}{https://developer.nvidia.com/isaac/sim}}}, designed for creating and simulating complex robotics systems. It leverages the high-performance computational power of NVIDIA GPUs to efficiently simulate and train robots in a realistic virtual environment. By integrating Isaac Sim's physics-based simulation with Isaac Lab's scalable infrastructure, we can run multiple parallel environments. 

As shown in \autoref{fig:outofsample_parallel_test}, this level of parallelism greatly reduces both training and testing time, the latter being particularly critical for our application. \ours{} is integrated into robotic systems using the workflow described in \autoref{fig:sim_real_pipeline}. The low-level controller translates contractive policies in the joint and task space of the robot into low-level joint torque or velocity commands, depending on the specific application. It is important to note that no low-level controller is necessary for joint position control, as the policy directly produces the next state. For task space position and orientation, only a differential kinematics approach is needed to convert the end-effector pose into joint space.

\begin{figure}[ht]
    \centering
    \includegraphics[width=1\linewidth]{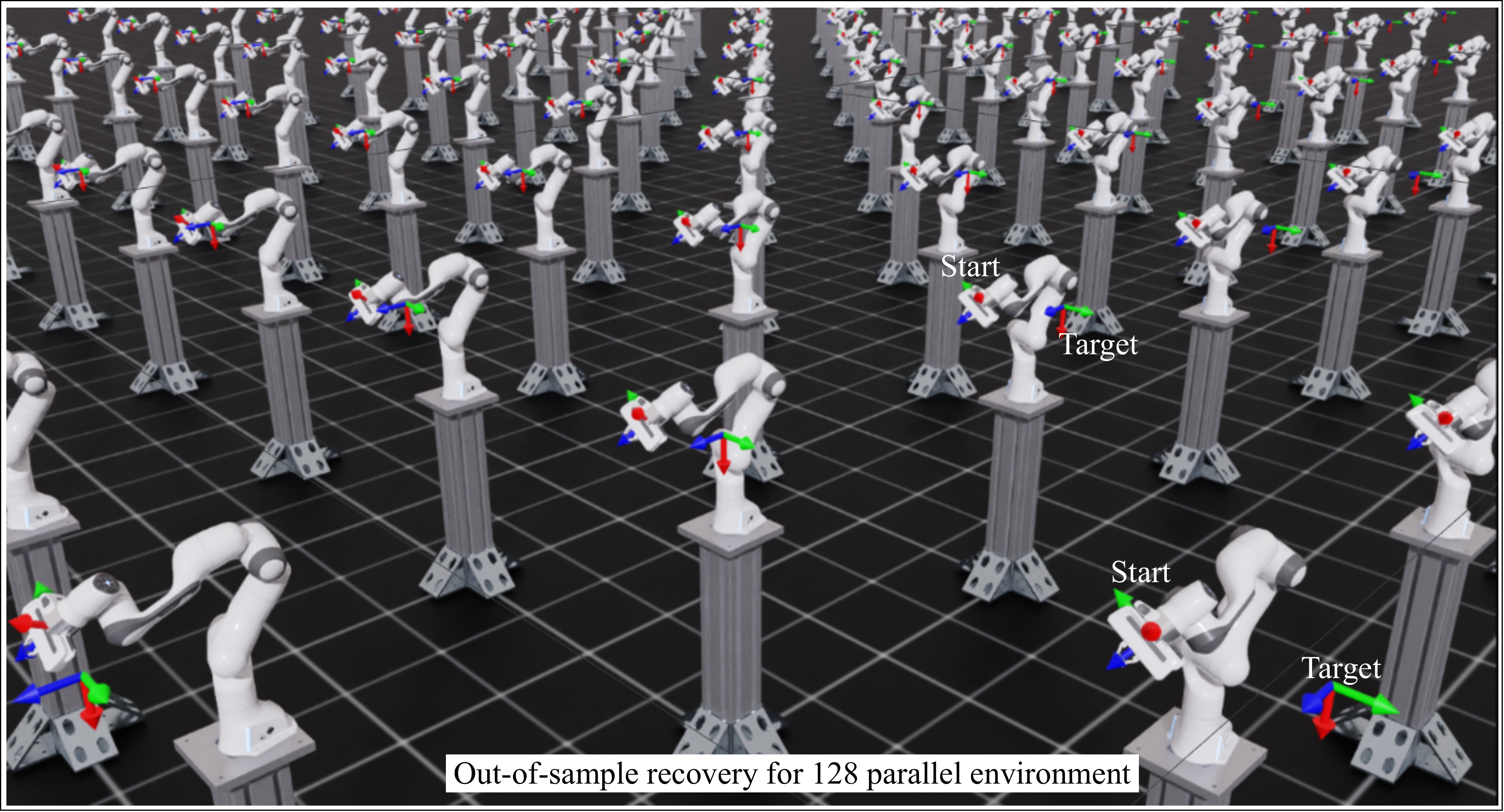}
    \caption{Efficient parallel testing of out-of-sample behavior in the Isaac Lab robotics simulator. Each robot is given a random initial condition drawn from a uniform distribution around true initial states. The parallel testing of 128 sampled initial conditions yields the results in \autoref{tab:imitation_accuracy_handwriting_robomimic}.}
    \label{fig:outofsample_parallel_test}
\end{figure}

\subsection{Low-level controller}
\label{app:low_level_controller}
Throughout this paper, we have assumed that there exists a low-level controller
that converts the state planned by the DS $\policy$ into motor commands (e.g.,
force or torque). This is a standard assumption in learning high-level actions in imitation learning~\citep{khansari2011learning, torabi2018behavioral, chi2023diffusion_policy}, reinforcement learning~\citep{bahl2020neural_ds_policy}, and optimal control~\citep{boroujeni2021unified}. In this section, we discuss designing this low-level controller.

In our policy formulation, the high-level part of the policy takes the current state of the robot to the next desired state. This state representation can be defined either in the joint space as in our experiments with the Franka Panda arm in \autoref{fig:robot_sim_rollouts}, or in the robot's task space as in the experiments with Clearpath Jackal in \autoref{fig:robot_sim_rollouts}.

Once the high-level policy determines the next desired state, this information is relayed to a low-level controller responsible for generating low-level torque or velocity commands. Depending on the specific requirements of the task and the robot's configuration, we utilize one of three types of controllers: a joint position controller, an inverse dynamics controller, or an inverse kinematics controller. The joint position controller directly commands the robot's joints to move to the specified positions. The inverse dynamics controller calculates the necessary torques or forces at each joint to achieve the desired motion. The inverse kinematics controller computes the joint configurations needed to attain a specific end-effector position and orientation. This hierarchical control strategy enables precise and efficient motion control by leveraging the strengths of both high-level planning and low-level execution mechanisms.

\clearpage
\section{Datasets}
\label{app:datasets}
We employ two well-known datasets to evaluate our method on real-world expert trajectories. The LASA dataset~\cite{khansari2011learning} has long been the primary benchmark for learning DS-based policies, so we begin our experiments with this standard dataset. However, we do not stop after obtaining promising results, as \ours{} demonstrates capabilities beyond the 2-dimensional and 4-dimensional expert demonstrations in the LASA dataset. To further challenge our method, we utilize the Robomimic dataset~\cite{robomimic2021} for the first time in the literature on learning policies modeled by DSs, allowing us to evaluate \ours{} in more complex 6-dimensional and 14-dimensional spaces.

\subsection{LASA dataset}
\label{app:lasa_dataset}
The LASA Handwriting Dataset is a comprehensive collection of 2D handwriting motions recorded from a human expert drawing on a tablet. It is particularly useful for visualization, and learning semi-complex dynamics. The dataset contains 30 distinct handwriting motions. Each motion is represented by 7 demonstrations, with the user drawing the desired pattern starting from different initial positions and ending at the same final point, typically with some intersection between the demonstrations. The handwriting motions are defined in a 2D Cartesian space, with the final target set at coordinates (0, 0) for all patterns, without loss of generality. The dataset provides consistent 1000 data points per demonstration, interpolated to handle variations in timing between different demonstrations. The data structure provides insights into both the spatial and temporal dynamics of handwriting patterns, including the position, velocity, and acceleration of each motion. 

Despite having 1000 samples per trajectory, the expert behavior may be replicated with very high accuracy using a limited horizon $\horizon$ of 50 to 100. Since \ours{} employs the soft-DTW loss, which does not require the trajectories to have the same length, we can efficiently learn every motion in the LASA dataset while keeping the computational complexity to a minimum. 

\begin{figure}[ht]
    \centering
    \includegraphics[width=1\linewidth]{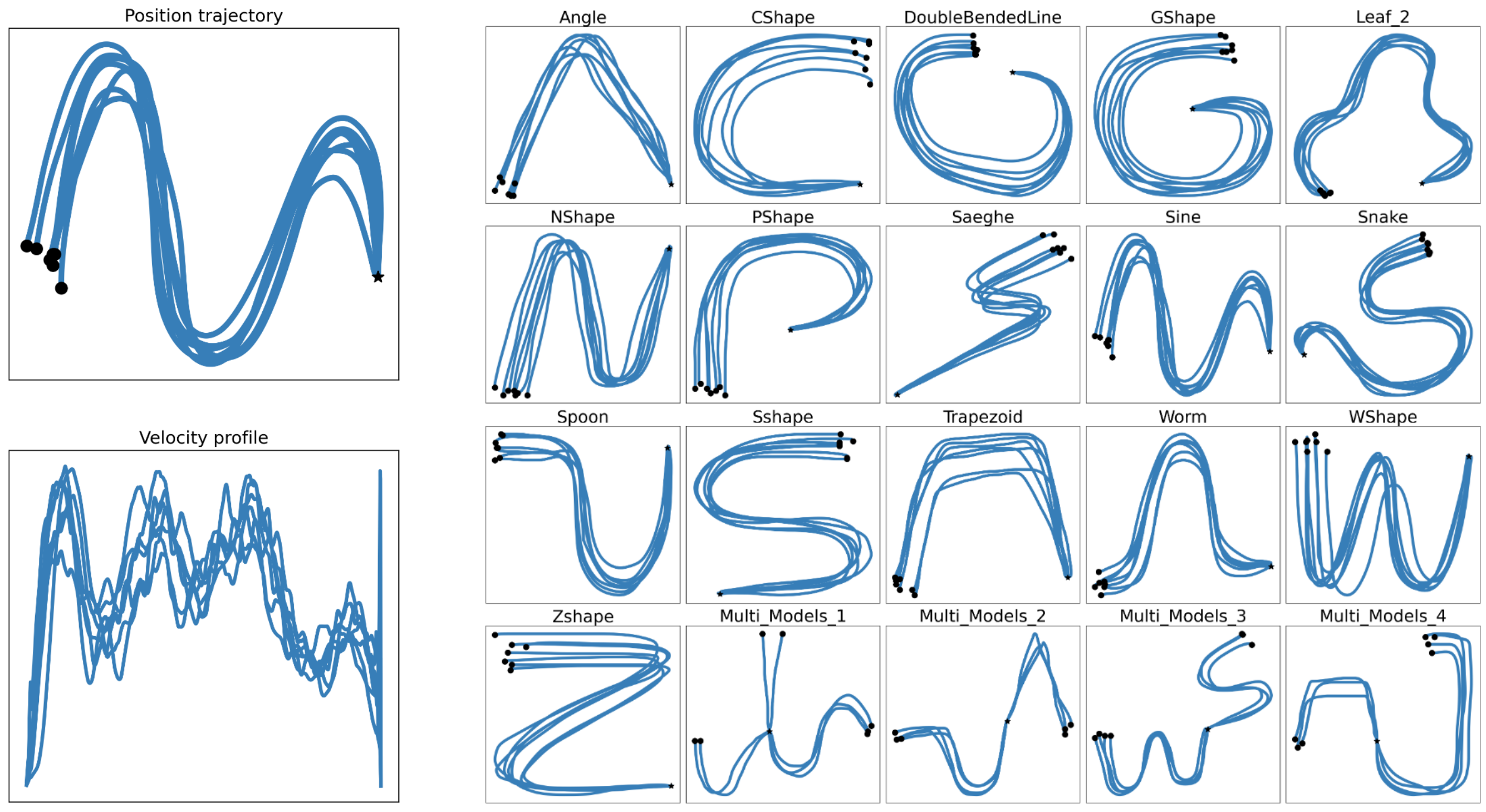}
    \caption{Expert demonstrations in the LASA dataset.}
    \label{fig:lasa_dataset_motions}
\end{figure}

\subsection{Robomimic dataset}
\label{app:robomimic_dataset}
The Robomimic dataset is a complete benchmark for robot learning from offline human demonstrations, designed to cover a wide range of robot manipulation tasks. The dataset includes data gathered from experts performing a variety of tasks through teleoperation, and also artificial data generated by policies which are optimized using well-established reinforcement learning techniques. We focus on the data generated by human experts for a set of designated tasks, namely, “Lift”, “Can”, “Square”, and “Transport”. The Robomimic dataset provides detailed measurements of the observation space. Most notably, the dataset features are the robot's joint position (7D), joint velocity (7D), end-effector position (3D), end-effector orientation (3D), etc. We combine these features to get higher dimensional augmented state spaces (14D in the joint space, 6D in task space) to better evaluate \ours{} and the selected baselines described in \aref{app:implementation_baselines}.

The expert demonstrations for these tasks are depicted in \autoref{fig:robomimic_dataset}. The Lift and Can tasks are relatively easier to perform than the more complicated Square and Transport tasks. Notably, the average number of samples per task is 48 ± 6 (Lift), 116 ± 14 (Can), 151 ± 20 (Square), and 469 ± 54 (Transport). Generally, we observe that the more complicated a task in this dataset, the more samples are required to describe it. There are between 200 and 300 demonstrations per task, but we typically restrict the learning to a dominant subset (between 80-90 percent) of these demonstrations to train \ours{} and leave the rest to the evaluation stage.   

\begin{figure}[ht]
  \centering
  \includegraphics[width=\linewidth]{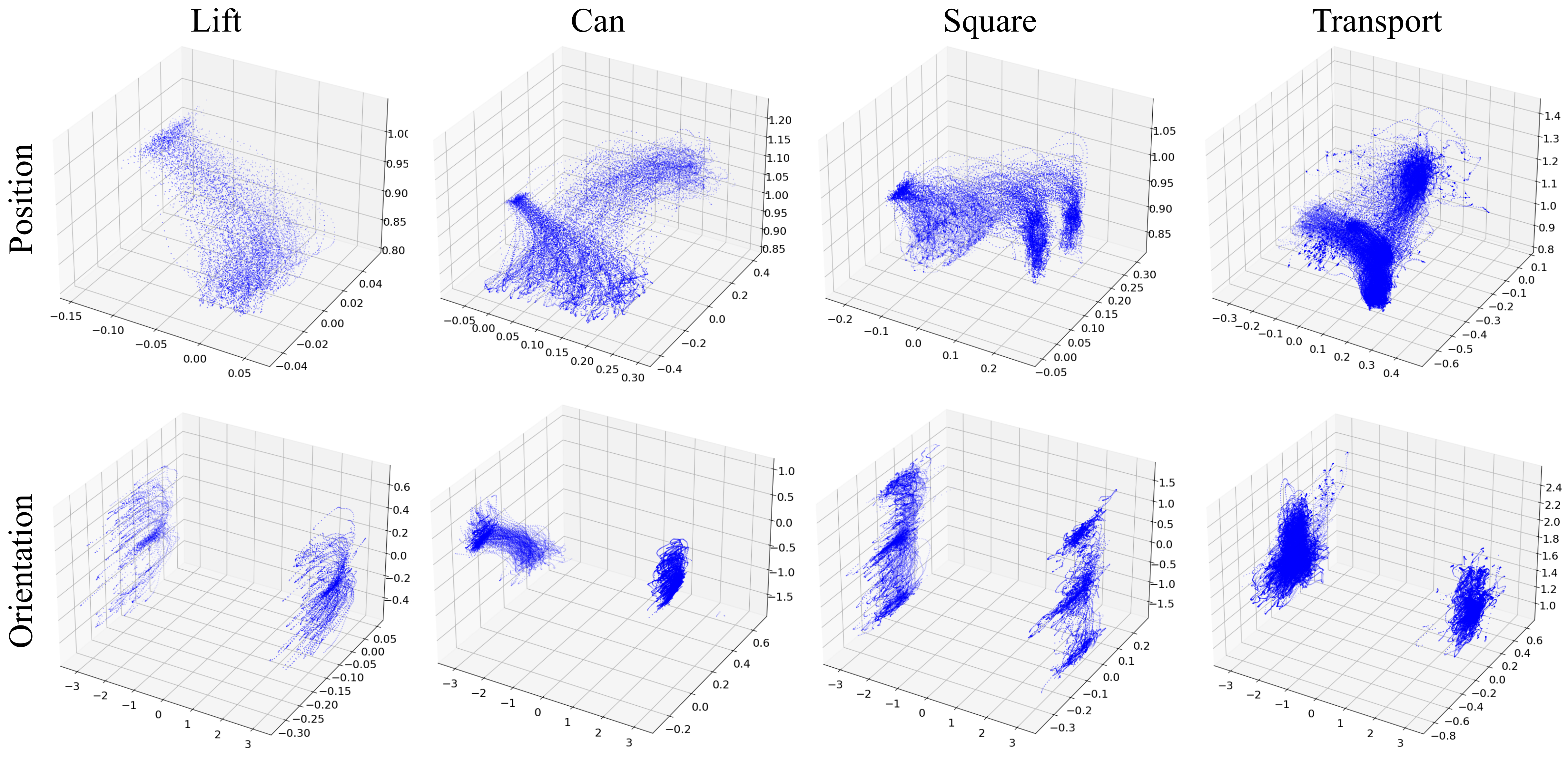}
  \caption{Position and orientation data for different tasks in the Robomimic dataset. }
  \label{fig:robomimic_dataset}
\end{figure} 

\begin{figure}[ht]
  \centering
  \includegraphics[width=\linewidth]{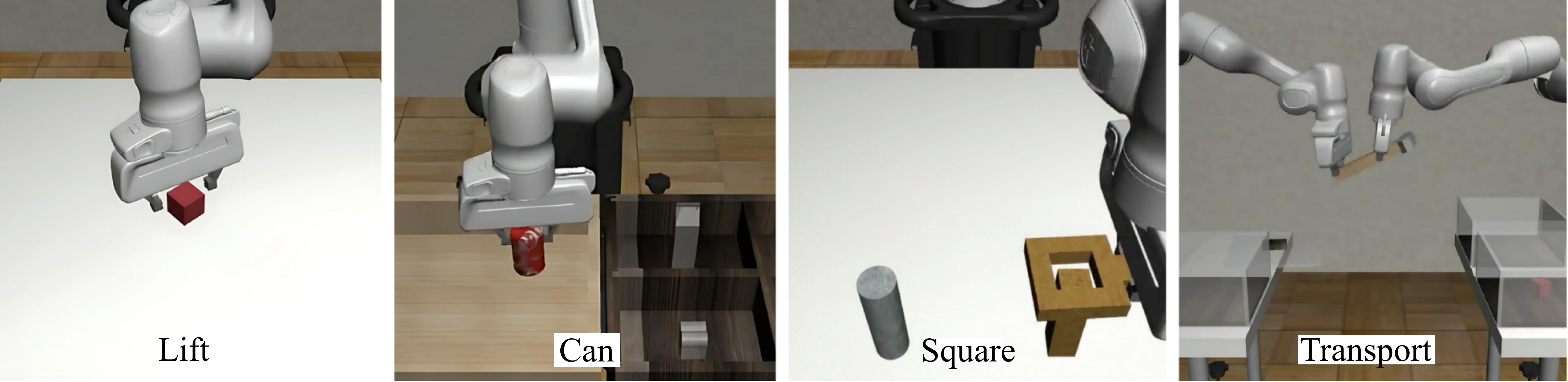}
  \caption{Data collection setup for Lift, Can, Square, and transport tasks~\citep{robomimic2021}. We employ this collected data, which is available in the Robomimic repository.}
  \label{fig:robomimic_setup}
\end{figure} 

\end{document}